\theoremstyle{plain}
\newtheorem{theorem}{Theorem}[section]
\newtheorem{proposition}[theorem]{Proposition}
\theoremstyle{definition}
\theoremstyle{remark}
\newtheorem{remark}[theorem]{Remark}
\newtheoremstyle{normaltext}
  {}
  {}
  {\normalfont}
  {}
  {\bfseries}
  {.}
  { }
  {}
\theoremstyle{normaltext}
\newtheorem*{theorem*}{Theorem}
\icmltitlerunning{ECSEL: Explainable Classification via Signomial Equation Learning}
\begin{document}
\newcommand{\sib}[1]{\textcolor{violet}{#1}}
\newcommand{\tsib}[1]{\begin{tcolorbox}\textcolor{violet}{#1}\end{tcolorbox}}

\newcounter{sibcmntcounter}
\setcounter{sibcmntcounter}{1}
\long\def\symbolfootnote[#1]#2{\begingroup
  \def\thefootnote{\fnsymbol{footnote}}\footnote[#1]{#2}\endgroup}
\newcommand{\sibcmnt}[1]{{\small\textbf{
      \textcolor{violet}{(C.\arabic{sibcmntcounter})}}
    \let\thefootnote\relax\footnotetext{\textcolor{violet}
        {\scriptsize(C.\arabic{sibcmntcounter})~#1}}}
  \addtocounter{sibcmntcounter}{1}}

\newcommand{\red}[1]{\textcolor{red}{#1}}
\newcommand{\blue}[1]{\textcolor{blue}{#1}}
\newcommand{\magenta}[1]{\textcolor{magenta}{#1}}
\newcommand{\bm}[1]{\mbox{\boldmath{$#1$}}}
\newcommand{\rb}[1]{\raisebox{-1.5ex}[0cm][0cm]{#1}}
\newcommand{\HRule}{\noindent\rule{\linewidth}{0.5mm}}
\newcommand{\dsum}{\displaystyle\sum}
\newcommand{\veps}{\varepsilon}

\newcommand{\CA}{\mathcal{A}}
\newcommand{\CB}{\mathcal{B}}
\newcommand{\CC}{\mathcal{C}}
\newcommand{\CD}{\mathcal{D}}
\newcommand{\CG}{\mathcal{G}}
\newcommand{\CI}{\mathcal{I}}
\newcommand{\CJ}{\mathcal{J}}
\newcommand{\CK}{\mathcal{K}}
\newcommand{\CL}{\mathcal{L}}
\newcommand{\CN}{\mathcal{N}}
\newcommand{\CP}{\mathcal{P}}
\newcommand{\CS}{\mathcal{S}}
\newcommand{\CT}{\mathcal{T}}
\newcommand{\CU}{\mathcal{U}}
\newcommand{\CX}{\mathcal{X}}
\newcommand{\YY}{\mathbb{Y}}
\newcommand{\ZZ}{\mathbb{Z}}
\newcommand{\RR}{\mathbb{R}}
\newcommand{\NN}{\mathbb{N}}
\newcommand{\II}{\mathbb{1}}

\renewcommand{\vec}[1]{{\boldsymbol{\mathbf{#1}}}}

\newcommand{\va}{\vec{a}}
\newcommand{\vb}{\vec{b}}
\newcommand{\vc}{\vec{c}}
\newcommand{\vd}{\vec{d}}
\newcommand{\ve}{\vec{e}}
\newcommand{\vf}{\vec{f}}
\newcommand{\vg}{\vec{g}}
\newcommand{\vh}{\vec{h}}
\newcommand{\vp}{\vec{p}}
\newcommand{\vr}{\vec{r}}
\newcommand{\vt}{\vec{t}}
\newcommand{\vu}{\vec{u}}
\newcommand{\vv}{\vec{v}}
\newcommand{\vw}{\vec{w}}
\newcommand{\vx}{\vec{x}}
\newcommand{\vy}{\vec{y}}
\newcommand{\vz}{\vec{z}}
\newcommand{\zv}{\vec{0}}
\newcommand{\ov}{\vec{1}}

\newcommand{\vveps}{\vec{\veps}}
\newcommand{\veta}{\vec{\eta}}
\newcommand{\vxi}{\vec{\xi}}
\newcommand{\valpha}{\vec{\alpha}}
\newcommand{\vbeta}{\vec{\beta}}
\newcommand{\vgamma}{\vec{\gamma}}
\newcommand{\vtheta}{\vec{\theta}}
\newcommand{\vlambda}{\vec{\lambda}}
\newcommand{\vnu}{\vec{\nu}}
\newcommand{\vpi}{\vec{\pi}}
\newcommand{\vtau}{\vec{\tau}}
\newcommand{\vSigma}{\vec{\Sigma}}
\newcommand{\vOmega}{\vec{\Omega}}
\newcommand{\vTheta}{\vec{\Theta}}
\newcommand{\vmu}{\vec{\mu}}

\newcommand{\mA}{\vec{A}}
\newcommand{\mB}{\vec{B}}
\newcommand{\mC}{\vec{C}}
\newcommand{\mD}{\vec{D}}
\newcommand{\mE}{\vec{E}}
\newcommand{\mF}{\vec{F}}
\newcommand{\mG}{\vec{G}}
\newcommand{\mH}{\vec{H}}
\newcommand{\mI}{\vec{I}}
\newcommand{\mL}{\vec{L}}
\newcommand{\mM}{\vec{M}}
\newcommand{\mP}{\vec{P}}
\newcommand{\mQ}{\vec{Q}}
\newcommand{\mR}{\vec{R}}
\newcommand{\mS}{\vec{S}}
\newcommand{\mU}{\vec{U}}
\newcommand{\mV}{\vec{V}}
\newcommand{\mX}{\vec{X}}

\newcommand{\tr}{^{\top}}
\newcommand{\ntr}{^{-\intercal}}
\newcommand{\inv}{^{-1}}

\newcommand{\gr}{\mbox{graph}}
\newcommand{\ra}{\rightarrow}
\newcommand{\la}{\leftarrow}
\newcommand{\Ra}{\Rightarrow}
\newcommand{\rra}{\rightrightarrows}
\newcommand{\ptr}{\marginpar{$\Leftarrow$}}

\newcommand{\pfxi}{\frac{\partial f(\vx)}{\partial x_i}}
\newcommand{\pfx}{\partial f(\vx)}
\newcommand{\pf}{\partial f}
\newcommand{\pxi}{\partial x_i}
\newcommand{\px}{\partial x}

\newcommand{\nfx}{\nabla f(\vx)}
\newcommand{\eps}{\epsilon}
\newcommand{\eg}{\textit{e.g.}}
\newcommand{\ie}{\textit{i.e.}}

\newcommand{\vsp}{\vspace{4mm}}
\newcommand{\vspp}{\vspace{8mm}}
\newcommand{\vsppp}{\vspace{12mm}}

\newcommand{\hsp}{\hspace{4mm}}
\newcommand{\hspp}{\hspace{8mm}}
\newcommand{\hsppp}{\hspace{12mm}}

\newcommand{\pr}[1]{\mathbb{P}\left(#1\right)}
\newcommand{\ex}[1]{\mathbb{E}\left[#1\right]}
\newcommand{\variance}[1]{\mbox{Var}\left(#1\right)}
\newcommand{\covar}[1]{\mbox{Cov}\left(#1\right)}
\newcommand{\C}[2]{\left(\begin{array}{c} #1 \\ #2 \end{array}\right)}

\newcommand{\maximize}{\mbox{maximize\hspace{4mm} }}
\newcommand{\minimize}{\mbox{minimize\hspace{4mm} }}
\newcommand{\subto}{\mbox{subject to\hspace{4mm}}}

\newenvironment{sibitemize}{
  \renewcommand{\labelitemi}{$\diamond$}
  \begin{itemize}
    \setlength{\parskip}{0mm}}
  {\end{itemize}}

\newcommand{\propnum}[2]{\vspace{3mm}
  \noindent {\sc Proposition #1}{\it #2} \vspace{3mm}}
\newcommand{\lemnum}[2]{\vspace{3mm}
  \noindent {\sc Lemma #1}{\it #2} \vspace{3mm}}
\newcommand{\thmnum}[2]{\vspace{3mm}
  \noindent {\sc Theorem #1}{\it #2} \vspace{3mm}}
\newcommand{\corolnum}[2]{\vspace{3mm}
  \noindent {\sc Corollary #1}{\it #2} \vspace{3mm}}
\twocolumn[
  \icmltitle{ECSEL: Explainable Classification via Signomial Equation Learning}



  \icmlsetsymbol{equal}{*}

  \begin{icmlauthorlist}
    \icmlauthor{Adia Lumadjeng}{yyy,zzz}
    \icmlauthor{Ilker Birbil}{yyy}
    \icmlauthor{Erman Acar}{zzz,iii}

  \end{icmlauthorlist}

\icmlaffiliation{yyy}{Amsterdam Business School, University of Amsterdam, Amsterdam, the Netherlands}
\icmlaffiliation{zzz}{Institute for Informatics, University of Amsterdam}
\icmlaffiliation{iii}{Institute for Logic, Language and Computation, University of Amsterdam}


  \icmlcorrespondingauthor{Adia Lumadjeng}{a.c.lumadjeng@uva.nl}
  \icmlcorrespondingauthor{Erman Acar}{e.acar@uva.nl}

\icmlkeywords{Explainable AI, Symbolic AI, Symbolic Regression, Signomial Functions, Interpretable Machine Learning, Classification}

  \vskip 0.3in
]



\printAffiliationsAndNotice{}  

\begin{abstract}
We introduce ECSEL, an explainable classification method that learns formal expressions in the form of signomial equations, motivated by the observation that many symbolic regression benchmarks admit compact signomial structure. ECSEL directly constructs a structural, closed-form expression that serves as both a classifier and an explanation. On standard symbolic regression benchmarks, our method recovers a larger fraction of target equations than competing state-of-the-art approaches while requiring substantially less computation. Leveraging this efficiency, ECSEL achieves classification accuracy competitive with established machine learning models without sacrificing interpretability. Further, we show that ECSEL satisfies some desirable properties regarding global feature behavior, decision-boundary analysis, and local feature attributions. Experiments on benchmark datasets and two real-world case studies (e-commerce and fraud detection) demonstrate that the learned equations expose dataset biases, support counterfactual reasoning, and yield actionable insights.
\end{abstract}

\section{Introduction}
Modern machine learning often trades explainability for predictive performance. Deep neural networks achieve remarkable accuracy but obscure the reasoning behind predictions. In high-stakes domains such as medicine, finance, and human resources, understanding the underlying decision process is as important as accuracy. Surprisingly, simple, well-chosen formulations can capture the essential structure in data, delivering both accurate predictions and transparent, human-readable explanations \citep{Gerwin1974InformationPD}.

Symbolic Regression (SR) generates inherently interpretable models by producing explicit mathematical formulas that serve as explanations, making it appealing for settings where transparency is essential. This has fueled interest in SR not only as a scientific discovery tool but also as a foundation for interpretable prediction models. Conventional SR approaches, such as Genetic Programming (GP), typically rely on stochastic, population-based search heuristics that are computationally expensive and struggle with high-dimensional datasets.

General-purpose SR methods target arbitrary functional forms, but the equations they are benchmarked against often reflect domain-specific characteristics. The AI Feynman benchmark dataset for SR \citep{udrescu2020ai} consists of 100 physics equations of which 45 are signomial functions. Figure~\ref{fig:benchmark_comparison} compares how our method recovers these structurally tractable equations relative to existing approaches.

\begin{figure}[h!]
\centering
\includegraphics[width=0.48\textwidth]{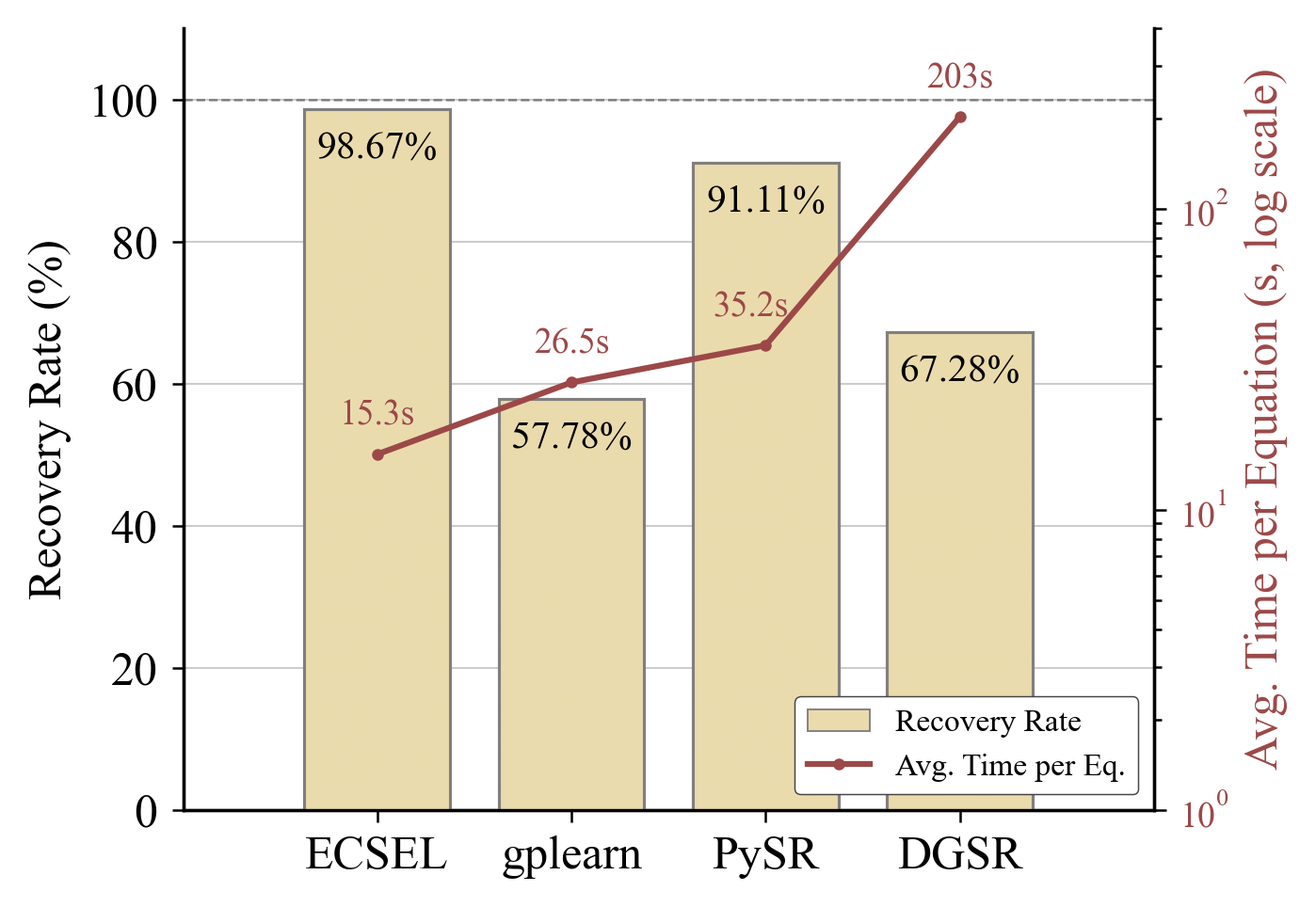}
\caption{Equation recovery rates and average computation time on 45 AI Feynman signomial equations. Comparison of ECSEL (ours), general-purpose SR methods gplearn, PySR, and DGSR (state-of-the-art), averaged over five random seeds (42--46). DGSR timeout cases ($>900$s) excluded from time average.}
\label{fig:benchmark_comparison}
\end{figure}

As shown in Figure~\ref{fig:benchmark_comparison}, established SR methods, including the state-of-the art deep learning approach DGSR \citep{holt2024deep}, do not fully exploit this simplicity. Baseline methods were evaluated without restricting to signomial forms; however, constraining gplearn \citep{stephens2016gplearn} and PySR to such forms did not improve performance, while DGSR's architecture precluded such constraints. This observation suggests an opportunity: rather than relying on a computationally expensive heuristic search over vast expression spaces, we can directly target this prevalent structural pattern.

Although our formulation is inspired by SR benchmarks, its utility extends far beyond equation discovery. By pairing signomial models with a softmax layer and assigning each class its own signomial, we construct a classifier that delivers competitive performance while producing global and human-readable explanations. The resulting model generates not just predictions but explicit equations that describe how features influence class probabilities. This bridges the gap between the interpretability of SR and the practical demands of modern classification tasks, offering a path toward explainable AI in high-stakes applications.






The primary contributions of this work are:

\begin{enumerate}
\item We identify signomial functions as a prevalent structure in symbolic regression benchmarks and propose \textit{ECSEL}, an explainable classifier that exploits this structure. While signomials and gradient-based optimization are individually well-established, our contribution lies in combining them into an inherently interpretable classifier with sparsity-inducing regularization. We show ECSEL achieves competitive classification performance with established benchmark methods while maintaining inherent interpretability through closed-form expressions.
\item We demonstrate that ECSEL recovers target equations in SR benchmarks more frequently than competing state-of-the-art approaches while requiring substantially less computation.
\item We establish that ECSEL satisfies \textit{desirable} analytical properties enabling three levels of explanation: global feature behavior through closed-form elasticities, decision-boundary analysis via exact margin sensitivities, and local feature attributions through log-space decompositions.
\item Through case studies on e-commerce and fraud detection datasets, we demonstrate that ECSEL's learned equations expose dataset characteristics and yield actionable insights.
\end{enumerate}

Code for reproducing all experiments is available at \url{github.com/AdiaLumadjeng/ecsel}.

\section{Related Work}
Unlike traditional regression, which fits parameters to a predefined equation, Symbolic Regression (SR) searches for both model structure and parameters. The primary output is a concise, human-interpretable equation, making SR powerful for scientific discovery and explainable AI. However, the combinatorial search makes the problem NP-hard \citep{virgolin2022symbolic}.

Heuristic approaches navigate the search space using evolutionary algorithms. Genetic Programming (GP) iteratively evolves candidate expressions through selection, crossover, and mutation \citep{koza1992genetic}. Modern tools like PySR remain competitive through simulated annealing and efficient constant optimization \citep{cranmer2023interpretable}. \citet{udrescu2020ai} introduced the AI Feynman algorithm, using neural networks to detect physical properties like symmetry and separability as heuristics to recursively decompose problems. The authors developed the AI Feynman benchmark, now a standard for SR evaluation. \citet{jin2019bayesian} use MCMC sampling to incorporate priors and estimate uncertainty. These methods are powerful general-purpose tools, but do not exploit structural patterns that may be prevalent in specific domains.

Deep learning approaches reframe SR as sequence generation or representation learning. Deep Symbolic Regression (DSR) uses an RNN to generate expressions token-by-token with risk-seeking policy gradients \citep{petersen2021deep}. Current state-of-the-art leverages Transformers: NeSymRes pre-trains on millions of synthetic equations to predict formulas in a single forward pass \citep{biggio2021neural}, while DGSR employs permutation-invariant encoders for superior generalization \citep{holt2024deep}. Hybrid methods combine deep learning with heuristic search: neural-guided GP uses RNNs to initialize evolutionary algorithms \citep{mundhenk2021symbolic}. Kolmogorov-Arnold Networks (KANs) offer an interpretable alternative to MLPs using learnable spline functions simplifiable into symbolic formulas \citep{liu2024kan}. While these methods push the state of the art on general benchmarks, none are designed to exploit specific function classes such as signomials efficiently.

Signomial functions appear frequently in SR benchmarks, but have not yet been studied as learning models in their own right. First introduced by \citet{duffin1973geometric} in the context of geometric programming, signomials there serve as objectives to be minimized rather than models to be learned. Our setting is fundamentally different: we treat signomials as a model class and minimize prediction error against observed labels, bringing signomials into the supervised learning framework.

Explainable classification has become a central theme in modern machine learning, driven by the need to make transparent and trustworthy decisions. The field develops inherently interpretable models, such as decision trees, which offer direct, human-readable logic. Generalized Additive Models (GAMs) \citep{lou2013gams} extend this by learning nonlinear shape functions per feature while preserving additivity, with recent work incorporating neural networks \citep{Agarwal2021nams}. Complementary approaches use sparsity-inducing regularization \citep{louizos2018learning} to select interpretable feature subsets, balancing expressiveness with transparency.

More recent efforts have extended interpretability to complex black-box classifiers through post-hoc explanation techniques such as LIME \citep{ribeiro2016lime}, SHAP \citep{lundberg2017shap}, and Integrated Gradients \citep{Sundarajan2018Axiom}. Despite their popularity, critics argue that post-hoc explanations can be unreliable or misleading for high-stakes decisions, advocating instead for inherently interpretable architectures \citep{Rudin2018StopEB}. ECSEL follows this tradition, learning a closed-form signomial expression that serves as both classifier and explanation.

\begin{figure*}[t]
\centering
\includegraphics[width=1\textwidth]{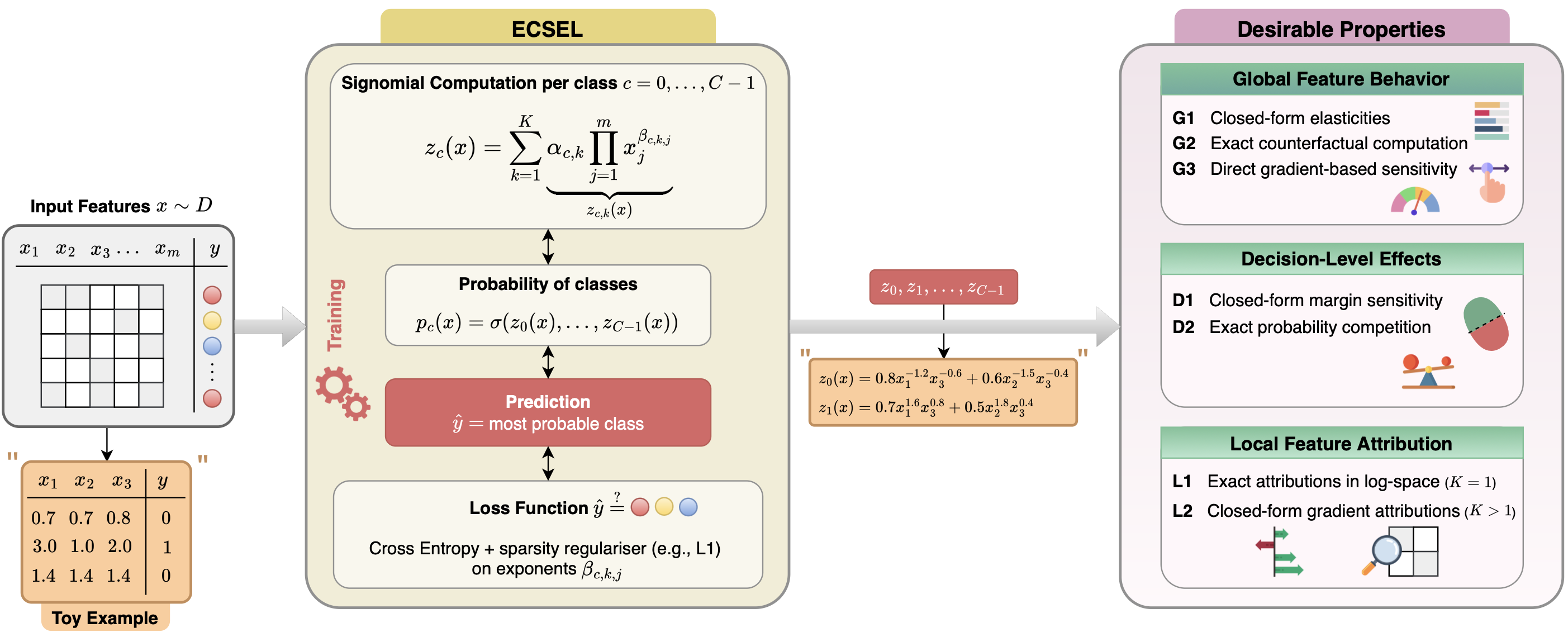}
\caption{Overview of ECSEL's computation flow and analytical properties. Input features are transformed through class-specific signomial functions to produce score functions $z_c(x)$, which are then converted to class probabilities via softmax. The signomial structure enables three categories of desirable properties: global feature behavior (elasticities, counterfactuals, sensitivity), decision-level effects (margin analysis, probability competition), and local feature attributions (exact for $K=1$, gradient-based for $K>1$).}
\label{fig:ecsel_overview}
\end{figure*}

\section{Approach}
We begin with signomial functions and establish their universal approximation property (Section~\ref{subsec:signomial}). We formulate ECSEL, our signomial-based classifier, and derive a set of desirable properties that enable interpretation regarding global feature behavior, decision-boundaries, and local feature attribution (Section~\ref{sec:properties}).

\subsection{Signomial Functions} \label{subsec:ecsel}
Let $\{(x_i, y_i)\}_{i=1}^N$ denote a dataset, where $x_i = (x_{i1}, \ldots, x_{im}) \in \RR^m$ is the feature vector for sample $i$ and $y_i$ is the corresponding target. We consider signomial functions, that is, a finite sums of power-law terms, with real-valued constants and real-valued exponents.

A signomial with $K$ terms is defined as
\begin{equation}
z(x_i) = \sum_{k=1}^K \alpha_k \prod_{j=1}^m x_{ij}^{\beta_{k,j}},
\label{eq:signomial}
\end{equation}

where $\alpha_k \in \mathbb{R}$ are coefficients and $\beta_{k,j} \in \mathbb{R}$ are exponents. 

Despite their simplicity, signomials are highly expressive. A single-term signomial ($K=1$) naturally represents power laws, inverse relationships ($\beta_j < 0$), and root-type effects ($\beta_j = \frac{p}{q}$), functional forms that appear frequently in scientific equations. Empirically, 45 out of 100 equations in the AI Feynman benchmark are signomials. 

Theoretically, signomials form a rich function class. We establish that signomials are dense in the space of continuous functions on compact subsets of the positive orthant, meaning they can approximate any continuous function to arbitrary precision. The following result places signomials alongside neural networks as universal approximators, while being tailored to multiplicative power-law relationships.

\begin{theorem}[Universal Approximation for Signomials]
\label{thm:signomial_universal_approximation}
Let $D \subset \mathbb{R}^m_{>0}$ be compact. For any $f \in C(D, \mathbb{R})$ and $\epsilon > 0$, there exists a signomial $z(x) = \sum_{k=1}^K \alpha_k \prod_{j=1}^m x_j^{\beta_{k,j}}$ such that $\sup_{x \in D} |f(x) -z(x)| < \epsilon$.
\end{theorem}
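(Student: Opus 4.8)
The plan is to convert the multiplicative, power-law structure of signomials into an additive one by a logarithmic change of coordinates, and then apply the Stone--Weierstrass theorem. The key observation is that on the positive orthant the map $\Phi(x) = (\log x_1, \dots, \log x_m)$ is a homeomorphism onto its image, and under it a single power-law term transforms as $\prod_{j=1}^m x_j^{\beta_{k,j}} = \exp\bigl(\sum_{j=1}^m \beta_{k,j}\, u_j\bigr)$ with $u = \Phi(x)$. Hence every signomial pulls back to a finite sum $\sum_k \alpha_k \exp(\langle \beta_k, u\rangle)$ of exponentials of linear forms, and approximating $f$ on $D$ becomes equivalent to approximating the transported function $g = f \circ \Phi^{-1}$ on the image set.

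First I would record the topological preliminaries. Since $D$ is compact and contained in $\mathbb{R}^m_{>0}$, its image $D' = \Phi(D)$ is compact as the continuous image of a compact set, and $g = f \circ \Phi^{-1}$ is continuous on $D'$. Because $\Phi$ is a bijection between $D$ and $D'$, the supremum error is preserved exactly, so it suffices to uniformly approximate $g$ on $D'$ by members of the exponential-linear family.

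Next I would verify that the family $\mathcal{A} = \{\,\sum_{k=1}^K \alpha_k \exp(\langle \beta_k, u\rangle) : K \in \mathbb{N},\ \alpha_k \in \mathbb{R},\ \beta_k \in \mathbb{R}^m\,\}$ meets the hypotheses of the real Stone--Weierstrass theorem on $D'$. It is a linear subspace by construction; it is closed under multiplication because $\exp(\langle \beta, u\rangle)\exp(\langle \delta, u\rangle) = \exp(\langle \beta + \delta, u\rangle)$, so a product of terms is again a term — this is precisely where the algebra structure uses the exponent-addition identity. It contains the constants (take $\beta = 0$), and it separates points: if $u \neq v$ then $u_j \neq v_j$ for some coordinate $j$, and choosing $\beta$ to be the $j$-th standard basis vector gives $u \mapsto \exp(u_j)$, which distinguishes $u$ from $v$. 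Stone--Weierstrass then produces $h(u) = \sum_k \alpha_k \exp(\langle \beta_k, u\rangle)$ with $\sup_{u \in D'} |g(u) - h(u)| < \epsilon$, and pulling back through $\Phi$ yields the signomial $z(x) = h(\Phi(x)) = \sum_k \alpha_k \prod_j x_j^{\beta_{k,j}}$ with $\sup_{x \in D} |f(x) - z(x)| < \epsilon$.

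I do not expect a serious obstacle: the real content is the change of variables together with a routine check of the algebra axioms, the only mildly delicate point being multiplicative closure, which rests entirely on the exponent-addition identity. It is worth noting an even shorter route: every monomial $\prod_j x_j^{n_j}$ with $n_j \in \mathbb{Z}_{\geq 0}$ is itself a signomial, so the ordinary multivariate Weierstrass theorem already shows that the polynomials — a subfamily of the signomials — are dense in $C(D, \mathbb{R})$, giving the claim at once. I would nonetheless present the logarithmic argument as the main proof, since it makes transparent why the full generality of real exponents is the natural function class for multiplicative power-law relationships, and relegate the polynomial containment to a remark.
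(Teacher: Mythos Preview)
Your proposal is correct and follows essentially the same route as the paper: a logarithmic change of variables turning signomials into finite sums of exponentials of linear forms, followed by a routine verification of the Stone--Weierstrass hypotheses (algebra closure via exponent addition, constants from $\beta=0$, point separation via coordinate exponentials) on the compact image $\Phi(D)$, then pulling back. Your additional remark that polynomials already form a dense subfamily of signomials, giving an even shorter proof via ordinary Weierstrass, is a nice observation the paper does not make.
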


\textit{Proof sketch.}\hspace{0.9em}Via logarithmic transformation, signomials map to exponentials of linear functions, establishing a homeomorphism between $C(D, \mathbb{R})$ on $D \subset \mathbb{R}^m_{>0}$ and continuous functions on the log-transformed  domain. This enables application of the Stone-Weierstrass theorem \citep{stone1948}. Full proof can be found in Appendix~\ref{app:signomial_universal_approximation}.

\textbf{From structure to interpretability.}\hspace{0.9em}Importantly, the same structural properties that make signomials compact also enable interpretability. For a single-term signomial, each exponent $\beta_{k,j}$ directly encodes how a feature scales model output, with its sign and magnitude characterizing the direction and strength of the feature's effect. We leverage this direct relationship between model parameters and feature contribution to develop the analytical properties presented in Section~\ref{sec:properties}.

\textbf{Faithfulness of exponent magnitude.}\hspace{0.9em}A natural question is whether exponent magnitude truly reflects feature contribution. For the single-term case ($K=1$), we can formally establish this relationship: when a feature $x_k$ is scaled by any factor $r$, the signomial output changes by $r^{\beta_k}$ . This means features with larger $|r^{\beta_k}|$ induce proportionally larger output responses under the same perturbation, creating a faithful ordering where $|\beta_k| > |\beta_{\ell}|$ implies feature $k$ contributes more strongly than feature $\ell$. While this result is shown for $K=1$ (see Proposition~\ref{prop:faithfulness} in Appendix~\ref{app:faithfulness}), it provides intuition for why exponent-based interpretation works in the muti-term setting: each term's exponents capture how feature scale that term's contribution to the overall prediction.

\textbf{Classification with ECSEL.}\label{subsec:signomial}\hspace{0.9em}\emph{Explainable Classification via Signomial Equation Learning} (ECSEL) formulates classification by learning class-specific signomial score functions (see Figure~\ref{fig:ecsel_overview}). For a multi-class problem with $C$ classes, the score for class $c$ is defined as
\begin{equation} \label{eq:signomial_class_score}
z_c(x)=\sum_{k=1}^K \underset{z_{c,k}(x)}{\underbrace{\alpha_{c,k} \prod_{j=1}^m x_j^{\beta_{c,k,j}}}},
\end{equation}
with learnable parameters $\alpha_{c,k} \in \mathbb{R}$ and $\beta_{c,k,j} \in \mathbb{R}$ for each class $c \in \{0, \ldots, C-1\}$, component $k \in \{1, \ldots, K\}$, and feature $j \in \{1, \ldots , m\}$. Since real-valued exponents require positive arguments, features are preprocessed via affine transformations to ensure $x_{ij} > 0$ for all $j$ (e.g., scaling to $[1,10]$). The number of terms $K$ controls model complexity: $K=1$ yields a single power-law score per class, while larger $K$ allows additive composition of multiple interacting components.

\textbf{Training objective and regularization.}\hspace{0.9em}We train the model by minimizing the cross-entropy loss with sparsity inducing 
regularization on the exponents:
\begin{equation} \label{eq:loss}
\mathcal{L}(\alpha, \beta) = -\frac{1}{N}\sum_{i=1}^N \log p_{y_i}(x_i) + \lambda \sum_{c,k,j} |\beta_{c,k,j}|,
\end{equation}
where $y_i \in \{0, \ldots, C-1\}$ is the class label for sample $i$, $p_{y_i}(x_i)$ is the predicted probability for the true class, $N$ is the number of samples, and $\lambda$ controls controls the regularization strength applied to all exponents. 

We use $\ell_1$ regularization to encourange automatic feature selection by driving irrelevant exponents toward zero, producing more interpretable equations with fewer active features per component. We remark that $\ell_1$ is not essential for our framework and other regularizers can also be used.
We use gradient-based optimization methods and apply numerical stability techniques (e.g., feature scaling, logarithmic transformation) to handle the power-law terms effectively.


\textbf{Probability transformation.}\hspace{0.9em}Predicted probabilities are obtained using softmax over class scores or, in the binary case, a sigmoid applied to a single score or a two-class softmax. These formulations differ in parameterization and optimization dynamics, which can lead to different predictions. The resulting scores are not guaranteed to be calibrated; post-hoc techniques such as temperature scaling \citep{guo2017calibration} could be applied if needed.

ECSEL can be viewed as a per-class network with exponential activations on log-transformed features; while architecturally related to shallow MLPs, the signomial parameterization enables the closed-form interpretability properties derived in Section~\ref{sec:properties}. A detailed comparison against matched per-class MLPs is provided in Appendix~\ref{app:perclass_mlp}.


\subsection{Desirable Properties of ECSEL} \label{sec:properties}
Beyond the inherent interpretability of the learned equations, we reveal several properties induced by the functional form of ECSEL that enable interpretation of its predictions. These properties explain how features influence class scores, decision boundaries, and individual predictions. We group them into global feature behavior, decision-level effects, and local feature attributions. 

\textbf{Global feature behavior.}\hspace{0.9em}We distinguish between two related quantities. The \textit{elasticity} (log-log derivative) of a class score with respect to a feature is 
\begin{equation}\label{eq:properties_elasticity}
E_{c,j}(x) := \frac{\partial \log z_c(x)}{\partial \log x_j},
\end{equation}
which measures proportional sensitivity of the score. The \textit{log-gradient} of the score is
\begin{equation}\label{eq:properties_log_gradient}
G_{c,j}(x) := \frac{\partial z_c(x)}{\partial \log x_j},
\end{equation}
which measures absolute sensitivity under proportional feature changes. When $z_c(x)>0$, the two are related by $G_{c,j}(x)=z_c(x)E_{c,j}(x)$.

We first describe properties that characterize how individual features influence class scores across the input space.

\hypertarget{prop:g1}{\textsc{Property G1 (Global Feature Attribution).}}\hspace{0.9em}The signomial structure enables \emph{direct} computation of global feature importance through gradient-based attribution. Specifically, the elasticity $E_{c,j}(x)$ as defined in Eq.~\eqref{eq:properties_elasticity}, admits a closed-form expression in terms of the model parameters. For ECSEL scores $z_c(x)=\sum_{k=1}^K z_{c,k}(x)$, and for inputs such that $z_c(x)>0$, we have
\begin{equation}
    E_{c,j}(x)=\sum_{k=1}^K \frac{z_{c,k}(x)}{z_c(x)}\,\beta_{c,k,j}.
    \label{eq:g1}
\end{equation}
For $K=1$, this reduces to the constant elasticity $E_{c,j}(x)=\beta_{c,j}$, whereas for $K>1$, it varies with $x$ through the component responsibilities $\frac{z_{c,k}(x)}{z_c(x)}$. 


\hypertarget{prop:g2}{\textsc{Property G2 (Counterfactual Reasoning).}}\hspace{0.9em}The signomial structure enables \emph{exact} computation of counterfactual effects under proportional feature perturbations. Specifically, the effect of scaling a feature $x_j$ by a factor $q > 0$ induces an analytic score transformation $z_c(x) \mapsto z_c^{\text{new}}(x)$, which can be computed directly from the original score $z_c(x)$ and model parameters. This allows direct assessment of ``what-if'' scenarios without retraining or re-evaluation. For ECSEL, this reduces to 
\begin{equation}
    z_c^{\text{new}}(x) = \sum_{k=1}^K q^{\beta_{c,k,j}} z_{c,k}(x).
    \label{eq:g2}
\end{equation}

\hypertarget{prop:g3}{\textsc{Property G3 (Gradient-based Sensitivity).}}\hspace{0.9em}For small proportional changes to features, class scores respond linearly to first order. Specifically, scaling feature $x_j$ by a small factor $(1+\varepsilon)$ induces a score change
\begin{equation}
    z_c(x) \mapsto z_c(x) + \varepsilon \cdot \frac{\partial z_c(x)}{\partial \log x_j} + \mathcal{O}(\varepsilon^2),
\end{equation}

enabling tractable sensitivity analysis through the score log-gradient $G_{c,j}$ defined in Eq.~\eqref{eq:properties_log_gradient}. For ECSEL, this log-gradient admits the closed form
$G_{c,j}(x)=\sum_{k=1}^K \beta_{c,k,j}\,z_{c,k}(x)$, yielding $z_c(x)\mapsto z_c(x)+\varepsilon\,G_{c,j}(x)+\mathcal{O}(\varepsilon^2).$ For single-component models ($K=1$), the response simplifies to $z_c(x)\mapsto z_c(x)(1+\varepsilon\,\beta_{c,j})$, showing that the score scales proportionally with the exponent.



\textbf{Decision-level effects.}\hspace{0.9em}Next, we consider properties that describe how features affect class competition and decision boundaries. 

\hypertarget{prop:d1}{\textsc{Property D1 (Decision Boundary Sensitivity).}}\hspace{0.9em}The influence of a feature on the decision boundary between classes $c$ and $c'$ can be characterized by the sensitivity of the score margin to proportional feature changes,
\begin{equation}
    \frac{\partial \big(z_c(x)-z_{c'}(x)\big)}{\partial \log x_j}.
\end{equation}
For ECSEL, this reduces to a closed-form expression, which when $z_c(x), z_{c'}(x)>0$, can be written as 
\begin{equation}
    z_c(x)E_{c,j}(x)-z_{c'}(x)E_{c',j}(x). 
    \label{eq:d1}
\end{equation}
When $K=1$, the expression simplifies to $z_c(x) \beta_{c,j} - z_{c'}(x) \beta_{c',j}$, directly revealing how exponent differences drive class competition.



\begin{figure*}[t]
\centering
\includegraphics[width=0.245\textwidth]{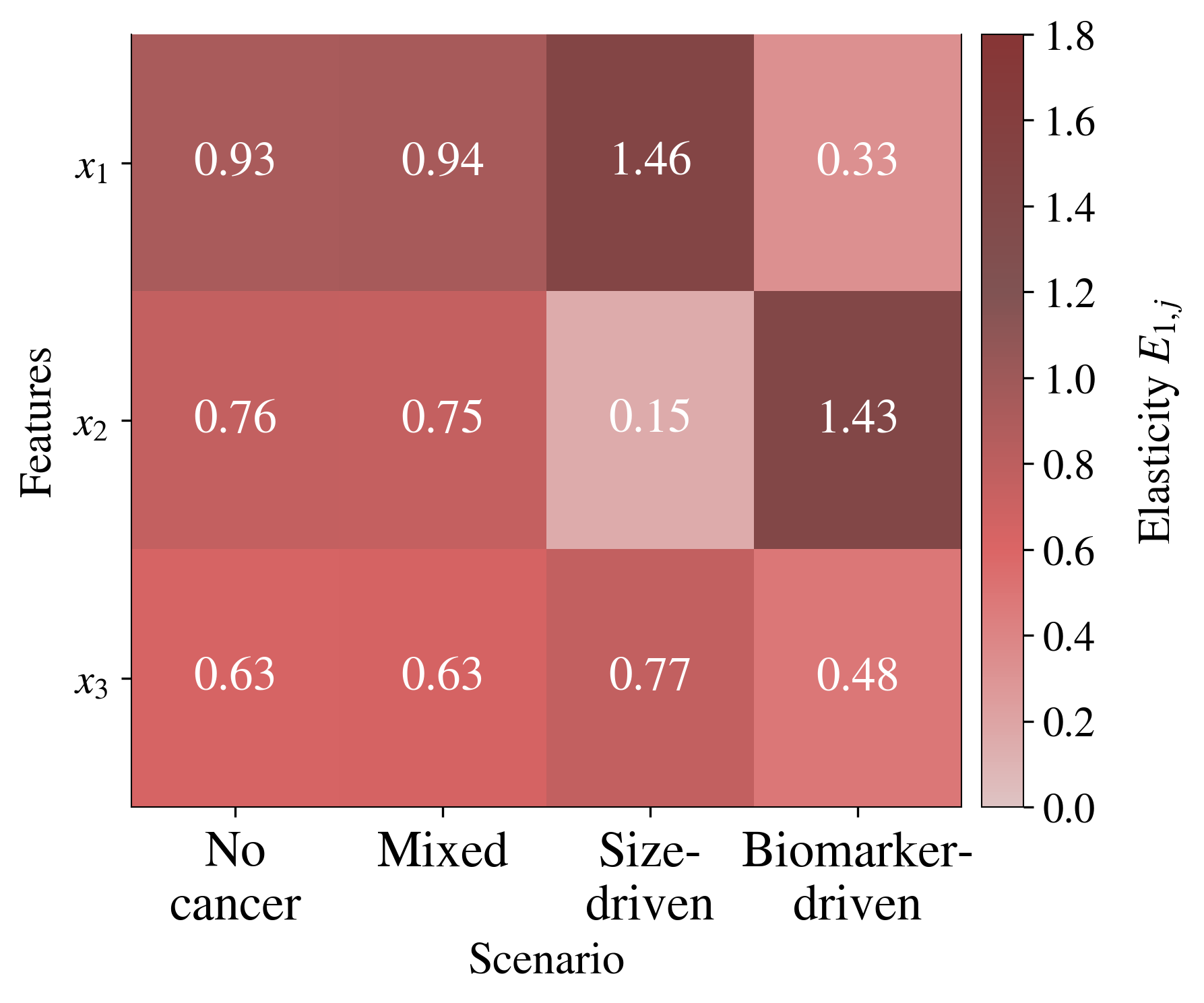}
\hfill
\includegraphics[width=0.230\textwidth]{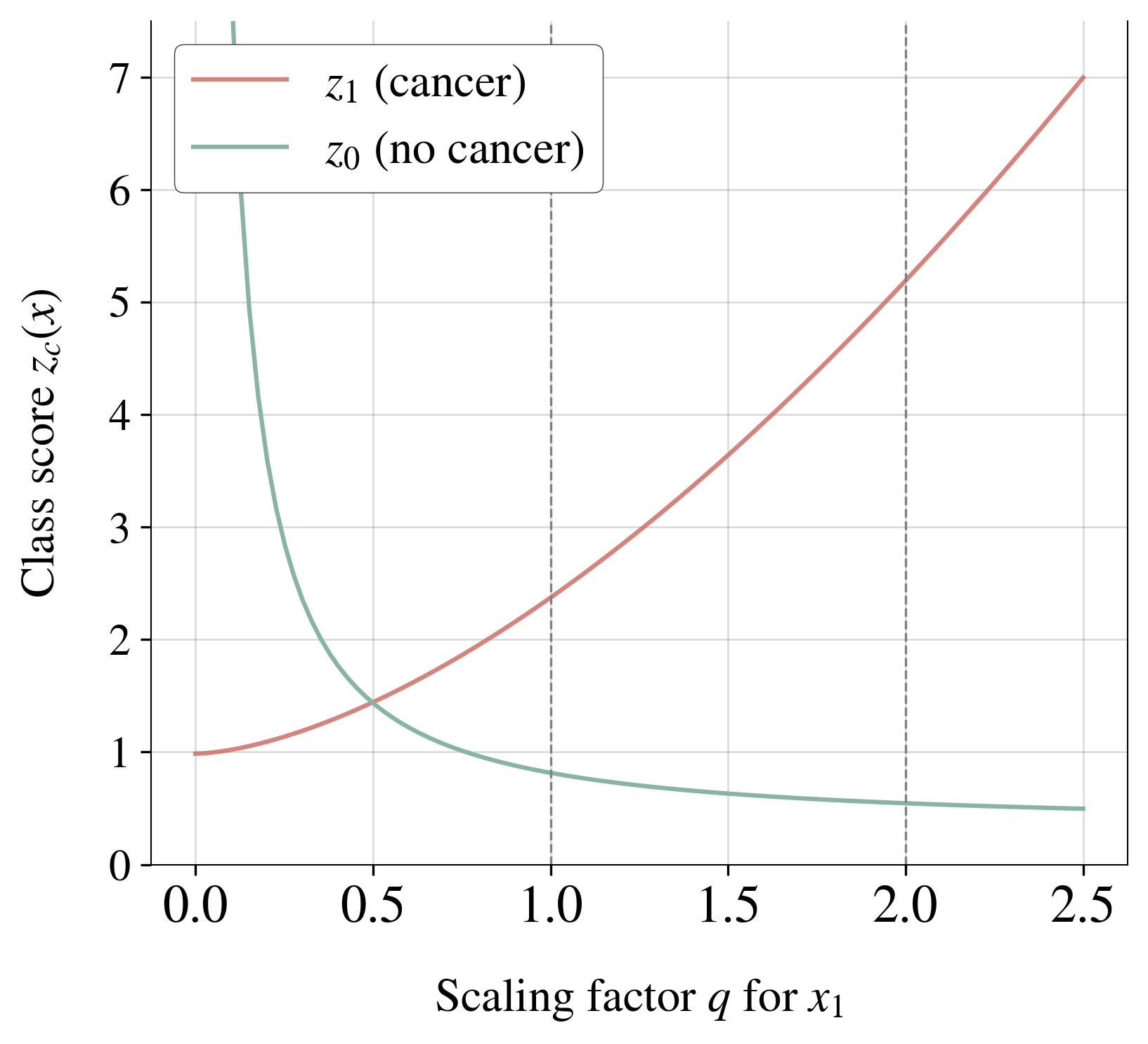}
\hfill
\includegraphics[width=0.245\textwidth]{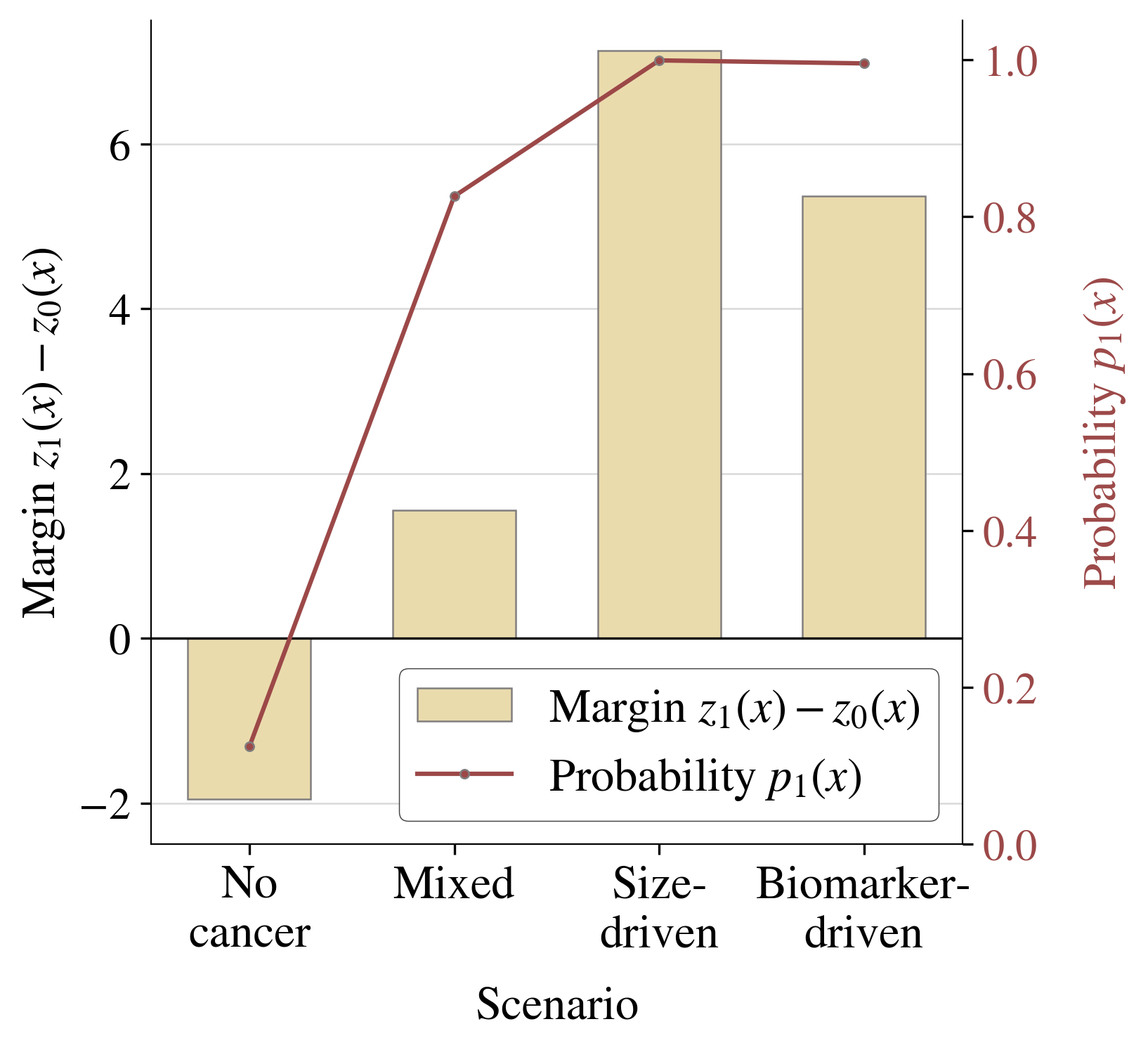}
\hfill
\includegraphics[width=0.245\textwidth]{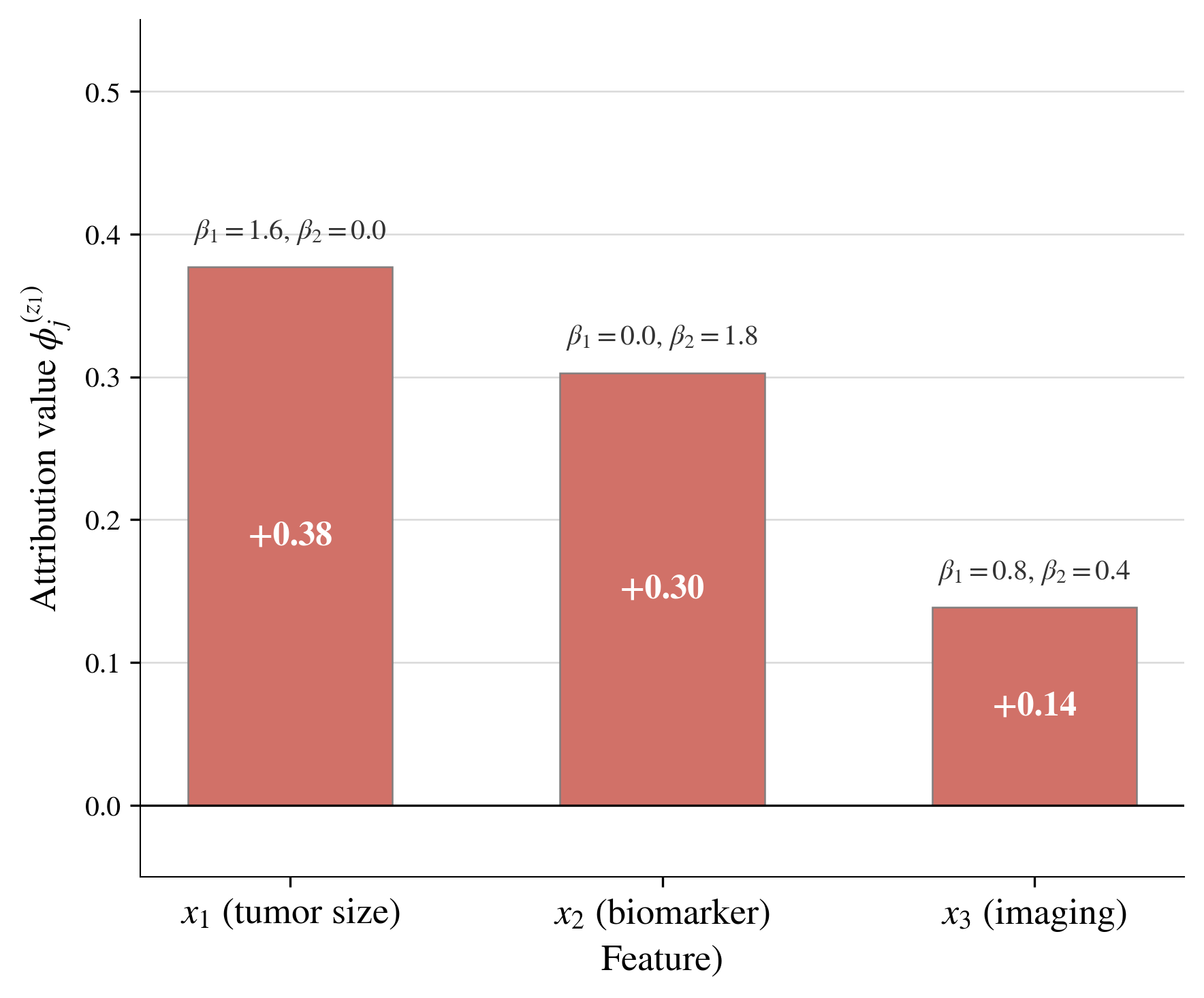}
\hspace{0.5cm}(a)\hspace{3.2cm}(b)\hspace{3.2cm}(c)\hspace{3.2cm}(d)
\caption{\textbf{Interpretability properties illustrated on a toy cancer screening example} with tumor size ($x_1$), biomarker level ($x_2$), and imaging irregularity ($x_3$). Four scenarios: no-cancer, mixed, size-driven and biomarker-driven. \textbf{(a)} Feature elasticities show context-dependent importance (Property \hyperlink{prop:g1}{G1}). \textbf{(b)} Exact counterfactual under scaling tumor size by $q$ (Property \hyperlink{prop:g2}{G2}). \textbf{(c)} Decision margin $M_{1,0}$ and cancer probability $p_1$ across scenarios (Properties \hyperlink{prop:d1}{D1}--\hyperlink{prop:d2}{D2}). \textbf{(d)} Gradient-based attributions $\phi_j \approx G_{c,j}(x^*)(\log x_j - \log x^*_j)$ to the cancer class for mixed sample at baseline $x^* = (1,1,1)$ (Property \hyperlink{prop:l2}{L2}).}
\label{fig:toy_example}
\end{figure*}

\hypertarget{prop:d2}{\textsc{Property D2 (Direct probability competition).}}\hspace{0.9em}Feature effects on predicted class probabilities reflect competitive interactions between classes through normalization. The sensitivity of a class probability $p_c(x)$ to proportional feature changes, $\frac{\partial p_c(x)}{\partial \log x_j}$, can be computed exactly from the signomial structure's class-specific sensitivities and predicted probabilities. For ECSEL with softmax-normalized scores, and using Eq.~\eqref{eq:properties_log_gradient}, this sensitivity admits the closed form
\begin{equation}
    \frac{\partial p_c(x)}{\partial \log x_j} = p_c(x)\big(G_{c,j}(x) - \sum_r p_r(x)G_{r,j}(x)\big).
    \label{eq:d2}
\end{equation}
A feature increases class $c$ probability only when its log-gradient exceeds the probability-weighted average across all classes.



\textbf{Local feature attribution.}\hspace{0.9em}Finally, we derive properties that characterize how individual features contribute to a specific prediction, enabling instance-level explanations.

\hypertarget{prop:l1}{\textsc{Property L1 (Exact log-space additivity).}}\hspace{0.9em}The individual component of a signomial function becomes additive in log-space. Specifically, for components $z_{c,k}(x)>0$, the log-score can be expressed as a baseline term plus a sum of feature-specific contributions. For ECSEL, taking logarithms of each signomial component yields:
\begin{equation}
    \log z_{c,k}(x) = \log z_{c,k}(b) + \sum_j \beta_{c,k,j}\log\frac{x_j}{b_j},
\end{equation}
where $b$ is a reference baseline (e.g., geometric mean). For $K=1$, this provides exact log-space additivity for the entire class score, similar to the well-known SHAP values \citep{lundberg2017shap}. However, for $K>1$ the aggregated score leads to an approximation as we give next. 

\hypertarget{prop:l2}{\textsc{Property L2 (Gradient-based local attributions).}}\hspace{0.9em}When exact additive decomposition is not available (i.e., for aggregated nonlinear functions), local feature attributions can be obtained via first-order linearization around a reference input. For ECSEL with $K>1$, the aggregated score (defined in Eq. \eqref{eq:signomial_class_score}) is nonlinear, so the contribution of feature $x_j$ is approximated by
\begin{equation}
    \phi^{(z_c)}_j(x) \approx G_{c,j}(x^*) \big(\log x_j-\log x^*_j\big),
\end{equation}
where $G_{c,j}(x^*)$ is the score log-gradient in Eq.~\eqref{eq:properties_log_gradient} evaluated at baseline $x^*$. An analogous expression holds for class probabilities using the probability gradient from Property~\hyperlink{prop:d2}{D2}. Overall, we obtain gradient-based local attributions through closed-form gradients rather than expensive sampling procedures.

Properties~\hyperlink{prop:l2}{L2} and~\hyperlink{prop:g1}{G1} are complementary:~\hyperlink{prop:g1}{G1} characterizes local sensitivity at any point, and~\hyperlink{prop:l2}{L2} turns this into an additive feature contribution relative to a baseline by evaluating the same log-gradient $G_{c,j}$ at a reference point $x^*$.

\begin{theorem}[Interpretability properties of ECSEL]\label{thm:ecsel_properties}
Let $\{(x_i,y_i)\}_{i=1}^N$ be a dataset with $x_i \in \mathbb{R}^m_{>0}$ denoting the preprocessed positive feature representation (e.g., via affine shifting and rescaling as described in Section~\ref{subsec:signomial}). Let ECSEL learn class scores as defined in Eq.~\eqref{eq:signomial_class_score} with component scores $z_{c,k}(x) = \alpha_{c,k}\prod_{j=1}^m x_j^{\beta_{c,k,j}}$ and log-gradients $G_{c,j}(x) = \sum_{k=1}^K \beta_{c,k,j} z_{c,k}(x)$. Then, for all $x\in\mathbb{R}_{>0}^m$ with $z_c(x)>0$ where required, the resulting classifier satisfies Properties \hyperlink{prop:g1}{G1}--\hyperlink{prop:g3}{G3}, \hyperlink{prop:d1}{D1}--\hyperlink{prop:d1}{D2}, and \hyperlink{prop:l1}{L1}--\hyperlink{prop:l2}{L2}.
\end{theorem}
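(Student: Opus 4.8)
The plan is to reduce every one of the seven properties to a single elementary computation: the log-derivative of one power-law monomial. First I would establish the kernel identity that for each component $z_{c,k}(x) = \alpha_{c,k}\prod_{j} x_j^{\beta_{c,k,j}}$ one has $\frac{\partial z_{c,k}(x)}{\partial \log x_j} = \beta_{c,k,j}\, z_{c,k}(x)$, which follows from the power rule $\partial_{x_j} z_{c,k} = \beta_{c,k,j} z_{c,k}/x_j$ together with the substitution $\partial/\partial\log x_j = x_j\,\partial/\partial x_j$. Because $z_c = \sum_k z_{c,k}$ is linear in its components, summing this identity yields the closed form $G_{c,j}(x) = \sum_k \beta_{c,k,j} z_{c,k}(x)$ for the score log-gradient of Eq.~\eqref{eq:properties_log_gradient}; this single expression is the backbone of the entire theorem.

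From this kernel each global and decision-level property is a one- or two-line consequence. For \hyperlink{prop:g1}{G1} I would apply the chain rule $E_{c,j} = \frac{x_j}{z_c}\partial_{x_j} z_c = G_{c,j}/z_c$ (valid when $z_c(x)>0$) and divide through by $z_c$ to obtain the responsibility-weighted form $\sum_k \frac{z_{c,k}}{z_c}\beta_{c,k,j}$. For \hyperlink{prop:g2}{G2} I would observe that scaling $x_j \mapsto q x_j$ multiplies each monomial exactly by $q^{\beta_{c,k,j}}$, so $z_c^{\text{new}} = \sum_k q^{\beta_{c,k,j}} z_{c,k}$ with no approximation. Property \hyperlink{prop:g3}{G3} is then the first-order Taylor expansion of G2 at $q = 1+\varepsilon$, using $(1+\varepsilon)^{\beta} = 1 + \beta\varepsilon + \mathcal{O}(\varepsilon^2)$, which recovers exactly the log-gradient $G_{c,j}$ as the linear coefficient; the $K=1$ simplification drops out since a single component gives $z_c(1+\varepsilon\beta_{c,j})$. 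Property \hyperlink{prop:d1}{D1} follows immediately from linearity of the derivative applied to the margin $z_c - z_{c'}$, combined with the stated identity $G_{c,j} = z_c E_{c,j}$ to produce the elasticity form.

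The remaining properties use the same kernel composed with one outer map. For \hyperlink{prop:d2}{D2} I would compose $G_{r,j}$ with the softmax Jacobian $\partial p_c/\partial z_r = p_c(\delta_{cr} - p_r)$ via the chain rule $\partial p_c/\partial\log x_j = \sum_r (\partial p_c/\partial z_r)\,G_{r,j}$, and collecting terms yields the probability-weighted competition formula. Property \hyperlink{prop:l1}{L1} is obtained by taking logarithms of a single positive component, where $\log z_{c,k}(x) = \log\alpha_{c,k} + \sum_j \beta_{c,k,j}\log x_j$ is already additive, and subtracting the same expression evaluated at the baseline $b$ to cancel $\log\alpha_{c,k}$. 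Finally \hyperlink{prop:l2}{L2} is the first-order Taylor expansion of $u \mapsto z_c(\exp u)$ about $u^* = \log x^*$, whose partial derivatives are precisely $G_{c,j}(x^*)$ by the kernel identity, so the linear term assigns feature $j$ the contribution $G_{c,j}(x^*)(\log x_j - \log x^*_j)$.

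I do not expect a genuine mathematical obstacle here; the content is elementary calculus on the signomial form. The main care required is bookkeeping around \emph{domain and positivity conditions}: the elasticity and the relation $G_{c,j}=z_c E_{c,j}$ in G1 and D1 require $z_c(x)>0$, the logarithmic decomposition in L1 requires each component $z_{c,k}(x)>0$ (hence $\alpha_{c,k}>0$ on the relevant region), and all manipulations presuppose the preprocessed positive features $x\in\RR_{>0}^m$. I would also be explicit about which statements are \emph{exact} (G1, G2, D1, D2, and L1 for $K=1$) versus \emph{approximate} first-order linearizations (the remainder in G3, and L2 for $K>1$), since the theorem bundles both kinds and this distinction is exactly where the $K=1$ versus $K>1$ dichotomy enters.
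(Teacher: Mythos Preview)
Your proposal is correct and follows essentially the same approach as the paper: both reduce every property to the power-rule identity $\partial z_{c,k}/\partial\log x_j = \beta_{c,k,j}\,z_{c,k}$, then obtain G1 via the chain rule, G2 by direct scaling, G3 by Taylor-expanding G2 at $q=1+\varepsilon$, D1 by linearity on the margin, D2 via the softmax Jacobian, L1 by logging a single component and subtracting the baseline, and L2 by a first-order expansion in $\log x$. Your explicit framing around the ``kernel identity'' and your flagging of the positivity hypotheses and exact-versus-approximate split are, if anything, slightly cleaner organizationally than the paper's property-by-property derivation, but the mathematical content is the same.
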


\textit{Proof sketch. }The results follow from ECSEL's signomial structure through closed-form differentiation. Properties \hyperlink{prop:g1}{G1}--\hyperlink{prop:g3}{G3} use power-law derivatives and Taylor expansions; \hyperlink{prop:d1}{D1}--\hyperlink{prop:d1}{D2} apply these to margins and softmax; \hyperlink{prop:l1}{L1} exploits log-linearity of components; \hyperlink{prop:l2}{L2} uses local linearization for aggregated functions. See Appendix~\ref{app:properties} for the full proof.

\textbf{Illustrative example.}\hspace{0.9em}Figure~\ref{fig:toy_example} demonstrates ECSEL's properties on a synthetic cancer screening example with features $x_1$ (tumor size), $x_2$ (biomarker level), $x_3$ (imaging irregularity). Let each class have two signomial components capturing distinct diagnostic pathways: the no-cancer score $z_0 = 0.8 x_1^{-1.2}x_3^{-0.6} + 0.6 x_2^{-1.5}x_3^{-0.4}$ decreases with tumor size and biomarker level, while the cancer score $z_1 = 0.7 x_1^{1.6}x_3^{0.8} + 0.5 x_2^{1.8}x_3^{0.4}$ increases with both. We consider four patient profiles: no-cancer $(0.7, 0.7, 0.8)$, mixed $(1.4, 1.4, 1.2)$, size-driven $(3.0, 1.0, 2.0)$, and biomarker-driven $(1.0, 3.0, 2.0)$. Panel~(a) shows context-dependent feature importance: $x_1$ dominates in the size-driven case, $x_2$ in the biomarker-driven case. Panel~(b) shows scaling $x_1$ drives the cancer score up while reducing the no-cancer score. Panel~(c) shows the margin turning positive only for the size- and biomarker-driven scenarios. Panel~(d) shows all features contributing positively to the cancer score for the mixed scenario, with $x_1$ contributing most.

Several of the properties presented above have direct counterparts in standard explanation methods: \hyperlink{prop:g1}{G1} corresponds to global SHAP attributions, \hyperlink{prop:g3}{G3} to LIME's local surrogate, and \hyperlink{prop:l1}{L1}, \hyperlink{prop:l2}{L2} to additive SHAP decompositions. A key distinction is that ECSEL's versions are derived analytically from the model parameters rather than estimated via sampling, and are exact for $K=1$.

\section{Symbolic Regression Benchmark} \label{sec:SR_benchmark}
Although ECSEL is designed for classification, its functional form frequently appears in symbolic regression datasets. We show that ECSEL recovers a larger fraction of target signomial equations than state-of-the-art neural symbolic regression methods while doing so with substantially lower computation time.

\subsection{Experimental Setup}
For symbolic regression tasks, we adapt ECSEL by replacing the cross-entropy with Mean Squared Error (MSE), learning a continuous-valued signomial function $z(x)$ to fit the target equation:
\begin{equation} \label{eq:loss_sr}
\mathcal{L}_{\text{SR}}(\alpha, \beta) = \frac{1}{N}\sum_{i=1}^N (y_i - z(x_i))^2 + \lambda \sum_{k,j} |\beta_{k,j}|.
\end{equation}

We retain $\ell_1$ regularization on exponents to encourage compact, interpretable equations, consistent with the symbolic regression goal of recovering simple expressions rather than maximizing predictive accuracy alone. 

To navigate the nonconvex landscape, we use multi-start staged optimization strategy: for single-term signomials ($K=1$), we apply L-BFGS-B, which is well-suited to the low-dimensional and smooth objective. For multi-term signomials ($K>1$), the higher-dimensional and nonconvex parameter space benefits from adaptive stochastic optimization, and so we adopt a staged strategy: Adam-based structure discovery with stronger sparsity regularization, followed by refinement with reduced regularization, and a final L-BFGS polishing step initialized from the best Adam solution.

\textbf{Baselines and benchmarks.}\hspace{0.9em}We compare ECSEL against three state-of-the-art neural symbolic regression methods: Deep Generative Symbolic Regression (DGSR) \citep{holt2024deep}, Neural-Guided Genetic Programming (NGGP) \citep{mundhenk2021symbolic}, and NeSymRes \citep{biggio2021neural}. These methods are evaluated on a collection of signomial-rich benchmarks, including the \textit{AI Feynman} \citep{udrescu2020ai}, \textit{Livermore} \citep{mundhek2021Livermore}, \textit{Jin} \citep{jin2019bayesian}, and \textit{Korns} \citep{korns2013BayesianSR} problem sets, as well as equations from a collection of synthetic power-law expressions compiled by the authors of DGSR \citep{holt2024deep}. 

Following standard practice, all experiments use Gaussian noise ($\sigma=0.01$) and are conducted using the DGSR authors' publicly available benchmarking suite\footnote{\url{https://github.com/samholt/DeepGenerativeSymbolicRegression}} with a 15-minute time limit per equation. Full technical specifications and baseline configurations are provided in Appendix~\ref{app:sr_benchmark}.

\subsection{Results}
We evaluate performance using the \emph{symbolic recovery rate}, defined as the fraction of five random seeds (42--46) recovering the target expression up to algebraic equivalence, and average solving time. Detailed per-equation performance is presented in Tables~\ref{tab:aif_dgsr_mapping_merged} (Appendix~\ref{app:sr_benchmark}).

ECSEL achieves a global average recovery rate of 95.86\%, substantially outperforming DGSR (59.10\%), NGGP (58.54\%), and NeSymRes (56\%), while also requiring significantly less computation time. ECSEL is also significantly faster, averaging 86.4 seconds per equation, compared to 612.9, 468.7, and 126.3 seconds for DGSR, NGGP, and NeSymRes, respectively; several DGSR and NGGP runs fail to recover an expression within the time limit. 

The runtime variance in Table~\ref{tab:aif_dgsr_mapping_merged} reflects structural complexity: equations from \textit{Top} through \textit{Constant-6} consist of a single signomial term ($K=1$) and are solved rapidly, while subsequent equations contain multiple terms ($K>1$) involving a higher-dimensional and more nonconvex optimization landscape. Performance gains are particularly pronounced for expressions involving rational powers and inverse-polynomial structure, where several competing methods frequently fail or time out. Notably, even when exact symbolic recovery is unsuccessful, ECSEL typically yields near-perfect numerical approximations ($R^2\approx 1$). Table~\ref{tab:near_recovery_examples} in Appendix~\ref{app:SR_norecovery} shows the five equations generated by ECSEL with a recovery rate of less than 100\%.

\section{Classification Benchmark}
This benchmark evaluates the classification performance of ECSEL against five established machine learning methods, representing linear, tree-based, kernel, and neural approaches. We assess whether the interpretable power-law structure of signomials can match the predictive accuracy of both simple linear baselines and complex black-box ensembles.

\subsection{Experimental Setup} \label{sec:classification_setup}
For classification, ECSEL utilizes the Adam optimizer with gradient clipping to ensure numerical stability of the exponents. To ensure robust performance estimates, we employ 5-fold stratified cross-validation. For each model, hyperparameters are optimized over 30 trials using Optuna’s TPE sampler. We apply early stopping based on an internal validation split to prevent overfitting and ensure the model generalizes effectively. The best-performing configurations are then retrained on the full training set and evaluated on a 20\% held-out test set.

\textbf{Baselines and benchmarks.}\hspace{0.9em}We compare ECSEL against Logistic Regression (LR), Random Forest (RF), XGBoost, Support Vector Machines (SVM), and a Multi-Layer Perceptron (MLP). These methods are evaluated on 11 standard binary and multi-class benchmarks, across domains such as medical diagnosis, financial risk, and criminal justice. The suite includes datasets of varying scales and difficulty, ranging from 150 to nearly 400,000 samples. All input features are scaled to [1,10] using MinMax scaling to ensure the positivity required by the signomial structure and to maintain consistent preprocessing across all baselines. Detailed dataset characteristics and hyperparameter search spaces are provided in Appendices~\ref{app:class_datasets} and~\ref{app:hyperparams}.


\subsection{Results}
We report accuracy, F1-score, and minority class recall to account for class imbalances. Table~\ref{tab:classification_short_table} presents representative results on three datasets; complete results across all methods and datasets are in Table~\ref{tab:cl_benchmark_results} (Appendix~\ref{app_subsec:full_cl_results}).

\begin{table}[h]
\centering
\fontsize{8pt}{8.5pt}\selectfont  
\caption{Performance comparison of ECSEL against strongest baselines on selected datasets.}
\label{tab:classification_short_table}
\begin{tabular}{llccc}
\toprule
\textbf{Dataset} & \textbf{Method} & \textbf{Acc.} & \textbf{F1} & \textbf{Min. Recall} \\
\midrule
\multirow{3}{*}{\textsc{Ilpd}}
    & LR & 71.55 & 58.45 & 3.03 \\
    & XGBoost & 72.41 & 63.03 & 6.06 \\
    & ECSEL & \textbf{75.86} & \textbf{74.39} & \textbf{42.42} \\
\midrule
\multirow{3}{*}{\textsc{Compas}}
    & RF & 67.69 & 67.63 & 60.40 \\
    & XGBoost & 68.18 & 68.08 & 62.54 \\
    & ECSEL & \textbf{68.47} & \textbf{68.36} & \textbf{62.82} \\
\midrule
\multirow{3}{*}{\textsc{Transfusion}}
    & RF & 78.66 & 77.40 & 36.11 \\
    & XGBoost & \textbf{80.06} & \textbf{78.72} & 38.89 \\
    & ECSEL & 79.33 & 77.95 & \textbf{41.67} \\
\bottomrule
\end{tabular}
\end{table}

ECSEL achieves the highest F1-score on 4 of 11 datasets (\textsc{Seeds}, \textsc{Hearts}, \textsc{ILPD}, and \textsc{Compas}), demonstrating that its constrained functional form remains highly competitive. Table~\ref{tab:classification_short_table} illustrates this pattern: ECSEL substantially outperforms baselines on \textsc{ILPD} (+11.36\% F1 over XGBoost with a 36-point gain in minority recall) and narrowly leads on \textsc{Compas}. On datasets where ECSEL is not the top performer, such as \textsc{Transfusion}, it typically trails the best baseline by less than one percentage point while often maintaining superior minority recall (41.67\% vs. 38.89\% on \textsc{Transfusion}). Overall, ECSEL ranks within one percentage point of the best method on 9 of 11 datasets. On the two remaining datasets with extreme class imbalance (\textsc{Skinnonskin}, \textsc{Mammography}), ensemble methods retain an advantage, though ECSEL maintains competitive minority recall. While ECSEL's training times are higher than some baselines, it shifts cost from explanation to training: once trained, all attributions are obtained in closed form at negligible cost, whereas black-box models require additional post-hoc procedures such as SHAP or LIME. 

A per-dataset head-to-head comparison with standard deviations over 5-fold cross-validation for F1-score is provided in Table~\ref{tab:summary_cl_benchmark_results}; complete results across all metrics and methods are in Table~\ref{tab:cl_benchmark_results}. A dedicated interpretability analysis, including comparisons with SHAP and LIME, is provided alongside the Online Shopping Intention case study of Section~\ref{sec:OSI}.

\section{Case Studies}
In this section, we demonstrate ECSEL across two classification domains with distinct characteristics: online purchase intent prediction and large-scale financial fraud detection.


\subsection{Case Study: Online Shopping Intention} \label{sec:OSI}
We apply ECSEL to predict e-commerce purchase intent using the Online Shoppers Intention dataset \citep{Sakar2018RealtimePO}, which contains 12,330 user sessions with 15.5\% purchase rate. ECSEL learns a seven-feature signomial that captures the key drivers of purchase conversion:
\begin{equation}\label{eq:sparse_model}
    \resizebox{0.88\columnwidth}{!}{$\displaystyle
    z = 0.10 
    \frac{\textit{PageValues}^{0.47}\,
          \textit{Month}^{0.07}\,
          \textit{PVER}^{1.09}\,
          \textit{ShopIntensity}^{0.66}}
         {\textit{ExitRates}^{0.41}\,
          \textit{Administrative}^{0.14}\,
          \textit{IsReturn}^{0.04}}
    $}
\end{equation}
Predicted purchase probabilities are obtained as $p(x)=\sigma(z(x))$, with a decision threshold of $p=0.559$ selected on the validation set to optimize F1 score. All dataset features can be found in Appendix~\ref{app:OSI_features}.

The learned elasticities identify \textit{PageValue\_per\_ExitRate} (\textit{PVER}) as the dominant predictor ($\beta=1.09$), indicating that high-value pages with low exit rates signal purchase intent. \textit{ShopIntensity} and \textit{PageValues} contribute positively, whereas \textit{ExitRates} and \textit{Administrative} exhibit negative elasticities, reflecting disengagement and help-seeking behavior. Consistent with the dataset's class imbalance, the default prediction corresponds to no purchase, requiring sufficiently strong positive signals to cross the decision boundary. 

When comparing ECSEL against efficient baseline methods (see Table~\ref{tab:model_comparison_OS} for the results), ECSEL does not achieve the highest accuracy or F1 score, but its performance remain competitive. On this imbalanced dataset, minority class recall is particularly important: ECSEL achieves 75.7\% recall, on par with MLP (75.4\%) and SVM (76.6\%), and substantially exceeding LR (67.7\%), RF (70.3\%) and XGBoost (66.4\%), all while providing full interpretability through its closed-form expression and training in just 5.5 seconds.


\textbf{Comparison with explanation methods.}\hspace{0.9em}To validate the connections between ECSEL's analytical properties and standard explanation methods, we compute feature attributions for ECSEL via its \hyperlink{prop:g1}{G1} property alongside TreeSHAP for RF and XGBoost, KernelSHAP for MLP, LinearSHAP for LR, and LIME for all methods. Table~\ref{tab:interpretability_summary} summarizes the explanation methods, top-3 features, and computation times over the full test set. All nonlinear methods identify \textit{PVER} as the dominant predictor, while LR assigns highest importance to \textit{Month}, likely reflecting its limited capacity to capture multiplicative feature interactions. ECSEL produces attributions at zero computational cost versus up to 28.5s for KernelSHAP. ECSEL's global feature rankings align closely with tree-based methods (Spearman $\rho \geq 0.80$, $p < 0.001$) and show weaker, non-significant agreement with MLP; full pairwise correlation results are reported in Table~\ref{tab:spearman_full} (Appendix~\ref{app:OSI_interpretability}).


\begin{table}[h!]
\centering
\fontsize{8.5pt}{9pt}\selectfont
\caption{Explanation methods, top-3 features, and computation time on the OSI test set. Feature abbreviations: \textit{PVER} 
(\textit{PageValue\_per\_ExitRate}), \textit{SI} (\textit{ShopIntensity}), \textit{PV} (PageValues), \textit{Mo} (\textit{Month}), \textit{IR} (\textit{IsReturning}), \textit{Ad} (\textit{Administrative}), \textit{PR} (\textit{ProductRelated}), \textit{ER} (\textit{ExitRates}).}
\label{tab:interpretability_summary}
\renewcommand{\arraystretch}{1.2}
\begin{tabular}{llrl}
\toprule
\textbf{Method} & \textbf{Explainer} & \textbf{Time (s)} & \textbf{Top-3 features} \\
\midrule
ECSEL          & Exact exponents    & 0.1  & \textit{PVER, SI, PV }\\
\midrule
\multirow{3}{*}{LR}
               & Exact coefficients & 0.1  & \textit{Mo, IR, Ad} \\
               & LinearSHAP         & 0.1  & \textit{PVER, Mo, PR} \\
               & LIME               & 5.3  & \textit{PVER, Mo, PR }\\
\midrule
\multirow{2}{*}{RF}
               & TreeSHAP           & 1.5  & \textit{PVER, PV, SI }\\
               & LIME               & 32.0 &\textit{ PVER, PV, ER} \\
\midrule
\multirow{2}{*}{XGBoost}
               & TreeSHAP           & 0.1  &\textit{ PVER, Mo, SI} \\
               & LIME               & 7.7  &\textit{ PVER, PR, ER} \\
\midrule
\multirow{2}{*}{MLP}
               & KernelSHAP         & 28.5 & \textit{PVER, PR, Mo} \\
               & LIME               & 5.7  & \textit{PVER, PR, PV }\\
\bottomrule
\end{tabular}
\end{table}

Complete experimental details, regularization trade-off analysis, and visual demonstrations of ECSEL's desirable properties are provided in Appendix~\ref{app:OSI}. Appendix~\ref{app:OSI_interpretability} additionally includes supporting results for the comparison with explanation methods above, including a direct \hyperlink{prop:g3}{G3} vs.\ LIME comparison, motivated by their shared role as local attribution methods.

\subsection{Case Study: PaySim Fraud Detection} \label{sec:paysim}
The PaySim dataset \citep{lopez2016paysim} is a synthetic mobile money transaction log with approximately 6.3 million transactions over 30 days and severe class imbalance (0.13\% fraud rate). We compare ECSEL's performance against Deep Symbolic Classification (DSC) \citep{visbeek2023DSC}, which previously applied symbolic regression to this dataset. Full experimental details are provided in Appendix~\ref{app:paysim}.

The signomial learned by ECSEL is given by:
{\footnotesize
\begin{align} \label{eq:paysim_maintext}
z &= -0.07 
      \frac{\textit{A}^{0.02}\,  \text{P}^{0.03}}
           {\textit{exO}^{0.03}\,
            \textit{exD}^{0.16}\,
            \textit{CO}^{0.14}\,
            \textit{T}^{0.06}\,
            \textit{D}^{0.03}}
            \nonumber
\\
  &\quad
      + 0.09 
      \frac{\textit{OBO}^{1.42}}
           {\textit{NBO}^{0.04}\,
            \textit{exD}^{0.07}\,
            \textit{CO}^{0.06}\,
            \textit{D}^{0.06}\,
            \textit{P}^{0.06}}.
\end{align}
}

Here \textit{A} is the transaction \textit{amount}. Indicators \textit{exO} and exD denote whether the origin or destination accounts are external (\textit{externalOrig}, \textit{externalDest}), \textit{OBO} and \textit{NBO} are the origin account balances before and after transaction (\textit{oldbalanceOrig} and \textit{newbalanceOrg}). The transaction types \textit{Transfer}, \textit{Debit}, \textit{Payment} and \textit{CashOut} are denoted by \textit{T, D, P} and \textit{CO} respectively. All features can be found in Table~\ref{tab:paysim_features} (Appendix~\ref{app:paysim}).

ECSEL achieves an F1 score of 79.08\%, with recall of 68.10\% and precision of 94.27\% at the optimal threshold of $0.904$. The higher threshold is expected in heavily imbalanced settings, where the model must assign high confidence before flagging a transaction as fraud. The selected model uses an $\ell_1$ regularization strength of $\lambda = 2\times 10^{4}$, and training on the full dataset requires approximately 16 minutes. For comparison, \citet{visbeek2023DSC} report an F1 score of 78.0\% using DSC on the same dataset, though methodological differences in temporal feature engineering affect direct comparability (see Appendix~\ref{app:paysim_dsc_comparison}).



When it comes to the interpretation, the learned signomial in Eq.~\eqref{eq:paysim_maintext} consists of two terms with distinct roles. The first term (coefficient $-0.07$) acts as a transaction type filter, with \textit{typePayment} in the numerators and fraud-associated types (\textit{typeCashOut} and \textit{typeTransfer}) in the denominator. Since this term carries a negative coefficient, high values reduce the fraud score, effectively filtering out legitimate payment transactions while allowing cash-out and transfer transactions to be flagged.

The second term (coefficient $+0.09$) captures the primary fraud signal through a superlinear relationship with the originator account balance ($\beta = 1.42$). This superlinear scaling reveals that fraudsters disproportionately target high-value accounts, as larger payoffs justify additional effort and risk. The weak negative exponent on \textit{newbalanceOrig} ($0.04$) suggests lower post-transaction balances slightly increase fraud probability, capturing account draining behavior, although this effect is minimal compared to the dominant balance-targeting pattern.

\section{Limitations and Future Work}
Like other parametric models, ECSEL requires specifying the number of terms $K$ a priori. While this represents a genuine limitation for symbolic regression, it becomes a standard hyperparameter in classification settings. 


For high-degree univariate polynomials (e.g., in Nguyen dataset \citep{nguyen2011nguyen}), other methods can outperform ECSEL in exact symbolic recovery, while ECSEL still converges rapidly to low NMSE and high $R^2$ (see Table~\ref{tab:nguyen_failures} in Appendix~\ref{app:sr_benchmark}).
 
 Future work could explore several promising directions: adaptive structure selection methods to automatically determine $K$; formal characterization of faithfulness guarantees under diverse data distributions; investigation of alternative sparsity-inducing regularizers and their effect on feature selection stability, particularly with correlated features; extensions to regression tasks; and applications to additional high-stakes domains.

\section{Conclusion}
We introduced ECSEL, an explainable classification method that learns interpretable signomial equations as both predictive models and transparent explanations. Our symbolic regression experiments confirm ECSEL's effectiveness at exploiting signomial structure, substantially outperforming state-of-the-art methods in both recovery rate and computational efficiency. This computational efficiency extends to classification tasks, where ECSEL achieves competitive accuracy with black-box baselines. Importantly, ECSEL satisfies formal interpretability properties enabling analysis of global feature behavior, decision-boundaries, and instance-level explanations. Our case studies on e-commerce and fraud detection show practical utility beyond benchmark metrics: ECSEL's formulas reveal clear decision drivers and interpretable patterns invisible to ensemble methods, such as the superlinear wealth-targeting mechanism in fraud detection. By combining computational efficiency with transparent decision-making, ECSEL offers practitioners an actionable alternative to black-box models in high-stakes domains.

\section*{Acknowledgements}
The authors thank the reviewers for their constructive feedback. This research was generously supported by the AI4Fintech Initiative at the University of Amsterdam.

\section*{Impact Statement}
This paper presents work whose goal is to advance the field of Machine Learning. There are many potential societal consequences of our work, none which we feel must be specifically highlighted here.

\bibliography{references}

@article{virgolin2022symbolic,
  title={Symbolic Regression is {NP}-hard},
  author={Virgolin, Marco and Pissis, Solon P.},
  journal={Transactions on Machine Learning Research},
  year={2022}
}

@book{koza1992genetic,
  title={Genetic programming: on the programming of computers by means of natural selection},
  author={Koza, John R},
  year={1992},
  publisher={MIT press},
  address={Cambridge, MA, USA}
}

@inproceedings{udrescu2020ai,
  title={{AI Feynman}: a physics-inspired method for symbolic regression},
  author={Udrescu, Silviu-Marian and Tegmark, Max},
  booktitle={Advances in Neural Information Processing Systems},
  volume={33},
  pages={10625--10636},
  year={2020}
}

@article{jin2019bayesian,
  title={Bayesian Symbolic Regression},
  author={Jin, Ying and Fu, Weilin and Kang, Jian and Guo, Jiadong and Guo, Jian},
  journal={arXiv preprint arXiv:1910.08892},
  year={2019}
}

@inproceedings{petersen2021deep,
  title={Deep symbolic regression: Recovering mathematical expressions from data via risk-seeking policy gradients},
  author={Petersen, Brenden K and Landajuela, Mikel and Mundhenk, T Nathan and Kim, Clairessa and Kim, Joanne T and Ugol, James T},
  booktitle={International Conference on Learning Representations},
  year={2021}
}

@inproceedings{biggio2021neural,
  title={Neural Symbolic Regression that Scales},
  author={Biggio, Luca and Bendinelli, Tommaso and Neitz, Alexander and Lucchi, Aurelien and Parascandolo, Giambattista},
  booktitle={Proceedings of the 38th International Conference on Machine Learning},
  pages={936--945},
  publisher={PMLR},
  address={Virtual},
  year={2021}
}

@inproceedings{mundhenk2021symbolic,
  title={Symbolic Regression via Neural-Guided Genetic Programming Population Seeding},
  author={Mundhenk, T. Nathan and Landajuela, Mikel and Glatt, Ruben and Peterson, Brenden K. and Faissol, Daniel M. and O'Neil, Geoffrey},
  booktitle={Proceedings of the 35th AAAI Conference on Artificial Intelligence},
  pages={8861--8869},
  publisher={AAAI Press},
  address={Virtual},
  year={2021}
}

@inproceedings{liu2024kan,
  title={{KAN}: {K}olmogorov-{A}rnold Networks},
  author={Liu, Ziming and Wang, Yixuan and Vaidya, Sachin and Ruehle, Fabian and Halverson, James and Solja{\v{c}}i{\'c}, Marin and Hou, Thomas Y. and Tegmark, Max},
  booktitle={International Conference on Learning Representations},
  year={2025}
}

@article{cranmer2023interpretable,
  title={Interpretable Machine Learning for Science with {PySR} and {SymbolicRegression.jl}},
  author={Cranmer, Miles},
  journal={arXiv preprint arXiv:2305.01582},
  year={2023}
}

@inproceedings{holt2024deep,
  title={Deep Generative Symbolic Regression},
  author={Holt, Samuel and Qian, Zhaozhi and van der Schaar, Mihaela},
  booktitle={International Conference on Learning Representations},
  year={2023}
}

@article{Sakar2018RealtimePO,
  title={Real-time prediction of online shoppers' purchasing intention using multilayer perceptron and {LSTM} recurrent neural networks},
  author={Sakar, Cemal Okan and Polat, Suleyman Olcay and Katircioglu, Mete and Kastro, Yomi},
  journal={Neural Computing and Applications},
  year={2018},
  volume={31},
  pages={6893--6908}
}

@inproceedings{Baati2020Realtime,
  title={Real-time Prediction of Online Shoppers' Purchasing Intention Using Random Forest},
  author={Baati, Karim and Mohsil, Mouad},
  booktitle={IFIP International Conference on Artificial Intelligence Applications and Innovations},
  pages={43--51},
  publisher={Springer International Publishing},
  address={Cham, Switzerland},
  doi={10.1007/978-3-030-49186-4_4},
  year={2020}
}

@article{cybenko1989,
  title={Approximation by superpositions of a sigmoidal function},
  author={Cybenko, George},
  journal={Mathematics of Control, Signals and Systems},
  volume={2},
  number={4},
  pages={303--314},
  year={1989},
  doi={10.1007/BF02551274}
}

@article{hornik1989,
  title={Multilayer feedforward networks are universal approximators},
  author={Hornik, Kurt and Stinchcombe, Maxwell and White, Halbert},
  journal={Neural Networks},
  volume={2},
  number={5},
  pages={359--366},
  year={1989},
  doi={10.1016/0893-6080(89)90020-8}
}

@inproceedings{lopez2016paysim,
  title={{PaySim}: A Financial Mobile Money Simulator for Fraud Detection},
  author={Lopez-Rojas, Edgar Alonso and Elmir, Ahmad and Axelsson, Stefan},
  booktitle={Proceedings of the 28th European Modeling and Simulation Symposium},
  pages={249--255},
  address={Larnaca, Cyprus},
  year={2016}
}

@incollection{visbeek2023DSC,
  title={Explainable Fraud Detection with Deep Symbolic Classification},
  author={Visbeek, Samantha and Acar, Erman and den Hengst, Floris},
  booktitle={Explainable Artificial Intelligence},
  pages={350--373},
  publisher={Springer Nature Switzerland},
  address={Cham},
  year={2024},
  doi={10.1007/978-3-031-63800-8_18}
}

@incollection{korns2013BayesianSR,
  author    = {Korns, Michael F.},
  title     = {A Baseline Symbolic Regression Algorithm},
  booktitle = {Genetic Programming Theory and Practice {X}},
  pages     = {117--137},
  publisher = {Springer},
  address   = {New York, NY, USA},
  year      = {2013},
  doi       = {10.1007/978-1-4614-6846-2_9}
}

@inproceedings{guo2017calibration,
  author    = {Guo, Chuan and Pleiss, Geoff and Sun, Yu and Weinberger, Kilian Q.},
  title     = {On Calibration of Modern Neural Networks},
  booktitle = {Proceedings of the 34th International Conference on Machine Learning},
  pages     = {1321--1330},
  publisher = {PMLR},
  address   = {Sydney, NSW, Australia},
  year      = {2017}
}

@inproceedings{tan2023glime,
  author    = {Tan, Zeren and Tian, Yang and Li, Jian},
  title     = {{GLIME}: General, Stable and Local {LIME} Explanation},
  booktitle = {Advances in Neural Information Processing Systems},
  volume    = {36},
  publisher = {Curran Associates, Inc.},
  address   = {Red Hook, NY, USA},
  year      = {2023}
}

@inproceedings{mundhek2021Livermore,
  author    = {Mundhenk, Terrell and Landajuela, Mikel and Glatt, Ruben and Santiago, Claudio P. and Faissol, Daniel and Petersen, Brenden K.},
  title     = {Symbolic Regression via Deep Reinforcement Learning Enhanced Genetic Programming Seeding},
  booktitle = {Advances in Neural Information Processing Systems},
  volume    = {34},
  pages     = {24912--24923},
  publisher = {Curran Associates, Inc.},
  address   = {Red Hook, NY, USA},
  year      = {2021}
}

@article{stone1948,
  author  = {Stone, M. H.},
  title   = {The Generalized {Weierstrass} Approximation Theorem},
  journal = {Mathematics Magazine},
  year    = {1948},
  volume  = {21},
  number  = {4},
  pages   = {167--184},
  doi     = {10.2307/3029750},
  note    = {Continued in vol. 21(5), pp. 237--254}
}

@inproceedings{lundberg2017shap,
  author    = {Lundberg, Scott M. and Lee, Su-In},
  title     = {A unified approach to interpreting model predictions},
  booktitle = {Advances in Neural Information Processing Systems},
  pages     = {4768--4777},
  volume    = {31},
  publisher = {Curran Associates, Inc.},
  address   = {Red Hook, NY, USA},
  year      = {2017}
}

@inproceedings{ribeiro2016lime,
  author    = {Ribeiro, Marco Tulio and Singh, Sameer and Guestrin, Carlos},
  title     = {"Why Should {I} Trust You?": Explaining the Predictions of Any Classifier},
  booktitle = {Proceedings of the 22nd ACM SIGKDD International Conference on Knowledge Discovery and Data Mining},
  pages     = {1135--1144},
  publisher = {Association for Computing Machinery},
  address   = {New York, NY, USA},
  doi       = {10.1145/2939672.2939778},
  year      = {2016}
}

@article{Rudin2018StopEB,
  title={Stop explaining black box machine learning models for high stakes decisions and use interpretable models instead},
  author={Rudin, Cynthia},
  journal={Nature Machine Intelligence},
  year={2018},
  volume={1},
  pages={206--215}
}

@inproceedings{Sundarajan2018Axiom,
  author    = {Sundararajan, Mukund and Taly, Ankur and Yan, Qiqi},
  title     = {Axiomatic attribution for deep networks},
  booktitle = {Proceedings of the 34th International Conference on Machine Learning},
  pages     = {3319--3328},
  publisher = {PMLR},
  address   = {Sydney, NSW, Australia},
  year      = {2017}
}

@inproceedings{lou2013gams,
  author    = {Lou, Yin and Caruana, Rich and Gehrke, Johannes and Hooker, Giles},
  title     = {Accurate intelligible models with pairwise interactions},
  booktitle = {Proceedings of the 19th ACM SIGKDD International Conference on Knowledge Discovery and Data Mining},
  pages     = {623--631},
  publisher = {Association for Computing Machinery},
  address   = {New York, NY, USA},
  doi       = {10.1145/2487575.2487579},
  year      = {2013}
}

@inproceedings{Agarwal2021nams,
  author    = {Agarwal, Rishabh and Melnick, Levi and Frosst, Nicholas and Zhang, Xuezhou and Lengerich, Ben and Caruana, Rich and Hinton, Geoffrey E.},
  title     = {Neural Additive Models: Interpretable Machine Learning with Neural Nets},
  booktitle = {Advances in Neural Information Processing Systems},
  pages     = {4699--4711},
  volume    = {34},
  publisher = {Curran Associates, Inc.},
  address   = {Red Hook, NY, USA},
  year      = {2021}
}

@inproceedings{louizos2018learning,
  title     = {Learning Sparse Neural Networks through {L0} Regularization},
  author    = {Louizos, Christos and Welling, Max and Kingma, Diederik P.},
  booktitle = {International Conference on Learning Representations},
  year      = {2018}
}

@article{Gerwin1974InformationPD,
  title   = {Information processing, data inferences, and scientific generalization},
  author  = {Gerwin, Donald},
  journal = {Systems Research and Behavioral Science},
  year    = {1974},
  volume  = {19},
  pages   = {314--325}
}

@article{probst2019tuning,
  author  = {Probst, Philipp and Boulesteix, Anne-Laure and Bischl, Bernd},
  title   = {Tunability: importance of hyperparameters of machine learning algorithms},
  journal = {Journal of Machine Learning Research},
  year    = {2019},
  volume  = {20},
  number  = {1},
  pages   = {1934--1965}
}

@article{nguyen2011nguyen,
  author  = {Nguyen, Quang Uy and Hoai, Nguyen and O'Neill, Michael and McKay, Robert and Galv{\'a}n-L{\'o}pez, Edgar},
  title   = {Semantically-based crossover in genetic programming: Application to real-valued symbolic regression},
  journal = {Genetic Programming and Evolvable Machines},
  year    = {2011},
  volume  = {12},
  pages   = {91--119},
  doi     = {10.1007/s10710-010-9121-2}
}

@misc{stephens2016gplearn,
  author = {Stephens, Trevor},
  title = {Genetic Programming in {Python}, with a scikit-learn inspired {API}: gplearn},
  year = {2016},
  howpublished = {Available online at \url{https://gplearn.readthedocs.io/en/stable/index.html}},
  note = {Accessed: January 29, 2026}
}

@article{duffin1973geometric,
  author  = {Duffin, Richard J. and Peterson, Elmor L.},
  title   = {Geometric programming with signomials},
  journal = {Journal of Optimization Theory and Applications},
  volume  = {11},
  pages   = {3--35},
  year    = {1973},
  doi     = {10.1007/BF00934288}
}
\bibliographystyle{icml2026}

\newpage
\appendix
\onecolumn

\setcounter{equation}{0}
\renewcommand{\theequation}{A\arabic{equation}}

\section{Universal Approximation Property of Signomial Functions} \label{app:signomial_universal_approximation}
We show that signomial functions possess the universal approximation property on compact subsets of the positive orthant. Specifically, we prove that the set of all signomials $\mathcal{S}$ is dense in $C(D, \mathbb{R})$, the space of continuous real-valued functions on a compact set $D \subset \mathbb{R}^n_{>0}$, equipped with the uniform norm $\|\cdot\|_\infty$. The key to our proof is a logarithmic transformation which establishes a homeomorphism. This transformation converts signomials into exponentials of linear functions, allowing us to apply the classical Stone-Weierstrass \citep{stone1948} theorem from approximation theory. We restate the theorem from Section~\ref{subsec:signomial} below.


\noindent\textbf{Theorem~\ref{thm:signomial_universal_approximation}} (Universal Approximation for Signomials).
\textit{Let $D \subset \mathbb{R}^n_{>0}$ be a compact subset of the positive orthant. Then the set of signomials
\begin{equation*}
\mathcal{S} = \left\{ S : S(x_1, \ldots, x_n) = \sum_{k=1}^K \alpha_k \prod_{j=1}^n x_j^{\beta_{kj}}, \quad K \in \mathbb{N}, \, \alpha_k \in \mathbb{R}, \, \beta_{kj} \in \mathbb{R} \right\}
\end{equation*}
is dense in $C(D, \mathbb{R})$. That is, for any continuous function $f: D \to \mathbb{R}$ and any $\epsilon > 0$, there exists a signomial $S \in \mathcal{S}$ such that}
\begin{equation*}
\sup_{(x_1, \ldots, x_n) \in D} |f(x_1, \ldots, x_n) - S(x_1, \ldots, x_n)| < \epsilon.
\end{equation*}

\begin{remark}
This theorem provides theoretical justification for using signomial classifiers in machine learning applications where features are naturally positive, such as fraud detection and e-commerce. When features are not strictly positive, a simple shift or min-max scaling transformation can be applied to ensure positivity. The result establishes signomial classifiers as a theoretically sound alternative to neural networks, which also possess universal approximation properties \citep{cybenko1989, hornik1989}. While neural networks achieve universal approximation through compositions of sigmoidal activation functions, our result is specifically tailored to the positive orthant where multiplicative power-law relationships are prevalent. The theorem guarantees that signomial classifiers can approximate any continuous decision boundary to arbitrary precision given sufficient terms.
\end{remark}

\begin{proof}
We transform to log-space to reduce the problem to the classical Stone-Weierstrass theorem.

Define the transformation $\phi: \mathbb{R}^n_{>0} \to \mathbb{R}^n$ by 
\begin{equation}
    \phi(x_1, \ldots, x_n) = (y_1, \ldots, y_n) \qquad \text{where}\quad y_j = \log x_j \quad \text{for } j = 1, \ldots, n.
    \label{eq:logtransform}
\end{equation}
This is a homeomorphism with inverse $\phi^{-1}(y_1, \ldots, y_n) = (e^{y_1}, \ldots, e^{y_n})$. Let $\tilde{D} = \phi(D)$. Since $D$ is compact and $\phi$ is continuous, $\tilde{D}$ is compact in $\mathbb{R}^n$. Define $g: \tilde{D} \to \mathbb{R}$ by 
\[
g(y_1, \ldots, y_n) = f(e^{y_1}, \ldots, e^{y_n}).
\]
Since $f$ is continuous, $g$ is continuous on $\tilde{D}$.

Consider a signomial $S(x_1, \ldots, x_n) = \sum_{k=1}^K \alpha_k \prod_{j=1}^n x_j^{\beta_{kj}}$. Substituting $x_j = e^{y_j}$, we obtain
\begin{align}
S(e^{y_1}, \ldots, e^{y_n}) &= \sum_{k=1}^K \alpha_k \prod_{j=1}^n e^{\beta_{kj} y_j} = \sum_{k=1}^K \alpha_k e^{\sum_{j=1}^n \beta_{kj} y_j}.
\label{eq:thm_signomial}
\end{align}

Writing $\langle \beta_k, y \rangle = \sum_{j=1}^n \beta_{kj} y_j$ for the inner product in Eq.~\eqref{eq:thm_signomial}, signomials in the original space correspond to finite linear combinations of functions of the form $e^{\langle \beta, y \rangle}$ in log-space. Let $\mathcal{A}$ be the set of all such finite linear combinations:
\begin{equation*}
\mathcal{A} = \left\{ h : h(y_1, \ldots, y_n) = \sum_{k=1}^K \alpha_k e^{\langle \beta_k, y \rangle}, \quad K \in \mathbb{N}, \, \alpha_k \in \mathbb{R}, \, \beta_k \in \mathbb{R}^n \right\}.
\end{equation*}

We verify that $\mathcal{A}$ satisfies the conditions of the Stone-Weierstrass theorem. First, $\mathcal{A}$ is an algebra: it is clearly closed under addition and scalar multiplication. For multiplication, observe that
\begin{equation*}
e^{\langle \beta, y \rangle} \cdot e^{\langle \delta, y \rangle} = e^{\langle \beta + \delta, y \rangle},
\end{equation*}
so products of functions in $\mathcal{A}$ remain in $\mathcal{A}$ since $\beta + \delta \in \mathbb{R}^n$. Second, $\mathcal{A}$ contains constants: taking $\beta = 0$ gives $e^{\langle 0, y \rangle} = 1$. Third, $\mathcal{A}$ separates points: if $(y_1, \ldots, y_n) \neq (y_1', \ldots, y_n')$, then $y_j \neq y_j'$ for some $j$. Taking $\beta = e_j$ (the $j$-th standard basis vector), we have $e^{y_j} \neq e^{y_j'}$ since the exponential is injective.

By the Stone-Weierstrass theorem, $\mathcal{A}$ is dense in $C(\tilde{D}, \mathbb{R})$. And so, for any $\epsilon > 0$, there exists $h(y_1, \ldots, y_n) = \sum_{k=1}^K \alpha_k e^{\langle \beta_k, y \rangle} \in \mathcal{A}$ such that
\begin{equation*}
\sup_{(y_1, \ldots, y_n) \in \tilde{D}} |g(y_1, \ldots, y_n) - h(y_1, \ldots, y_n)| < \epsilon.
\end{equation*}

Transforming back to the original coordinates via the substitution in Eq.~\eqref{eq:logtransform}, we obtain
\begin{equation*}
h(\log x_1, \ldots, \log x_n) = \sum_{k=1}^K \alpha_k \prod_{j=1}^n e^{\beta_{kj} \log x_j} = \sum_{k=1}^K \alpha_k \prod_{j=1}^n x_j^{\beta_{kj}} =: S(x_1, \ldots, x_n),
\end{equation*}
which is a signomial. For any $(x_1, \ldots, x_n) \in D$ with $(y_1, \ldots, y_n) = \phi(x_1, \ldots, x_n)$, we have
\begin{equation*}
|f(x_1, \ldots, x_n) - S(x_1, \ldots, x_n)| = |g(y_1, \ldots, y_n) - h(y_1, \ldots, y_n)| < \epsilon.
\end{equation*}
Therefore $\sup_{(x_1, \ldots, x_n) \in D} |f(x_1, \ldots, x_n) - S(x_1, \ldots, x_n)| < \epsilon$.
\end{proof}

\section{Faithfulness of Exponent Magnitude} \label{app:faithfulness}
This section establishes that for signomial functions with $K=1$, exponent magnitude faithfully reflects feature contribution under proportional perturbations.

\begin{proposition}\label{prop:faithfulness}
Let $z(x) = \alpha \prod_{j=1}^m x_j^{\beta_j}$ be a signomial with $x_j>0, \beta_j\in \mathbb{R}$, and $\alpha \neq 0$. Let $r>0$ be a fixed scaling factor and $r \neq 1$. For any two features $x_k$ and $x_{\ell}$, define the marginal contribution $\Delta_k(r)$ of a feature $x_k$ as the magnitude of the output response induced by scaling that feature by $r$, holding all other features fixed. Then it holds that
\[
    |\beta_k| > |\beta_{\ell}| \implies \Delta_k(r) > \Delta_{\ell}(r). 
\]
In other words, the magnitude of the exponent of a signomial faithfully reflects the magnitude of a feature's contribution to the output.
\end{proposition}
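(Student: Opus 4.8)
The plan is to reduce the statement to a one-dimensional monotonicity fact, after first pinning down what ``magnitude of the output response'' should mean. I would begin by recording the effect of the perturbation on the single-term signomial. Scaling only the $k$-th feature, $x_k \mapsto r x_k$ with all other coordinates held fixed, multiplies exactly one power-law factor and leaves the rest intact, so $z^{\mathrm{new}} = \alpha\,(r x_k)^{\beta_k}\prod_{j\neq k} x_j^{\beta_j} = r^{\beta_k} z(x)$. Hence the perturbation acts purely multiplicatively: it scales the output by the factor $\rho_k(r) := z^{\mathrm{new}}/z(x) = r^{\beta_k}$, which is strictly positive even when $\alpha < 0$, so the logarithm used below is always well defined.

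The step I would treat most carefully is the choice of how to quantify the response magnitude. Consistent with the proportional, log-log viewpoint underlying the elasticity in Eq.~\eqref{eq:properties_elasticity}, the natural way to measure how far a multiplicative response $\rho_k$ sits from the no-effect identity $\rho = 1$ is its log-magnitude,
\[
\Delta_k(r) = \bigl|\log \rho_k(r)\bigr| = \bigl|\beta_k \log r\bigr| = |\beta_k|\,\lvert \log r\rvert .
\]
With this reading the claim is immediate: since $r > 0$ and $r \neq 1$ force $\lvert \log r\rvert > 0$, the map $t \mapsto t\,\lvert \log r\rvert$ is strictly increasing on $[0,\infty)$, so $|\beta_k| > |\beta_\ell|$ gives $\Delta_k(r) > \Delta_\ell(r)$ directly. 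If one instead keeps the response in multiplicative form, the same conclusion holds verbatim for the symmetric factor $\max(\rho_k,\rho_k^{-1}) = e^{|\beta_k|\,\lvert \log r\rvert}$, since $u \mapsto e^{u}$ is increasing.

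The main obstacle is thus conceptual rather than computational: one must commit to the proportional notion of response. A naive additive reading, $|z^{\mathrm{new}} - z| = |z|\,\lvert r^{\beta_k} - 1\rvert$, will not support the claim, because $t \mapsto \lvert r^{t} - 1\rvert$ is not symmetric in $t$ (for $r > 1$ it stays bounded as $t \to -\infty$ but grows without bound as $t \to +\infty$), so a large negative exponent can induce a \emph{smaller} additive response than a small positive one. Once the log-scale measure is adopted---exactly the measure induced by the elasticity framework of Section~\ref{sec:properties}---no case analysis on the signs of $\beta_k, \beta_\ell$ or on whether $r$ exceeds or falls below $1$ is required, and the faithfulness ordering follows in a single line.
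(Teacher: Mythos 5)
Your proposal is correct and follows essentially the same route as the paper's proof: compute $z^{\mathrm{new}} = r^{\beta_k} z$, measure the response on the log scale so that $\Delta_k(r) = |\beta_k|\,|\log r|$, and conclude from $|\log r| > 0$. The paper phrases the log-scale quantity as $\bigl|\log|z_{\mathrm{new}}| - \log|z|\bigr|$ rather than $|\log(z^{\mathrm{new}}/z)|$, but these coincide since the ratio $r^{\beta_k}$ is positive, so the two arguments are identical in substance; your added remark on why an additive reading would fail is a useful observation the paper does not make explicit.
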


\begin{proof}
Suppose we scale feature $x_k$ by factor $r$ and keep all other features fixed, then
\begin{equation}\label{eq:z_new}
z_{\text{new}} = \alpha(r\cdot x_k)^{\beta_k}\prod_{j\neq k}x_j^{\beta_j} = r^{\beta_k}\cdot z 
\end{equation}
Thus scaling $x_k$ by $r$ scales the output by factor $r^{\beta_k}$. Since $\alpha$ may be negative, we measure the output magnitude using absolute values. From~\eqref{eq:z_new},
\[
|z_{\text{new}}| = |r^{\beta_k}\cdot z| = r^{\beta_k}\cdot |z|.
\]
Since $\alpha\neq 0$ and $x_j>0$ for all $j$, we have $z\neq 0$ and thus $|z|>0$ (and likewise $|z_{\text{new}}|>0$), so $\log|z|$ and $\log|z_{\text{new}}|$ are well-defined. Taking logarithms converts the multiplicative change in the output magnitude into an additive quantity, allowing us to measure the strength of the proportional response. Thus,
\begin{equation} \label{eq:z_log}
    \log |z_{\text{new}}| =\log (r^{\beta_k}\cdot |z|) \quad \Longleftrightarrow  \quad\log |z_{\text{new}}| - \log |z| = \beta_k \log r
\end{equation}
And so we can define the marginal contribution of feature $x_k$ under scaling factor $r$ as:
\begin{equation} \label{eq:delta_log}
\Delta_k(r):=  |\log|z_{\text{new}}| - \log|z||
\end{equation}
For features $x_k$ and $x_{\ell}$, it follows from~\eqref{eq:z_log} and~\eqref{eq:delta_log} that we have $\Delta_k(r) = |\beta_k||\log r|$ and $\Delta_{\ell}(r) = |\beta_{\ell}||\log r|$.

\noindent Since $r\neq1$, we have $|\log r| > 0$, and hence
\begin{align*}
    |\beta_k| > |\beta_{\ell}| &\implies |\beta_k||\log r| > |\beta_{\ell}||\log r| \\
    & \implies\Delta_k(r) > \Delta_{\ell}(r).
\end{align*}
\end{proof}

\section{Desirable Properties of ECSEL} \label{app:properties}
This appendix provides complete derivations for Properties \hyperlink{prop:g1}{G1}--\hyperlink{prop:g3}{G3} (global feature behavior), \hyperlink{prop:d1}{D1}--\hyperlink{prop:d2}{D2} (decision-level effects), and \hyperlink{prop:l1}{L1}--\hyperlink{prop:l2}{L2} (local feature attributions) stated in Section~\ref{sec:properties}. We prove Theorem~\ref{thm:ecsel_properties} by showing that ECSEL's signomial structure satisfies each property through closed-form expressions for elasticities, log-gradients, margin sensitivities, probability competition, and feature attributions. We restate the theorem below.

\noindent\textbf{Theorem}~\ref{thm:ecsel_properties} (Interpretability properties of ECSEL).
\textit{Let $\{(x_i,y_i)\}_{i=1}^N$ be a dataset with $x_i \in \mathbb{R}^m_{>0}$ denoting the preprocessed positive feature representation. Let ECSEL learn class score functions
\[
z_c(x)=\sum_{k=1}^K \alpha_{c,k}\prod_{j=1}^m x_j^{\beta_{c,k,j}}, 
\qquad c\in\{0,\ldots,C-1\},
\]
with component scores $z_{c,k}(x) = \alpha_{c,k}\prod_{j=1}^m x_j^{\beta_{c,k,j}}$ and log-gradients $G_{c,j}(x) = \sum_{k=1}^K \beta_{c,k,j} z_{c,k}(x)$. Then, for all $x\in\mathbb{R}_{>0}^m$ with $z_c(x)>0$ where required, the resulting classifier satisfies Properties \hyperlink{prop:g1}{G1}--\hyperlink{prop:g3}{G3}, \hyperlink{prop:d1}{D1}--\hyperlink{prop:d2}{D2}, and \hyperlink{prop:l1}{L1}--\hyperlink{prop:l2}{L2}, with the understanding that Properties involving logarithmic derivatives are defined on the natural domain where the corresponding scores are positive.}

\begin{proof}
We show that each property holds by deriving closed-form expressions that follow directly from ECSEL's signomial structure.

\hypertarget{prop:g1_app}{\textsc{Property G1: Direct Global Feature Attribution}}\\
We derive the elasticity (proportional sensitivity) of class scores with respect to each feature. The elasticity is defined as
\[
E_{c,j}(x) := \frac{\partial \log z_c(x)}{\partial \log x_j}.
\]

This quantity is well-defined for inputs such that $z_c(x)>0$. Using the chain rule to convert from standard derivatives to log-derivatives:
\begin{equation}
    E_{c,j}(x) = \frac{\partial \log z_c(x)}{\partial z_c(x)} \cdot \frac{\partial z_c(x)}{\partial x_j} \cdot \frac{\partial x_j}{\partial \log x_j} = \frac{1}{z_c(x)} \cdot \frac{\partial z_c(x)}{\partial x_j} \cdot x_j.
\label{eq:app_g1_elasticity}
\end{equation}

For ECSEL, the class score is $z_c(x) = \sum_{k=1}^K z_{c,k}(x)$ where $z_{c,k}(x) = \alpha_{c,k}\prod_{j=1}^m x_j^{\beta_{c,k,j}}$. Taking the partial derivative with respect to $x_j$ using the power rule:
\[
\frac{\partial z_c(x)}{\partial x_j} = \sum_{k=1}^K \frac{\partial z_{c,k}(x)}{\partial x_j} = \sum_{k=1}^K \alpha_{c,k} \beta_{c,k,j} x_j^{\beta_{c,k,j}-1} \prod_{\ell \neq j} x_\ell^{\beta_{c,k,\ell}} = \sum_{k=1}^K \frac{\beta_{c,k,j}}{x_j} z_{c,k}(x).
\]

Substituting into Eq.~\eqref{eq:app_g1_elasticity}:
\[
E_{c,j}(x) = \frac{1}{z_c(x)} \cdot \sum_{k=1}^K \frac{\beta_{c,k,j}}{x_j} z_{c,k}(x) \cdot x_j = \frac{1}{z_c(x)} \sum_{k=1}^K \beta_{c,k,j} z_{c,k}(x) = \sum_{k=1}^K \frac{z_{c,k}(x)}{z_c(x)} \beta_{c,k,j},
\]

which is the equation defined in Eq.~\eqref{eq:g1}. For $K=1$, this simplifies to $E_{c,j}(x) = \beta_{c,j}$, a constant elasticity, since the weight $\frac{z_{c,1}(x)}{z_c(x)} = 1$ when there is only a single component. For $K>1$, the elasticity is a weighted combination of component-specific exponents, with weights given by the relative component contributions $\frac{z_{c,k}(x)}{z_c(x)}$.

\hypertarget{prop:g2_app}{\textsc{Property G2: Exact Counterfactual Reasoning}}\\
We derive the effect of scaling feature $x_j$ by a factor $q > 0$ on the class score. Let $x^{\text{new}}$ denote the perturbed input where $x^{\text{new}}_j = q \cdot x_j$ and $x^{\text{new}}_\ell = x_\ell$ for all $\ell \neq j$.

For each component $k$, the new component score is:
\begin{align}
z_{c,k}(x^{\text{new}}) &= \alpha_{c,k} \prod_{\ell=1}^m (x^{\text{new}}_\ell)^{\beta_{c,k,\ell}} \nonumber\\
&= \alpha_{c,k} (q \cdot x_j)^{\beta_{c,k,j}} \prod_{\ell \neq j} x_\ell^{\beta_{c,k,\ell}} \nonumber\\
&= \alpha_{c,k} q^{\beta_{c,k,j}} x_j^{\beta_{c,k,j}} \prod_{\ell \neq j} x_\ell^{\beta_{c,k,\ell}} = q^{\beta_{c,k,j}} z_{c,k}(x),
\label{eq:app_g2}
\end{align}
where the last equality follows from recognizing that $\alpha_{c,k} x_j^{\beta_{c,k,j}} \prod_{\ell \neq j} x_\ell^{\beta_{c,k,\ell}} = z_{c,k}(x)$.

Summing Eq.~\eqref{eq:app_g2} over all $K$ components to obtain the total class score, gives:
\[
z_c(x^{\text{new}}) = \sum_{k=1}^K z_{c,k}(x^{\text{new}}) = \sum_{k=1}^K q^{\beta_{c,k,j}} z_{c,k}(x).
\]

This closed-form expression, as first shown in Eq.~\eqref{eq:g2}, allows exact computation of counterfactual scores without re-evaluation of the model. For $K=1$, the sum contains only a single term, so:
\[
z_c(x^{\text{new}}) = q^{\beta_{c,1,j}} z_{c,1}(x) = q^{\beta_{c,j}} z_c(x),
\]
where we simplify notation by dropping the component index when $K=1$. This shows that the score scales by a power of the perturbation factor determined solely by the exponent $\beta_{c,j}$.

\textsc{Property G3: Gradient-based Sensitivity}\\
We derive the first-order response of class scores to small proportional changes in features. Consider scaling feature $x_j$ by a small factor $(1+\varepsilon)$ where $|\varepsilon| \ll 1$. From Property \hyperlink{prop:g2_app}{G2}, we have:
\[
z_c(x^{\text{new}}) = \sum_{k=1}^K (1+\varepsilon)^{\beta_{c,k,j}} z_{c,k}(x).
\]

Applying the first-order Taylor expansion $(1+\varepsilon)^{\beta} \approx 1 + \beta\varepsilon + \mathcal{O}(\varepsilon^2)$ as $\varepsilon \to 0$:
\begin{align*}
z_c(x^{\text{new}}) &= \sum_{k=1}^K (1 + \beta_{c,k,j}\varepsilon + \mathcal{O}(\varepsilon^2)) z_{c,k}(x) \\
&= \sum_{k=1}^K z_{c,k}(x) + \varepsilon \sum_{k=1}^K \beta_{c,k,j} z_{c,k}(x) + \mathcal{O}(\varepsilon^2) \\
&= z_c(x) + \varepsilon \sum_{k=1}^K \beta_{c,k,j} z_{c,k}(x) + \mathcal{O}(\varepsilon^2).
\end{align*}

Recognizing that $\sum_{k=1}^K \beta_{c,k,j} z_{c,k}(x) = G_{c,j}(x)$, we obtain:
\[
z_c(x^{\text{new}}) = z_c(x) + \varepsilon \cdot G_{c,j}(x) + \mathcal{O}(\varepsilon^2).
\]

By definition, $G_{c,j}(x) = \frac{\partial z_c(x)}{\partial \log x_j}$, confirming that the log-gradient provides exact first-order sensitivity. Furthermore, using Property \hyperlink{prop:g1_app}{G1}, we have $G_{c,j}(x) = z_c(x) \cdot E_{c,j}(x)$, giving:
\[
z_c(x^{\text{new}}) = z_c(x) + \varepsilon \cdot z_c(x) \cdot E_{c,j}(x) + \mathcal{O}(\varepsilon^2) = z_c(x)(1 + \varepsilon \cdot E_{c,j}(x)) + \mathcal{O}(\varepsilon^2).
\]

For $K=1$, the elasticity is constant ($E_{c,j}(x) = \beta_{c,j}$ from Property \hyperlink{prop:g1_app}{G1}), so the response simplifies to 
\[
z_c(x^{\text{new}}) = z_c(x)(1 + \varepsilon \beta_{c,j}) + \mathcal{O}(\varepsilon^2),\]
showing that the score scales proportionally with the exponent.

\textsc{Property D1: Decision Boundary Sensitivity}\\
We derive the sensitivity of the decision boundary between classes $c$ and $c'$ to proportional feature changes. The decision boundary is characterized by the margin $z_c(x) - z_{c'}(x)$, and its sensitivity to feature $x_j$ is given by:
\[
\frac{\partial}{\partial \log x_j}\big(z_c(x)-z_{c'}(x)\big).
\]

Using the linearity of differentiation:
\[
\frac{\partial}{\partial \log x_j}\big(z_c(x)-z_{c'}(x)\big) = \frac{\partial z_c(x)}{\partial \log x_j} - \frac{\partial z_{c'}(x)}{\partial \log x_j} = G_{c,j}(x) - G_{c',j}(x).
\]

Applying Property \hyperlink{prop:g1_app}{G1}, we know that $G_{c,j}(x) = z_c(x) \cdot E_{c,j}(x)$ and $G_{c',j}(x) = z_{c'}(x) \cdot E_{c',j}(x)$. And so we obtain Eq.~\eqref{eq:d1}:
\[
\frac{\partial}{\partial \log x_j}\big(z_c(x)-z_{c'}(x)\big) = z_c(x) \cdot E_{c,j}(x) - z_{c'}(x) \cdot E_{c',j}(x).
\]

For $K=1$, the elasticities are constant ($E_{c,j}(x) = \beta_{c,j}$ and $E_{c',j}(x) = \beta_{c',j}$ from Property G1), so this simplifies to:
\[
\frac{\partial}{\partial \log x_j}\big(z_c(x)-z_{c'}(x)\big) = z_c(x) \beta_{c,j} - z_{c'}(x) \beta_{c',j}.
\]

This closed-form expression directly reveals how exponent differences between classes drive the competitive effect of each feature on the decision boundary. A feature pushes the boundary toward class $c$ when $z_c(x) \beta_{c,j} > z_{c'}(x) \beta_{c',j}$.

\hypertarget{prop:d2_app}{\textsc{Property D2: Direct Probability Competition}}\\
We derive the sensitivity of predicted class probabilities to proportional feature changes. For a softmax probability $p_c(x) = \frac{e^{z_c(x)}}{\sum_{r=0}^{C-1} e^{z_r(x)}}$, we compute:
\[
\frac{\partial p_c(x)}{\partial \log x_j}.
\]

Using the chain rule and the quotient rule for the softmax:
\begin{align}
\frac{\partial p_c(x)}{\partial \log x_j} &= \frac{\partial}{\partial \log x_j}\left(\frac{e^{z_c(x)}}{\sum_{r=0}^{C-1} e^{z_r(x)}}\right) \nonumber\\
&= \frac{e^{z_c(x)} \frac{\partial z_c(x)}{\partial \log x_j} \sum_r e^{z_r(x)} - e^{z_c(x)} \sum_r e^{z_r(x)} \frac{\partial z_r(x)}{\partial \log x_j}}{(\sum_r e^{z_r(x)})^2} \nonumber\\
&= \frac{e^{z_c(x)}}{\sum_r e^{z_r(x)}} \left(\frac{\partial z_c(x)}{\partial \log x_j} - \frac{\sum_r e^{z_r(x)} \frac{\partial z_r(x)}{\partial \log x_j}}{\sum_r e^{z_r(x)}}\right) \nonumber\\
&= p_c(x) \left(\frac{\partial z_c(x)}{\partial \log x_j} - \sum_r p_r(x) \frac{\partial z_r(x)}{\partial \log x_j}\right).
\label{eq:app_d2}
\end{align}

Substituting the log-gradients $G_{c,j}(x) = \frac{\partial z_c(x)}{\partial \log x_j}$ and $G_{r,j}(x) = \frac{\partial z_r(x)}{\partial \log x_j}$ into Eq.~\eqref{eq:app_d2} gives Eq.~\eqref{eq:d2}:
\[
\frac{\partial p_c(x)}{\partial \log x_j} = p_c(x)\big(G_{c,j}(x) - \sum_r p_r(x)G_{r,j}(x)\big).
\]

This shows that a feature increases class $c$ probability only when its log-gradient for class $c$ exceeds the probability-weighted average log-gradient across all classes. The competitive nature arises from the normalization constraint $\sum_c p_c(x) = 1$: increasing one class probability necessarily decreases others.

For $K=1$, using Property \hyperlink{prop:g1_app}{G1}, the log-gradients are $G_{c,j}(x) = z_c(x) \beta_{c,j}$, so:
\[
\frac{\partial p_c(x)}{\partial \log x_j} = p_c(x)\big(z_c(x) \beta_{c,j} - \sum_r p_r(x) z_r(x) \beta_{r,j}\big).
\]

\textsc{Property L1: Exact Log-Space Additivity}\\
We derive the exact additive decomposition of individual signomial components in log-space. For each component $k$ of class $c$, the component score is:
\[
z_{c,k}(x) = \alpha_{c,k}\prod_{j=1}^m x_j^{\beta_{c,k,j}}.
\]

Taking logarithms:
\begin{align}
\log z_{c,k}(x) &= \log\left(\alpha_{c,k}\prod_{j=1}^m x_j^{\beta_{c,k,j}}\right) \nonumber\\
&= \log \alpha_{c,k} + \sum_{j=1}^m \beta_{c,k,j} \log x_j.
\label{eq:app_l1}
\end{align}

Let $b \in \mathbb{R}^m_{>0}$ denote a reference baseline (e.g., the geometric mean of the dataset or a specific reference instance). We can rewrite Eq.~\eqref{eq:app_l1} as:
\begin{align*}
\log z_{c,k}(x) &= \log \alpha_{c,k} + \sum_{j=1}^m \beta_{c,k,j} \log b_j + \sum_{j=1}^m \beta_{c,k,j} (\log x_j - \log b_j) \\
&= \log z_{c,k}(b) + \sum_{j=1}^m \beta_{c,k,j}\log\frac{x_j}{b_j},
\end{align*}
where $z_{c,k}(b) = \alpha_{c,k}\prod_{j=1}^m b_j^{\beta_{c,k,j}}$ is the component score at the baseline.

This exact additive decomposition shows that each feature $j$ contributes $\beta_{c,k,j}\log\frac{x_j}{b_j}$ to the log-score of component $k$, yielding an exact additive attribution in log-space that is structurally analogous to Shapley-value explanations \citep{lundberg2017shap}.

For $K=1$, the same log-space decomposition holds for the class log-score $\log z_c(x)$, corresponding to an exact additive decomposition of the log-score:

\[
\log z_c(x) = \log z_c(b) + \sum_{j=1}^m \beta_{c,j}\log\frac{x_j}{b_j}.
\]

For $K>1$, the aggregated score $z_c(x) = \sum_{k=1}^K z_{c,k}(x)$ is a nonlinear sum of exponentials in log-space, so exact additive decomposition is not available at the aggregate level. However, the log-space decomposition remains exact at the component level, and for the aggregated score, we use gradient-based local approximations (Property \hyperlink{prop:l2_app}{L2}).

\hypertarget{prop:l2_app}{\textsc{Property L2: Gradient-based Local Attributions}}\\
We derive gradient-based local feature attributions for aggregated nonlinear functions where exact additive decomposition is unavailable. For ECSEL with $K>1$, the class score $z_c(x) = \sum_{k=1}^K z_{c,k}(x)$ is a sum of exponentials in log-space, which is nonlinear and does not admit exact additive decomposition.

We use a first-order Taylor approximation around a baseline $x^* \in \mathbb{R}^m_{>0}$ to obtain local gradient-based feature attributions. Expanding $z_c(x)$ in log-space:
\begin{align*}
\log z_c(x) &\approx \log z_c(x^*) + \sum_{j=1}^m \frac{\partial \log z_c(x^*)}{\partial \log x_j} (\log x_j - \log x^*_j) \\
&= \log z_c(x^*) + \sum_{j=1}^m E_{c,j}(x^*) (\log x_j - \log x^*_j),
\end{align*}
where we use the elasticity $E_{c,j}(x^*) = \frac{\partial \log z_c(x^*)}{\partial \log x_j}$ from Property \hyperlink{prop:g1_app}{G1}.

From \hyperlink{prop:g1_app}{G1}, we have $E_{c,j}(x^*) = \frac{G_{c,j}(x^*)}{z_c(x^*)}$, so the contribution of feature $j$ to the log-score is:
\[
\phi^{(\log z_c)}_j(x) \approx E_{c,j}(x^*) (\log x_j - \log x^*_j) = \frac{G_{c,j}(x^*)}{z_c(x^*)} (\log x_j - \log x^*_j).
\]

Alternatively, we can work directly with the score $z_c(x)$ rather than its logarithm:
\begin{align*}
z_c(x) &\approx z_c(x^*) + \sum_{j=1}^m \frac{\partial z_c(x^*)}{\partial \log x_j} (\log x_j - \log x^*_j) \\
&= z_c(x^*) + \sum_{j=1}^m G_{c,j}(x^*) (\log x_j - \log x^*_j),
\end{align*}
giving the contribution:
\[
\phi^{(z_c)}_j(x) \approx G_{c,j}(x^*) (\log x_j - \log x^*_j).
\]

An analogous procedure applies to class probabilities. Using the probability gradient from Property \hyperlink{prop:d2_app}{D2}:
\[
p_c(x) \approx p_c(x^*) + \sum_{j=1}^m \frac{\partial p_c(x^*)}{\partial \log x_j} (\log x_j - \log x^*_j),
\]
where $\frac{\partial p_c(x^*)}{\partial \log x_j} = p_c(x^*)\big(G_{c,j}(x^*) - \sum_r p_r(x^*)G_{r,j}(x^*)\big)$ from Property \hyperlink{prop:d2_app}{D2}. This gives the feature contribution to probability:
\[
\phi^{(p_c)}_j(x) \approx p_c(x^*)\big(G_{c,j}(x^*) - \sum_r p_r(x^*)G_{r,j}(x^*)\big) (\log x_j - \log x^*_j).
\]

These gradient-based attributions provide local explanations through closed-form expressions, avoiding the computational expense of sampling-based methods such as KernelSHAP \citep{lundberg2017shap}. The approximation quality depends on the locality of the baseline: for small deviations $\|x - x^*\|$, the first-order Taylor expansion is accurate.
\end{proof}
\clearpage

\section{Symbolic Regression Benchmark} \label{app:sr_benchmark}
This appendix provides the technical specifications and results for the symbolic regression experiments presented in Section~\ref{sec:SR_benchmark}. In the following sections, we detail the data generation procedures, the specific hyperparameter configurations for all baseline models, and provide all results, including a qualitative analysis of cases where ECSEL achieves high numerical fidelity despite structural mismatch.




\subsection{Data Generation and Sampling}
To ensure a fair comparison, all methods are evaluated on identical datasets for each target equation. Data generation follows the standard protocols established in the symbolic regression literature:
\begin{itemize}
    \item \textbf{Sampling:} For each equation, we uniformly sample between 20 and 1,000 points.
    \item \textbf{Input Ranges:} Features are drawn from task-specific ranges consistent with the original DGSR benchmark, including $[-5, 5]$, $[-50, 50]$, $[1, 5]$, and $[-10, 10]$.
    \item \textbf{Noise:} We add Gaussian noise with $\sigma=0.01$ to the target values to assess the robustness of recovery under realistic conditions.
\end{itemize}

\subsection{Baseline Implementation Details}
All baseline methods are executed using the DGSR authors' publicly available benchmarking suite to ensure consistent data handling. We impose a strict \textbf{15-minute time limit} per equation for all methods. Specific baseline configurations are as follows:

\begin{itemize}
    \item \textbf{NeSymRes:} Consistent with its pre-training distribution, this model is evaluated only on equations containing at most three variables.
    \item \textbf{DGSR:} We use the authors’ strongest configuration with a beam size of 256. For equations involving the constant $\pi$, the maximum token length is increased from 30 to 50 to ensure the expressions are representable within the search space.
    \item \textbf{NGGP:} Run with default parameters as specified in the benchmark suite.
\end{itemize}

In Table~\ref{tab:aif_dgsr_mapping_merged}, entries marked ``--'' indicate cases where a method is not applicable (e.g., NeSymRes on problems with more than three variables), while ``dnf'' denotes runs that did not finish within the time limit.

\subsection{Full Benchmark Results}
The comprehensive performance evaluation across all 58 target expressions is detailed in Table~\ref{tab:aif_dgsr_mapping_merged}. These equations span a wide range of structural complexities, from simple univariate monomials to multi-term signomials with rational exponents. Aggregated performance metrics, including mean recovery rates and computational efficiency, are summarized in the final row of the table.\\

\begingroup   
\tiny
\setlength{\LTpost}{2pt}
\begin{longtable}{llllllllllll}
\caption{Comparative performance on symbolic recovery tasks. Columns report the success rate for exact symbolic identification and the mean solving time across multiple random seeds.}
\label{tab:aif_dgsr_mapping_merged}\\

\toprule
& & \multicolumn{2}{c}{\textbf{ECSEL}} &
    \multicolumn{2}{c}{\textbf{DGSR}} &
    \multicolumn{2}{c}{\textbf{NGGP}} &
    \multicolumn{2}{c}{\textbf{NeSymRes}} \\
\cmidrule(lr){3-4} \cmidrule(lr){5-6} \cmidrule(lr){7-8} \cmidrule(lr){9-10}
\textbf{Eq.} & \textbf{Expression} &
Rec. (\%) & Time (s) &
Rec. (\%) & Time (s) &
Rec. (\%) & Time (s) &
Rec. (\%) & Time (s) \\
\midrule
\endfirsthead

\toprule
& & \multicolumn{2}{c}{\textbf{ECSEL}} &
    \multicolumn{2}{c}{\textbf{DGSR}} &
    \multicolumn{2}{c}{\textbf{NGGP}} &
    \multicolumn{2}{c}{\textbf{NeSymRes}} \\
\cmidrule(lr){3-4} \cmidrule(lr){5-6} \cmidrule(lr){7-8} \cmidrule(lr){9-10}
\textbf{Eq.} & \textbf{Expression} &
Rec. (\%) & Time (s) &
Rec. (\%) & Time (s) &
Rec. (\%) & Time (s) &
Rec. (\%) & Time (s) \\
\midrule
\endhead

\midrule
\multicolumn{10}{r}{\emph{Continued on next page}} \\
\endfoot

\bottomrule
\endlastfoot

I.12.1 & $F = \mu N n$ & 100 & 0.75 $\pm$ 0.67 & 100 & 7.72 $\pm$ 2.87 & 100 & 1.27 $\pm$ 0.16 & 100 & 35.42 $\pm$ 1.29 \\[3pt]
I.12.2 & $F = \dfrac{q_1 q_2}{4 \pi \epsilon r^2}$ & 100 & 0.15 $\pm$ 0.01 & 27.8 & 739.39 $\pm$ 309.09 & 35.3 & 137.20 $\pm$ 67.64 & -- & -- \\[3pt]
I.12.4 & $E_f = \dfrac{q_1}{4 \pi \epsilon r^2}$ & 100 & 0.10 $\pm$ 0.02 & 0 & 600.61 $\pm$ 4.49 & 0 & 148.56 $\pm$ 0.90 & -- & -- \\[3pt]
I.12.5 & $F = q_2 E_f$ & 100 & 0.04 $\pm$ 0.01 & 100 & 5.79 $\pm$ 1.01 & 80 & 46.17 $\pm$90.01 & 100 & 34.37 $\pm$ 2.06 \\[3pt]
I.14.3 & $U = mgz$ & 100 & 0.75 $\pm$ 0.70 & 100 & 4.12 $\pm$ 0.53 & 100 & 0.86 $\pm$ 0.08 & 100 & 27.66 $\pm$ 2.40 \\[3pt]
\textsc{I.14.4} & $U = k_{spring} x^2$ & 100 & 0.04 $\pm$ 0.01 & 100 & 2.84 $\pm$ 0.09 & 100 & 0.65 $\pm$ 0.06 & 100 & 23.36 $\pm$ 0.52 \\[3pt]
\textsc{I.25.13 }& $V_e = \dfrac{q}{C}$ & 100 & 0.06 $\pm$ 0.06 & 100 & 3.52 $\pm$ 1.10 & 100 & 0.81 $\pm$ 0.10 & 100 & 22.51 $\pm$ 0.26 \\[3pt]
\textsc{I.29.4} & $k = \dfrac{\omega}{c}$ & 100 & 0.06 $\pm$ 0.06 & 100 & 2.98 $\pm$ 0.15 & 100 & 0.84 $\pm$ 0.23 & 100 & 22.39 $\pm$ 0.13 \\[3pt]
\textsc{I.32.5 }& $P = \dfrac{q^2 a^2}{6 \pi \epsilon c^3}$ & 100 & 0.28$\pm$ 0.02 & 0 & 591.77 $\pm$ 6.02 & 0 & 144.61 $\pm$ 0.83 & -- & -- \\[3pt]
I.34.8 & $\omega = \dfrac{q v B}{p}$ & 100 & 0.57 $\pm$ 0.50 & 100 & 9.99 $\pm$ 0.00 & 100 & 1.49 $\pm$ 0.00 & -- & -- \\[3pt]
\textsc{I.34.27} & $E = \hbar \omega$ & 100 & 0.06 $\pm$ 0.06 & 100 & 3.48 $\pm$ 1.00 & 100 & 0.82 $\pm$ 0.07 & 100 & 22.78 $\pm$ 0.27 \\[3pt]
\textsc{I.38.12} & $r = \dfrac{4 \pi \epsilon \hbar^2}{m q^2}$ & 100 & 0.10 $\pm$ 0.05 & 0 & 599.62 $\pm$ 7.74 & 0 & 151.38 $\pm$ 0.89 & -- & -- \\[3pt]
\textsc{I.39.10} & $E = \dfrac{3}{2} p F V$ & 100 & 0.81 $\pm$ 0.76 & 100 & 44.56 $\pm$54.64 & 100 & 9.35 $\pm$ 5.54 & 0 & 250.22 $\pm$ 2.05 \\[3pt]
\textsc{I.39.22} & $P_F = \dfrac{n k_b T}{V}$ & 100 & 0.61 $\pm$ 0.54 & 100 & 15.85 $\pm$ 0 & 100 & 1.55 $\pm$ 0 & -- & -- \\[3pt]
\textsc{I.43.16} & $v = \dfrac{\mu_{drift} q V_e}{d}$ & 100 & 0.60 $\pm$ 0.52 & 100 & 10.79 $\pm$ 0.28 & 100 &  1.99 $\pm$ 0.28 & -- & -- \\[3pt]
\textsc{I.43.31} & $D = \mu_e k_b T$ & 100 & 0.85 $\pm$ 0.79 & 100 & 4.12 $\pm$ 0.96 & 100 &  0.69 $\pm$ 0.05 & 100 &  25.99 $\pm$ 1.29 \\[3pt]
\textsc{I.47.23} & $c = \sqrt{\dfrac{\gamma p r}{\rho}}$ & 100 & 0.28 $\pm$ 0.34& 0 & 681.89 $\pm$54.90 & 0 & 148.84 $\pm$ 4.49 & 0 &  260.26 $\pm$14.51 \\[3pt]
\textsc{II.3.24} & $F_E = \dfrac{P}{4 \pi r^2}$ & 100 & 0.15 $\pm$ 0.24 & 0 & dnf & 0 & 147.17 $\pm$ 3.90 & 0 & 255.24 $\pm$ 1.02 \\[3pt]
\textsc{II.4.23 }& $V_e = \dfrac{q}{4 \pi \epsilon r}$ & 100 & 0.03 $\pm$ 0.02 & 0 & 574.11 $\pm$ 9.35 & 0 & 161.83 $\pm$ 5.51 & 0 & 262.06 $\pm$ 4.22 \\[3pt]
\textsc{II.8.7 }& $E = \dfrac{3}{5} \dfrac{q^2}{4 \pi \epsilon r}$ & 100 & 0.38 $\pm$0.67 & 100 & 618.94 $\pm$32.38 & 0 & 149.13 $\pm$ 2.59 & 0 & 253.97 $\pm$ 5.09 \\[3pt]
\textsc{II.8.31 }& $E_{den} = \epsilon E_f^2$ & 100 & 0.04 $\pm$ 0.01 & 100 & 2.83 $\pm$ 0.13 & 100 &  0.60 $\pm$ 0.04 & 100 & 23.10 $\pm$ 0.14 \\[3pt]
\textsc{II.11.20} & $P_* = \dfrac{n_p p_d^2 E_f}{3 k_b T}$ & 100 & 0.04 $\pm$ 0.02 & 100 & 393.90 $\pm$ 0.00 & 100 & 19.61 $\pm$ 0.00 & -- & -- \\[3pt]
\textsc{II.13.17} & $B = \dfrac{1}{4 \pi \epsilon c^2} \dfrac{2I}{r}$ & 100 & 0.14 $\pm$ 0.05 & 0 & 594.28 $\pm$ 3.08 & 0 & 148.13 $\pm$ 1.63 & -- & -- \\[3pt]
\textsc{II.27.16} & $F_E = \epsilon c E_f^2$ & 100 &0.69 $\pm$ 0.59 & 100 & 3.47 $\pm$ 0.94 & 100 &  0.81 $\pm$ 0.08 & 100 & 23.24 $\pm$ 0.81 \\[3pt]
\textsc{II.27.18} & $E_{den} = \epsilon E_f^2$ & 100 & 0.04 $\pm$ 0.01 & 100 & 2.85 $\pm$ 0.06 & 100 & 23.44 $\pm$ 0.79 & 100 & 0.60 $\pm$ 0.02 \\[3pt]
\textsc{II.34.2a} & $I = \dfrac{q v}{2 \pi r}$ & 100 & 0.70 $\pm$ 0.78 & 0 &661.94 $\pm$26.74 & 0 & 155.48 $\pm$ 3.36 & 0 & 274.73 $\pm$ 6.11 \\[3pt]
\textsc{II.34.2} & $\mu_M = \dfrac{q v r}{2 m}$ & 100 & 0.55 $\pm$ 0.48 & 100 & 145.80 $\pm$32.03 & 100 & 25.51 $\pm$33.91 & -- & -- \\[3pt]
\textsc{II.34.11} & $\omega = \dfrac{g q B}{2 m}$ & 100 & 0.52 $\pm$ 0.45 & 100 & 121.01 $\pm$97.13 & 100 & 4.04 $\pm$16.88 & -- & -- \\[3pt]
\textsc{II.34.29a} & $\mu_M = \dfrac{q h}{4 \pi m}$ & 100 & 0.39 $\pm$ 0.41 & 0 & 768.60 $\pm$ 0.00 & 0 & 161.46 $\pm$ 0.00 & 0 & 263.47 $\pm$ 8.96 \\[3pt]
\textsc{II.34.29b} & $E = \dfrac{g \mu_M B J_z}{\hbar}$ & 100 & 0.12 $\pm$ 0.15 & 100 & 9.12 $\pm$ 6.79 & 100 & 1.57 $\pm$ 1.27 & -- & -- \\[3pt]
\textsc{II.38.3} & $E = \dfrac{YAx}{d}$ & 100 & 0.58 $\pm$ 0.51 & 100 & 10.40 $\pm$ 3.72 & 100 & 0.94 $\pm$ 0.26 & -- & -- \\[3pt]
\textsc{III.7.38 }& $\omega = \dfrac{2 \mu_M B}{\hbar}$ & 100 & 0.70 $\pm$ 0.60 & 100 & 3.45 $\pm$ 1.11 & 100 & 0.91 $\pm$ 0.28 & 100 & 22.10 $\pm$ 0.17 \\[3pt]
\textsc{III.12.43} & $L = n \hbar$ & 100 & 0.04 $\pm$ 0.01 & 100 & 3.30 $\pm$ 0.81 & 100 & 0.86 $\pm$ 0.08 & 100 &  23.72 $\pm$ 0.56 \\[3pt]
\textsc{III.13.18} & $v = \dfrac{2 E d^2 k}{\hbar}$ & 100 & 0.08 $\pm$ 0.05 & 100 & 8.27 $\pm$ 3.26 & 100 & 5.19 $\pm$ 1.84 & -- & -- \\[3pt]
\textsc{III.15.14} & $m = \dfrac{\hbar^2}{2 E d^2}$ & 100 & 0.78 $\pm$ 0.65 & 100 & 59.10 $\pm$34.71 & 100 & 3.76 $\pm$ 2.57 & -- & -- \\[3pt]
\textsc{III.15.27} & $k = \dfrac{2 \pi \alpha}{n d}$ & 100 & 0.64 $\pm$ 0.54 & 0 & 598.85 $\pm$ 4.46 & 0 & 152.33 $\pm$ 1.20 & -- & -- \\[3pt]
\textsc{III.19.51} & $E = -\dfrac{m q^4}{2 (4 \pi \epsilon)^2 \hbar^2} \dfrac{1}{n^2}$ & 100 & 0.46 $\pm$ 0.10 & 0 & 596.54 $\pm$ 4.33 & 0 & 150.20 $\pm$ 0.81 & -- & -- \\[3pt]
\textsc{III.21.20} & $j = -\dfrac{\rho_c q A v_e c}{m}$ & 100 & 0.03 $\pm$ 0.02 & 100 & 23.38 $\pm$13.50 & 100 & 14.59 $\pm$ 5.17 & -- & -- \\[3pt]
\textsc{Livermore-12} & $x_1^5\cdot x_2^{-3}$ & 100 & 0.07 $\pm$ 0.02 & 100 & 12.35 $\pm$ 6.52 & 100 & 6.48 $\pm$ 5.50 & 100 & 22.15 $\pm$ 0.15 \\[3pt]
\textsc{Livermore-13} & $\sqrt[3]{x_1}$ & 100 & 0.05 $\pm$ 0.00 & 100 & 41.15 $\pm$29.89 & 100& 13.30 $\pm$16.08 & 0 & 246.34 $\pm$ 2.10 \\[3pt]
\textsc{Livermore-15} & $\sqrt[5]{x_1}$ & 100 & 0.05 $\pm$ 0.00 & 100 & 122.03 $\pm$61.11 & 100& 18.33 $\pm$11.00 & 0 & 251.97 $\pm$ 1.10 \\[3pt]
\textsc{Livermore-16} & $\sqrt[3]{x_1^2}$ & 100 & 0.05 $\pm$ 0.00 & 0 & 749.88 $\pm$211.06 & 80 & 58.82 $\pm$44.48& 0 & 256.01 $\pm$ 9.54 \\[3pt]
\textsc{Constant-5} & $\sqrt{1.23x_1}$ & 100 & 0.02 $\pm$ 0.00 & 0 & 549.52 $\pm$ 0.00 & 0 & 127.55 $\pm$ 0.00 & 0 & 253.02 $\pm$ 0.00 \\[3pt]
\textsc{Constant-6} & $x_1^{0.426}$ & 100 & 0.02 $\pm$ 0.00 & 0 & 276.92 $\pm$ 58.08 & 0& 132.13 $\pm$ 6.49& -- & -- \\[3pt]
\textsc{I.11.19} & $A = x_1 y_1 + x_2 y_2 + x_3 y_3$ & 100 & 151.49 $\pm$ 16.26 & 100 & 523.01 $\pm$ 466.76 & 100 & 17.12 $\pm$ 17.77 & -- & -- \\[3pt]
\textsc{I.13.4} & $K=\frac{1}{2}m(v^2+u^2+w^2)$ & 100 & 101.33 $\pm$ 6.45 & 0 & dnf & 0 & 143.87 $\pm$ 1.62 & -- & -- \\[3pt]
\textsc{I.13.12} & $G=m_1m_2\left( \frac{1}{r_2}-\frac{1}{r_1} \right)$ & 100 & 58.32 $\pm$ 12.54 & 100 & 215.76 $\pm$ 133.74 & 100 & 10.98 $\pm$ 10.66 & -- & -- \\[3pt]
\textsc{I.24.6 }& $E=\frac{1}{4}m(\omega^2 +\omega_0^2)x^2$ & 100 & 68.43 $\pm$ 13.93 & 0 & dnf & 0 & 211.60 $\pm$ 37.36 & -- & -- \\[3pt]
\textsc{II.2.42} & $P = \frac{\kappa (T_2-T_1)A}{d}$ & 100 & 101.10 $\pm$ 8.06 & 100 & 54.15 $\pm$ 38.98 & 100 & 1.62 $\pm$ 0.88 & -- & -- \\[3pt]
\textsc{II.36.38} & $f = \frac{\mu_mB}{k_bT} + \frac{\mu_m \alpha M}{\epsilon c^2 k_b T}$ & 40 & 89.11 $\pm$ 15.44 & 100 & dnf & 100 & 107.38 $\pm$ 31.26 & -- & -- \\[3pt]
\textsc{II.37.1} & $E = \mu_M(1+\chi)B$ & 100 & 50.67 $\pm$ 15.66 & 100 & 21.69 $\pm$ 0.17 & 100 & 0.96 $\pm$ 0.35 & -- & -- \\[3pt]
\textsc{Jin-1} & $f_1 = 2.5x_1^4-1.3x_1^3+0.5x_2^2-1.7x_2$ & 0 & 53.65 $\pm$ 6.83 & 0 & 891.53 $\pm$ 32.00 & 0 & 145.61 $\pm$ 4.85 & -- & -- \\[3pt]
\textsc{Jin-2} & $f_2=8x_1^2+8x_2^3-15$ & 100 & 92.89 $\pm$ 1.96 & 0 & 893.74 $\pm$ 44.50 & 0 & 144.18 $\pm$ 5.58 & -- & -- \\[3pt]
\textsc{Jin-3} & $f_3=0.2x_1^3+0.5x_2^3-1.2x_2-0.5x_1$ & 100 & 648.46 $\pm$ 14.53 & 0 & dnf & 0 & 839.02 $\pm$ 196.46 & -- & -- \\[3pt]
\textsc{Korns-1} & $y = 1.6 + 24.3x_1$ & 80 & 309.82 $\pm$ 10.80 & 0 & dnf & 0 & 854.40 $\pm$ 155.09 & -- & -- \\[3pt]
\textsc{Korns-2} & $y = 0.23 + 14.2\cdot \frac{x_4+x_2}{3\cdot x_5}$ & 80 & 449.06 $\pm$ 17.78 & 0 & dnf & 0 & dnf & -- & -- \\[3pt]
\textsc{Korns-3} & $y=-5.41+4.9\cdot \frac{x_2-x_1+\frac{x_3}{x_4}}{x_4}$ & 60 & 497.34 $\pm$ 131.48 & 0 & dnf & 0 & dnf & -- & -- \\[3pt]
\textsc{Korns-6} & $y = 1.3 + 0.13 \sqrt{x_1}$ & 100 & 74.48 $\pm$ 3.38 & 0 & dnf & 0 & 830.81 $\pm$ 258.99 & -- & -- \\[3pt]

\midrule
\textbf{Average} & & \textbf{95.86} & 86.4 & 59.10 & 612.9$^a$ & 58.54 & 468.7$^a$ & 56$^b$ & 126.3 $^b$ \\
\end{longtable}
\endgroup
{\footnotesize
\noindent
$^{a}$ Average recovery time assigns 900 seconds to runs marked as ``dnf'', corresponding to the 15- minute time limit. \\
$^{b}$ For NeSymRes, the averages are computed over applicable equations only (25 of 58). Rows marked ``--'' are excluded, with rows with zero recovery are included.}

\subsection{Qualitative Analysis: Numerical Accuracy Without Symbolic Recovery} \label{app:SR_norecovery}
In a small subset of problems, ECSEL fails to recover the exact symbolic form of the target but achieves near-perfect predictive accuracy ($R^2 \approx 1.0$). This occurs in two distinct scenarios: multi-term expressions with complex structure (found in the benchmark problems of Section~\ref{sec:SR_benchmark}), and higher-degree univariate polynomials where gradient-based optimization converges to continuous exponent approximations rather than exact integer powers.

\textbf{Benchmark Problems.}\hspace{0.9em}Table~\ref{tab:near_recovery_examples} presents equations from the main symbolic regression benchmark (Section~\ref{sec:SR_benchmark}) where ECSEL's recovery rate falls below 100\%. These failures typically occur in multi-term settings where the optimization finds a functional approximation, a structurally distinct signomial that numerically mimics the target over the sampled domain. 

\begin{table}[h!]
\centering
\caption{Samples with partial symbolic recovery rate ($< 100\%$). Fitted expression corresponds to one example out of five random seeds that was not recovered. Recovery rate (Rec. \%) is the original recovery rate over five seeds.}
\label{tab:near_recovery_examples}
\fontsize{8pt}{12pt}\selectfont  
\setlength{\aboverulesep}{4pt}  
\setlength{\belowrulesep}{4pt}  
\begin{tabular}{l p{2.5cm} p{5.5cm} c c c c}
\toprule
\textbf{Eq.} & \textbf{True Expression} & \textbf{Fitted Expression} & \textbf{\makebox[0pt]{Rec. (\%)}} & \textbf{MSE} & \textbf{NMSE} & $R^2$ \\
\midrule
{\scriptsize \textsc{II.36.38}} &
$f = \frac{\mu_mB}{k_bT} + \frac{\mu_m \alpha M}{\epsilon c^2 k_b T}$ &
$\begin{aligned}
&0.889\cdot\frac{x_1^{0.99}x_2^{1.01}x_5^{0.04}x_6^{0.05}}{x_3^{0.01}x_7^{1.00}x_8^{1.00}} \\
&+ 1.389\cdot\frac{x_1^{1.00}x_3^{0.92}x_4^{0.85}}{x_2^{0.02}x_5^{0.99}x_6^{1.92}x_7^{0.99}x_8^{0.98}}
\end{aligned}$ &
40 &
$1.95\!\times\!10^{-2}$ &
$0.00$ &
$1.00000$ \\
\midrule
{\scriptsize \textsc{Jin-1}} &
$f_1 = 2.5x_1^4 - 1.3x_1^3 + 0.5x_2^2 - 1.7x_2$ &
$f = 2.2 \cdot x_1^4 -1.1\cdot x_1^{2.6} + 0.01 \cdot x_2^{2.7} + 2.3\cdot x_2^{0.9}$ &
0 &
$2.3\times10^{-5}$ &
$1.63\times10^{-9}$ &
$1.00000$ \\
\midrule
{\scriptsize \textsc{Korns-1}} &
$1.6 + 24.3 \cdot x_1$ &
$24.300\cdot x_1 + 1.570$ &
80 &
$4.87\!\times\!10^{-4}$ &
$1.04\!\times\!10^{-6}$ &
$1.00000$ \\
\midrule
{\scriptsize \textsc{Korns-2}} &
$y = 0.23 + 14.2 \frac{x_4+x_2}{3\cdot x_5}$  &
$\begin{aligned}
&3.378\frac{x_1^{1.12}x_3^{0.06}}{x_2^{0.05}x_4^{1.11}} + 2.927\cdot\frac{x_1^{0.39}x_2^{0.36}}{x_3^{0.39}x_4^{0.40}} \\
&+ 3.403\cdot\frac{x_2^{1.12}x_4^{0.05}}{x_1^{0.06}x_3^{1.11}}
\end{aligned}$ &
80 &
$5.21\!\times\!10^{-5}$ &
$5.00\!\times\!10^{-6}$ &
$0.999995$ \\
\midrule
{\scriptsize \textsc{Korns-3}} &
$y = -5.41+4.9 \frac{x_2-x_1+\frac{x_3}{x_4}}{x_4}$ &
$\begin{aligned}
&-3.587\frac{x_1^{0.89}x_2^{0.28}x_5^{0.35}}{x_4^{0.21}} \\
&+ 2.478\frac{x_2^{0.49}x_3^{0.02}x_4^{0.95}x_5^{0.37}}{x_1^{0.12}} \\
&- 2.087\frac{x_1^{0.38}}{x_3^{0.26}} - 1.186\frac{x_1^{0.05}}{x_2^{0.01}}
\end{aligned}$ &
60 &
$3.00\!\times\!10^{-8}$ &
$0.00$ &
$1.00000$ \\
\bottomrule
\end{tabular}
\end{table}

\textbf{Higher-Degree Univariate Problems.}\hspace{0.9em}To assess ECSEL's ability to recover exact integer exponents in higher-degree polynomials, we evaluate on the Nguyen benchmark \citep{nguyen2011nguyen}, a suite of univariate polynomial regression tasks. We generate 100 samples uniformly from $[0, 5]$ with Gaussian noise ($\sigma = 0.01$) added to target values.

ECSEL successfully recovers simple low-degree univariate polynomials: Nguyen-1 ($x + x^2 + x^3$) achieves 100\% exact recovery across five random seeds within 3 seconds, and Nguyen-8 ($\sqrt{x}$) reaches 100\% recovery within 1 second. However, as polynomial degree increases, ECSEL struggles to recover the exact symbolic form despite achieving near-perfect numerical fit (MSE $< 10^{-5}$, $R^2 \approx 1.0$). 
Table~\ref{tab:nguyen_failures} shows representative examples where ECSEL learns numerically accurate approximations but fails symbolic recovery. Recovery rates are averaged over five random seeds (42--46), with fitted expressions shown from a single representative seed. The learned expressions exhibit close exponent values (e.g., $x^{2.1}$ vs. $x^2$, $x^{3.1}$ vs. $x^3$) and compensating coefficients that preserve predictive accuracy while deviating from the true symbolic structure. 

\begin{table}[h]
\centering
\caption{ECSEL's learned approximations for higher-degree Nguyen polynomials. Despite near-perfect numerical fit ($R^2 \approx 1.0$, NMSE $\approx 0$), ECSEL fails to recover exact symbolic forms.}
\label{tab:nguyen_failures}
\small
\begin{tabular*}{0.9\textwidth}{@{\extracolsep{\fill}}lllc}
\toprule
\textbf{Eq.} & \textbf{True Expression} & \textbf{Fitted Expression} & \textbf{MSE} \\
\midrule
Nguyen-2 & $x + x^2 + x^3 + x^4$ & $1.4x^{1.1} - 3.4x^{2.8} + 1.1x^{4.0} + 5.0x^{2.7}$ & $3.5 \times 10^{-7}$ \\
\addlinespace
Nguyen-3 & $x + x^2 + x^3 + x^4 + x^5$ & $1.0x^{1.0} + 1.0x^{4.1} + 1.0x^{5.0} + 1.1x^{2.1} + 1.0x^{3.1}$ & $1.4 \times 10^{-6}$ \\
\addlinespace
Nguyen-4 & $x + x^2 + x^3 + x^4 + x^5 + x^6$ & $1.0x^{1.1} + 1.1x^{1.8} + 1.0x^{4.0} + 1.1x^{6.0} + 0.8x^{2.9} + 1.0x^{4.8}$ & $3.3 \times 10^{-5}$  \\
\bottomrule
\end{tabular*}
\end{table}

\paragraph{Non-Signomial Problems.}\hspace{0.9em}Beyond polynomial structure, we evaluate ECSEL's approximation quality when the underlying model is non-signomial, we evaluate on eight equations from the AI Feynman and Nguyen benchmarks spanning three non-signomial function classes: trigonometric, rational, and logarithmic. We generate 100 samples per equation with Gaussian noise ($\sigma = 0.01$) and fit signomials with $K \in \{3, 5, 10\}$, reporting the best result per equation.

Table~\ref{tab:nonsignomial_results} summarizes the results. For smooth, low-oscillation functions, ECSEL achieves near-perfect numerical fidelity despite the structural mismatch. The rational equations I.34.14 and I.39.11 yield $R^2 > 0.998$, and the logarithmic Nguyen-7 achieves $R^2 \approx 1.000$, consistent with the universal approximation result of Theorem~\ref{thm:signomial_universal_approximation}: signomials can approximate any continuous function to arbitrary precision given sufficient terms. For the trigonometric equations, results are more mixed. Equations such as I.18.12, II.15.4, and III.17.37 achieve $R^2 > 0.99$ despite involving $\sin$ and $\cos$, suggesting that signomials can approximate weakly oscillatory behavior numerically. For strongly oscillatory functions such as Nguyen-5 ($\sin(x^2)\cos(x) - 1$) and Nguyen-10 ($2\sin(x)\cos(y)$), approximation quality remains poor regardless of $K$, suggesting that gradient-based optimization struggles to find a good approximation within the term budget considered here.

\begin{table}[h!]
\centering
\fontsize{9pt}{10pt}\selectfont
\caption{ECSEL approximation quality on non-signomial equations. For each equation, results from the best-performing $K$ are reported. Despite structural mismatch, ECSEL achieves high numerical fidelity on most equations.}
\label{tab:nonsignomial_results}
\renewcommand{\arraystretch}{1.2}
\begin{tabular}{llcccr}
\toprule
\textbf{Eq.} & \textbf{True Expression} & \textbf{Type} & $K$ & \textbf{MSE} & \textbf{R$^2$} \\
\midrule
I.18.12   & $\tau = r \cdot F \cdot \sin(\theta)$          & Trigonometric & 10 & $2.27 \times 10^{-1}$ & $0.996$ \\
I.34.14   & $\omega = \omega_0 / (1 - v/c)$                & Rational      & 3  & $2.22 \times 10^{-2}$ & $0.998$ \\
I.39.11   & $E = \tfrac{1}{\gamma-1} p V$                  & Rational      & 10 & $3.77 \times 10^{-3}$ & $0.9995$ \\
II.15.4   & $E = -\mu B \cos(\theta)$                      & Trigonometric & 10 & $1.60 \times 10^{-1}$ & $0.993$ \\
III.17.37 & $f = \beta(1 + \alpha\cos(\theta))$            & Trigonometric & 3  & $1.15 \times 10^{-1}$ & $0.996$ \\
Nguyen-5  & $y = \sin(x^2)\cos(x) - 1$                    & Trigonometric & 3  & $1.67 \times 10^{-1}$ & $0.094$ \\
Nguyen-7  & $y = \log(x+1) + \log(x^2+1)$                 & Logarithmic   & 10 & $1.38 \times 10^{-4}$ & $0.9999$ \\
Nguyen-10 & $y = 2\sin(x)\cos(y)$                         & Trigonometric & 10 & $6.26 \times 10^{-2}$ & $0.912$ \\
\bottomrule
\end{tabular}
\end{table}

\clearpage
\section{Classification Benchmark} \label{sec:cl_benchmark}
This appendix provides the technical specifications, dataset characteristics, and all results for the classification experiments.

\subsection{Datasets and Preprocessing} \label{app:class_datasets}
We evaluate all methods on 11 binary and multi-class datasets. Table~\ref{tab:datasets} provides a description of each dataset and the corresponding classification task.

Categorical features are encoded prior to scaling: nominal features (e.g., gender, occupation, marital status) are one-hot encoded, while ordinal features (e.g., education level) are mapped to integer codes preserving their natural ordering. For binary and multi-class targets, we apply label encoding to convert string labels to integer codes $(0, 1, ..., K-1)$. Finally, input features are scaled to $[1, 10]$ using MinMaxScaler to ensure positive values.

\begin{table*}[h]
\centering
\caption{Summary of datasets used in our experiments, sorted by number of samples.}
\label{tab:datasets}
\renewcommand{\arraystretch}{1.2}
\resizebox{\textwidth}{!}{
\begin{tabular}{l l l l l l}
\toprule
\textbf{Dataset} & \textbf{Shape} & \textbf{\#Classes} & \textbf{Class Distribution} & \textbf{Task} & \textbf{Target Mapping} \\
\midrule

\textsc{Iris} & $(150,5)$ & 3 
& \{0: 50, 1: 50, 2: 50\}
& Flower species classification
& \{0: Iris-setosa, 1: Iris-versicolor, 2: Iris-virginica\} \\

\textsc{Seeds} & $(210,8) $& 3
& \{0: 70, 1: 70, 2: 70\}
& Grain kernel type classification
& \{0: Kama seed, 1: Rosa seed, 2: Canada seed\} \\

\textsc{Hearts} & $(303,14)$ & 2
& \{0: 138, 1: 165\}
& Heart disease detection
& \{0: No significant heart disease, 1: Significant heart disease\} \\

\textsc{ILPD} & $(579,12)$ & 2
& \{0: 414, 1: 165\}
& Liver disease diagnosis
& \{0: Liver disease, 1:Healthy\} \\

\textsc{Transfusion} & $(748,5) $& 2
& \{0: 570, 1: 178\}
& Blood donation prediction
& \{0: Not donating blood, 1: Donating blood\} \\

\textsc{Contraceptive} & $(1473,13)$ & 3
& \{0: 629, 1: 511, 2: 333\}
& Contraceptive method classification
& \{Types of contraception\} \\

\textsc{Compas} & $(3518,8)$ & 2
& \{0: 1785, 1: 1733\}
& Recidivism prediction
& \{0: No recidivist, 1: Recidivist\} \\

\textsc{Mammography} & $(11183,7)$ & 2
& \{0: 10923, 1: 260\}
& Breast cancer detection
& \{0: Healthy, 1: Cancer\} \\

\textsc{Default} & $(30000,27)$ & 2
& \{0: 23364, 1: 6636\}
& Default payment prediction
& \{0: No, 1: Yes\} \\

\textsc{Skinnonskin} & $(245057,4)$ & 2
& \{0: 50859, 1: 194198\}
& Skin pixel segmentation
& \{0: Non-skin, 1: Skin pixel\} \\

\textsc{Loan} & $(395492,21)$ & 2
& \{0: 355735, 1: 39757\}
& Loan default risk prediction
& \{0: Good loan, 1: Bad loan\} \\

\bottomrule
\end{tabular}
}
\end{table*}

\subsection{Training and Validation Protocol}
To ensure robust performance estimates and minimize the risk of overfitting, all classification experiments follow a standardized validation pipeline.

\textbf{Data splitting.}\hspace{0.9em}We partition each dataset into an initial 80/20 train-test split. The 80\% training portion is further utilized within a 5-fold stratified cross-validation (CV) loop for hyperparameter selection. For ECSEL, we additionally reserve an internal 20\% validation set from each training fold to manage early stopping and optimization monitoring. Once the optimal parameters are identified via CV, the model is retrained on the full 80\% training set and finally evaluated on the held-out 20\% test set.

\textbf{Optimization and reproducibility.}\hspace{0.9em}All experiments are conducted using a fixed random seed (42) to ensure reproducibility. ECSEL is optimized using the Adam optimizer with gradient clipping (\texttt{max\_norm} = 1.0). For binary classification, we select the best-performing activation between sigmoid and softmax, while for multi-class tasks, we utilize softmax exclusively.

\subsection{Hyperparameter Optimization}\label{app:hyperparams}
Hyperparameter tuning is performed using Optuna's TPE sampler over 30 trials per model. The search ranges, detailed in Table~\ref{tab:hyperparams}, were selected to balance exploration breadth with computational efficiency based on established best practices and preliminary experiments.

For regularization parameters (\texttt{C} in Logistic Regression and SVM, \texttt{l1\_strength} in ECSEL), we use log-uniform distributions spanning 2--4 orders of magnitude to efficiently explore both strong and weak regularization regimes. Tree-based parameters (\texttt{max\_depth}, \texttt{n\_estimators}) follow ranges standard in the literature~\citep{probst2019tuning} that prevent both underfitting (depths too shallow) and overfitting (depths too deep), while keeping computational costs reasonable. Learning rates use log-uniform sampling over $[10^{-4}, 10^{-2}]$ for gradient-based methods, covering typical ranges for Adam optimization. Batch sizes are restricted to powers of two for computational efficiency ($\{32, 64, 128\}$), balancing gradient noise and memory constraints. The number of terms \texttt{K} in ECSEL is limited to $[1,3]$ to maintain interpretability while allowing sufficient model capacity. Early stopping patience values $\{20, 50\}$ were chosen to allow convergence without excessive training time. Finally, the sigmoid threshold range $\{0.4, 0.5, 0.6, 0.7\}$ explores different decision boundaries for the optional sigmoid transformation in ECSEL.

All other hyperparameters not listed in Table~\ref{tab:hyperparams} are kept at scikit-learn or PyTorch defaults.

\begin{table}[h]
\centering
\fontsize{9pt}{10pt}\selectfont 
\caption{Hyperparameter search spaces for Optuna optimization (30 trials, TPE sampler).}
\label{tab:hyperparams}
\begin{tabular}{@{}llll@{}}
\toprule
\textbf{Method} & \textbf{Hyperparameter} & \textbf{Search Space} & \textbf{Distribution} \\ 
\midrule
\multirow{2}{*}{Logistic Regression} 
    & \texttt{C} & $[10^{-3}, 10]$ & Log-uniform \\
    & \texttt{max\_iter} & $[100, 1000]$ & Uniform (int) \\ 
\midrule
\multirow{2}{*}{Random Forest} 
    & \texttt{n\_estimators} & $[50, 200]$ & Uniform (int) \\
    & \texttt{max\_depth} & $[2, 10]$ & Uniform (int) \\ 
\midrule
\multirow{3}{*}{XGBoost} 
    & \texttt{n\_estimators} & $[50, 200]$ & Uniform (int) \\
    & \texttt{max\_depth} & $[2, 10]$ & Uniform (int) \\
    & \texttt{learning\_rate} & $[0.01, 0.3]$ & Log-uniform \\ 
\midrule
\multirow{2}{*}{SVM} 
    & \texttt{C} & $[0.01, 10]$ & Log-uniform \\
    & \texttt{kernel} & \{\texttt{linear}, \texttt{rbf}\} & Categorical \\ 
\midrule
\multirow{2}{*}{MLP} 
    & \texttt{hidden\_layer\_sizes} & $[10, 100]$ & Uniform (int, single layer) \\
    & \texttt{activation} & \{\texttt{relu}, \texttt{tanh}\} & Categorical \\ 
\midrule
\multirow{7}{*}{ECSEL} 
    & \texttt{K} & $[1, 3]$ & Uniform (int) \\
    & \texttt{l1\_strength} & $[10^{-4}, 10^{-2}]$ & Log-uniform \\
    & \texttt{batch\_size} & \{32, 64, 128\} & Categorical \\
    & \texttt{learning\_rate} & $[10^{-4}, 10^{-2}]$ & Log-uniform \\
    & \texttt{num\_epochs} & $[800, 1000]$ & Uniform (int) \\
    & \texttt{patience} & \{20, 50\} & Categorical \\
    & \texttt{sigmoid\_threshold} & \{0.4, 0.5, 0.6, 0.7\} & Categorical \\
\bottomrule
\end{tabular}
\end{table}

\subsection{Full Quantitative Results} \label{app_subsec:full_cl_results}
This section provides the complete performance data for the classification benchmark. We first present a comparative summary of ECSEL against the strongest baselines followed by the full metric set for all tested configurations.

\textbf{Benchmark summary.}\hspace{0.9em}Table~\ref{tab:summary_cl_benchmark_results} provides a head-to-head comparison between ECSEL and the top-performing baseline per dataset, reporting accuracy and F1-score with standard deviation over 5-fold cross-validation. Standard deviations for remaining metrics follow similar magnitudes and are omitted for compactness. This summary highlights the competitive parity of ECSEL; across the majority of tasks, ECSEL achieves F1 and Accuracy scores within 1--2\% of the best black-box models (often XGBoost) while maintaining a transparent symbolic form. 

\paragraph{Extended benchmark results.}\hspace{0.9em}Table~\ref{tab:cl_benchmark_results} provides the results on all metrics, including Accuracy, F1-Score, Precision, and training times. We specifically distinguish between Recall, the overall sensitivity to the positive class defined as $\frac{TP}{TP+FN}$, and Minority Recall, which highlights the sensitivity for the underrepresented class via the ratio $\frac{TP_{min}}{N_{min}}$. This distinction is critical for high-stakes domains with severe class imbalance, such as fraud detection, where the cost of a False Negative in the minority class far outweighs the cost of False Positives.

\begin{table}[h!]
\centering
\fontsize{9pt}{10pt}\selectfont
\caption{Summary of the full classification benchmark. For each dataset, we compare ECSEL against the best-performing baseline by accuracy (ties are broken by F1-score). CV Accuracy and CV F1 report mean $\pm$ std over 5-fold cross-validation; other metrics are test set results.}
\label{tab:summary_cl_benchmark_results}
\renewcommand{\arraystretch}{1.3} 
\begin{tabular}{l | lccc | cc}
\hline
\textbf{Dataset} & \textbf{Method} & \textbf{Accuracy} & \textbf{F1} & \textbf{Min. Recall} & \textbf{CV Accuracy} & \textbf{CV F1} \\
\hline
\multirow{2}{*}{\textsc{Iris}}
    & SVM & \textbf{100.00} & 100.00 & -- & $97.50 \pm 3.33$ & $97.47 \pm 3.38$ \\
    & ECSEL  & 96.67 & 96.67 & -- & $96.67 \pm 3.12$ & $96.63 \pm 3.16$ \\
\hline
\multirow{2}{*}{\textsc{Seeds}}
    & LR & 92.86 & 92.77 & -- & $96.43 \pm 3.46$ & $96.44 \pm 3.42$ \\
    & ECSEL & \textbf{97.62} & 97.62 & -- & $94.69 \pm 4.31$ & $94.74 \pm 4.22$ \\
\hline
\multirow{2}{*}{\textsc{Hearts}}
    & LR & 81.97 & 81.54 & 67.86 & $81.82 \pm 4.03$ & $81.74 \pm 4.03$ \\
    & ECSEL  & \textbf{83.61} & 83.61 & 82.14 & $82.70 \pm 6.02$ & $82.64 \pm 6.11$ \\
\hline
\multirow{2}{*}{\textsc{Ilpd}}
    & XGBoost & 72.41 & 63.03 & 6.06 & $70.84 \pm 1.54$ & $61.85 \pm 1.38$ \\
    & ECSEL  & \textbf{75.86} & 74.39 & 42.42 & $72.36 \pm 2.44$ & $69.41 \pm 2.92$ \\
\hline
\multirow{2}{*}{\textsc{Transfusion}}
    & XGBoost & \textbf{80.67} & 78.72 & 38.89 & $77.26 \pm 0.31$ & $73.46 \pm 1.10$ \\
    & ECSEL & 79.33 & 77.95 & 41.67 & $80.26 \pm 1.86$ & $77.04 \pm 2.35$ \\
\hline
\multirow{2}{*}{\textsc{Contraceptive}}
    & XGBoost & \textbf{60.00} & 59.14 & -- & $55.60 \pm 2.66$ & $55.16 \pm 2.40$ \\
    & ECSEL  & 56.27 & 55.94 & -- & $53.73 \pm 3.89$ & $53.75 \pm 3.51$ \\
\hline
\multirow{2}{*}{\textsc{Compas}}
    & XGBoost & 68.18 & 68.08 & 62.54 & $68.73 \pm 2.15$ & $68.69 \pm 2.15$ \\
    & ECSEL  & \textbf{68.47} & 68.36 & 62.82 & $68.62 \pm 1.67$ & $68.52 \pm 1.65$ \\
\hline
\multirow{2}{*}{\textsc{Default}}
    & SVM & \textbf{81.82} & 79.35 & 33.61 & $82.00 \pm 0.30$ & $79.58 \pm 0.24$ \\
    & ECSEL  & 81.74 & 79.34 & 34.06 & $81.81 \pm 0.37$ & $79.17 \pm 0.55$ \\
\hline
\multirow{2}{*}{\textsc{Skinnonskin}}
    & XGBoost & \textbf{99.96} & 99.96 & 99.97 & $99.95 \pm 0.01$ & $99.95 \pm 0.01$ \\
    & ECSEL  & 99.25 & 99.25 & 99.88 & $99.47 \pm 0.04$ & $99.47 \pm 0.04$ \\
\hline
\multirow{2}{*}{\textsc{Mammography}}
    & XGBoost & \textbf{98.70} & 98.61 & 59.62 & $98.77 \pm 0.15$ & $98.66 \pm 0.18$ \\
    & ECSEL  & 98.66 & 98.57 & 59.62 & $98.64 \pm 0.13$ & $98.48 \pm 0.17$ \\
\hline
\multirow{2}{*}{\textsc{Loan}}
    & XGBoost & \textbf{99.51} & 99.50 & 95.55 & $99.50 \pm 0.03$ & $99.50 \pm 0.03$ \\
    & ECSEL & 99.22 & 99.21 & 92.97 & $99.21 \pm 0.05$ & $99.20 \pm 0.05$ \\
\hline
\end{tabular}
\end{table}

\clearpage

\begingroup   
\fontsize{8.5pt}{9.5pt}\selectfont
\setlength{\tabcolsep}{6pt}
\renewcommand{\arraystretch}{1.15}
\setlength{\LTpost}{2pt}

\begin{longtable}{llcccccc}
\caption{Full classification benchmark results across all datasets and metrics.}
\label{tab:cl_benchmark_results}\\

\toprule
\textbf{Dataset} & \textbf{Method} & \textbf{Accuracy} & \textbf{F1} & \textbf{Precision} & \textbf{Recall} & \textbf{Min. Recall} & \textbf{Training time (s)} \\
\midrule
\endfirsthead

\multicolumn{8}{c}%
{{\tablename\ \thetable{} -- continued from previous page}} \\
\toprule
\textbf{Dataset} & \textbf{Method} & \textbf{Accuracy} & \textbf{F1} & \textbf{Precision} & \textbf{Recall} & \textbf{Min. Recall} & \textbf{Training time (s)} \\
\midrule
\endhead

\midrule
\multicolumn{8}{r}{\emph{Continued on next page}} \\
\endfoot

\bottomrule
\endlastfoot

\textsc{Iris} (multi) 
& Logistic Regression & 93.33 & 93.33 & 93.33 & 93.33 & -- & 0.002  \\
& Random Forest & 96.67 & 96.67 & 96.67 & 96.67 & -- & 0.055\\
& XGBoost & 96.67 & 96.67 & 96.67 & 96.67 & -- & 0.127 \\
& SVM & \textbf{100} & \textbf{100} & 100 & 100 & -- & 0.001  \\
& MLP & 93.33 & 93.33 & 93.33 & 93.33 & -- & 0.070 \\
& ECSEL & 96.67 & 96.67 & 96.67 & 96.67 & -- & 0.365 \\
\midrule

\textsc{Seeds} (multi)
& Logistic Regression & 92.86 & 92.77 & 94.12 & 92.86 & -- & 0.006 \\
& Random Forest & 90.48 & 90.07 & 91.90 & 90.48 & -- & 0.056 \\
& XGBoost & 88.10 & 87.31 & 89.95 & 88.10 & -- & 0.494 \\
& SVM & 92.86 & 92.77 & 94.12 & 92.86 & -- & 0.001 \\
& MLP & 92.86 & 92.77 & 94.12 & 92.86 & -- & 0.105 \\
& ECSEL & \textbf{97.62} & \textbf{97.62} & 97.78 & 97.62 & -- & 0.406 \\
\midrule

\textsc{Hearts} (binary)
& Logistic Regression & 81.79 & 81.54 & 83.46 & 81.97 & 67.86 & 0.001 \\
& Random Forest & 80.33 & 79.75 & 82.22 & 80.33 & 64.29 & 0.087 \\
& XGBoost & 80.33& 80.25 & 80.35 & 80.33 & 75.00 & 0.101 \\
& SVM & 81.79 & 81.54 & 83.46 & 81.97 & 67.86 & 0.108 \\
& MLP & 73.77 & 73.51 & 73.90 & 73.77 & 73.77 & 0.113  \\
& ECSEL & \textbf{83.61} & \textbf{83.61} & 83.61 & 83.61 & \textbf{82.14} & 0.144 \\
\midrule

\textsc{Ilpd} (binary)
& Logistic Regression & 71.55 & 59.69 & 51.20 & 71.55 & 0.00 & 0.002 \\
& Random Forest & 71.55 & 59.69 & 51.20 & 71.55 & 0.00 & 0.052 \\
& XGBoost & 72.41 & 63.03 & 70.89 & 72.41 & 6.06 & 0.122 \\
& SVM & 71.55 & 59.69 & 51.20 & 71.55 & 0.00 & 0.016 \\
& MLP & 62.57 & 61.29 & 60.64 & 62.07 & 27.27 & 0.144 \\
& ECSEL & \textbf{75.86} & \textbf{74.39} & 74.25 & 75.86 & \textbf{42.42} & 0.272 \\
\midrule

\textsc{Transfusion} (binary)
& Logistic Regression & 78.00 & 71.00 & 78.43 & 78.00 & 11.11 & 0.001 \\
& Random Forest & 80.00 & \textbf{78.81} & 78.51 & 80.00 & 44.44 & 0.091 \\
& XGBoost & \textbf{80.67} & 78.72 & 79.04 & 80.67 & 38.89 & 0.139 \\
& SVM & 78.00 & 71.82 & 76.67 & 78.00 & 13.89 & 0.028  \\
& MLP & 80.00 & 78.16 & 78.21 & 80.00 & 38.89 & 8.557 \\
& ECSEL & 79.33 & 77.95 & 77.63 & 79.33 & \textbf{41.67} & 1.357 \\
\midrule

\textsc{Contraceptive} (multi)
& Logistic Regression & 53.90 & 53.03 & 53.27 & 53.90 & -- & 0.009 \\
& Random Forest & 54.24 & 53.04 & 53.40 & 54.24 & -- & 0.076 \\
& XGBoost & \textbf{60.00} & \textbf{59.14} & 60.19 & 60.00 & -- & 0.186 \\
& SVM & 55.93 & 55.57 & 55.44 & 55.93 & -- & 0.151 \\
& MLP & 54.58 & 53.94& 53.93 & 54.58 & -- & 0.364 \\
& ECSEL & 56.27 & 55.94 & 56.12 & 56.27 & -- & 4.359 \\
\midrule

\textsc{Compas} (binary)
& Logistic Regression & 67.90 & 67.88 & 67.90 & 67.90 & 65.71 & 0.197 \\
& Random Forest & 66.76 & 66.66 & 66.87 & 67.76 & 61.38 & 0.050 \\
& XGBoost & 68.18 & 68.08 & 68.33 & 68.18 & 62.54 & 0.149 \\
& SVM & 65.06 & 65.06 & 65.06 & 65.06 & 65.13 & 0.639 \\
& MLP & 67.76 & 67.76 & 67.76 & 67.76 & \textbf{67.44} & 0.280 \\
& ECSEL & \textbf{68.47} & \textbf{68.36} & 68.62 & 68.47 & 62.82 & 4.773 \\
\midrule

\textsc{Default} (binary)
& Logistic Regression & 80.92 & 77.29 & 79.05 & 80.92 & 25.24 & 1.880 \\
& Random Forest & 81.60 & 79.32 & 79.75 & 81.60 & 34.66 & 3.628 \\
& XGBoost & 81.73 & \textbf{79.43} & 79.94 & 81.73 & 34.66 & 0.465 \\
& SVM & \textbf{81.82} & 79.35 & 80.08 & 81.82 & 33.61 & 50.15 \\
& MLP & 81.72 & \textbf{79.43} & 79.91 & 81.72 & \textbf{34.82} & 1.791 \\
& ECSEL & 81.74 & 79.34 & 79.94 & 81.73 & 34.06 & 56.78  \\
\midrule

\textsc{Skinnonskin} (binary)
& Logistic Regression & 91.66 & 91.73 & 91.83 & 91.66 & 82.35 & 0.228 \\
& Random Forest & 99.80 & 99.80 & 99.80 & 99.80 & \textbf{99.99} & 5.543 \\
& XGBoost & \textbf{99.96} & \textbf{99.96} & 99.99 & 99.96 & 99.97 & 0.754 \\
& SVM & 99.87 & 99.87 & 99.87 & 99.87 & 99.91 & 26.82 \\
& MLP & 99.88 & 99.88 & 99.88 & 99.88 & \textbf{99.99} &59.03  \\
& ECSEL & 99.25  & 99.25  & 99.27 & 99.25 & 99.88  & 108.5 \\
\midrule
\textsc{Mammography} (binary)
& Logistic Regression & 98.26 & 97.96  & 97.98 & 98.26 & 36.54 & 0.157 \\
& Random Forest & 98.66 & 98.48 & 98.54 & 98.66 & 0.500 & 0.285 \\
& XGBoost &\textbf{98.70 }& \textbf{98.6}1 & 98.59 & 98.70 & \textbf{59.62} & 0.217 \\
& SVM & 98.39 & 98.15 & 98.18  & 98.39 & 42.31 & 0.553 \\
& MLP & 98.61& 98.50 & 98.48 & 98.61 & 55.77 & 11.58 \\
& ECSEL & 98.66 & 98.57 & 98.54 & 98.66 & \textbf{59.62} & 10.09 \\
\midrule

\textsc{Loan} (binary)
& Logistic Regression &96.16 & 95.95  & 96.00  & 96.16  & 70.18 & 6.778  \\
& Random Forest & 98.74 & 98.73 & 98.73  & 98.47  & 91.01 & 56.49 \\
& XGBoost &\textbf{99.51} & \textbf{99.50} & 99.51 & 99.51  & \textbf{95.55}  & 2.394  \\
& SVM$^a$ & -- & -- & -- & -- & -- & -- \\
& MLP$^a$ & -- & --  & -- & -- & -- & -- \\
& ECSEL & 99.22 & 99.21  & 98.35 & 99.92 & 92.97  & 102.55  \\

\end{longtable}
\endgroup
{\footnotesize
\noindent
$^{a}$ SVM and MLP results on \textsc{Loan} are omitted due to prohibitive training times; MLP required 1049s to fit, and SVM did not converge within the time budget.}

\subsection{Comparison Against Matched Per-Class MLPs}
\label{app:perclass_mlp}

To isolate the contribution of ECSEL's signomial parameterization relative to a structurally similar architecture, we compare against per-class MLPs (PCM) with $K \in [1,3]$ hidden units. This comparison is motivated by the observation that ECSEL can be viewed as a per-class network with exponential activations on log-transformed features: assigning $K$ hidden units per class and aggregating via softmax mirrors ECSEL's structure, with the key difference being the activation function. Matching $K \in [1,3]$ therefore isolates precisely what the signomial parameterization contributes over a standard piecewise-linear activation. We evaluate two variants: raw features and log-transformed features, the latter directly corresponding to ECSEL's input representation. All other training details follow the same protocol as the main benchmark (Section~\ref{sec:classification_setup}): 5-fold stratified CV with Optuna TPE sampler, optimizing over $K \in [1,3]$, activation $\in \{\texttt{relu}, \texttt{tanh}\}$, learning rate, and batch size.

Table~\ref{tab:perclass_mlp} reports the results, with ECSEL achieving higher accuracy and F1 on 10 of 11 datasets. Training times are comparable across methods with no consistent advantage: ECSEL is faster on some datasets (e.g., \textsc{Seeds}, \textsc{Hearts}) and slower on others (e.g., \textsc{Loan}). On two datasets (\textsc{ILPD} and \textsc{Default}), the per-class MLP performs substantially below ECSEL and the other baselines (see Table~\ref{tab:cl_benchmark_results}), with \textsc{Default} showing near-complete failure (22\% accuracy, F1 $= 8.01$ for the raw variant). A likely explanation is that with ReLU activations and only 1--3 neurons per class, neurons with persistently negative pre-activations output zero and stop receiving gradient updates. ECSEL's signomial terms, by contrast, always produce a non-zero output for positive inputs since each term $\alpha_k \prod_j x_j^{\beta_{k,j}} \neq 0$, ensuring that gradients flow through every term during training. We note that Optuna selected ReLU over tanh on these datasets despite both being available.

Beyond raw performance, the two architectures differ in the type of interpretability they afford. Signomials offer a different kind of transparency than neural networks: the elasticity interpretation of each $\beta_{k,j}$ holds globally across the input space, whereas the effective contribution of a neuron's weights in the per-class MLP depends on which region of the input space activates that neuron. To illustrate, consider $K{=}2$ with 4 features. The per-class MLP computes
\begin{equation*}
z_0 = v_1 \sigma(w_{11}x_1 + \cdots + b_1) + v_2 \sigma(w_{21}x_1 + \cdots + b_2) + b_0,
\end{equation*}
where $v_k$, $w_{ki}$, and $b_k$ are learned weights and biases, and the activations introduce input-dependent gating: each neuron's weights are only active when the neuron fires, making feature effects region-dependent. For $K = 1$ this reduces to a gated linear model (comparable to logistic regression in the active region), but for $K > 1$ the neurons activate in different (or overlapping) regions of the input space, making the resulting feature weights region-dependent and difficult to summarize globally. ECSEL computes
\begin{equation*}
z_0 = \alpha_1 x_1^{\beta_{1,1}} x_2^{\beta_{1,2}} x_3^{\beta_{1,3}} x_4^{\beta_{1,4}} + \alpha_2 x_1^{\beta_{2,1}} x_2^{\beta_{2,2}} x_3^{\beta_{2,3}} x_4^{\beta_{2,4}},
\end{equation*}
where each $\beta_{k,j}$ is the elasticity of term $k$ with respect to feature $j$: a 1\% increase in $x_j$ changes the term's contribution by $\beta_{k,j}$\%, and this holds globally across the input space. This structure directly yields closed-form elasticities (\hyperlink{prop:g1}{G1}), exact counterfactuals (\hyperlink{prop:g2}{G2}), and additive log-space decompositions (\hyperlink{prop:l1}{L1}) from Section~\ref{sec:properties}, none of which follow from a piecewise-linear parameterization.

In practice, explaining the per-class MLP requires additional computation. Raw gradients (one backward pass) provide local sensitivity but not additive attribution. Global importance requires aggregation over the dataset, and proper additive decompositions require methods such as Integrated Gradients or SHAP. ECSEL provides all three directly from the learned expression.

\begin{table}[h!]
\centering
\fontsize{8pt}{9pt}\selectfont
\caption{Performance comparison of ECSEL against matched per-class MLPs (PCMs) with raw and log-transformed features.}
\label{tab:perclass_mlp}
\renewcommand{\arraystretch}{1.2}
\begin{tabular}{l ccc c ccc c ccc}
\toprule
& \multicolumn{3}{c}{\textbf{ECSEL}}
&& \multicolumn{3}{c}{\textbf{PCM (raw)}}
&& \multicolumn{3}{c}{\textbf{PCM (log)}} \\
\cmidrule(lr){2-4} \cmidrule(lr){6-8} \cmidrule(lr){10-12}
\textbf{Dataset} & Accuracy & F1 & Training time (s) && Accuracy & F1 & Training time (s) && Accuracy & F1 & Training time (s) \\
\midrule
\textsc{Iris}          & \textbf{96.67} & \textbf{96.67} & 0.4  && 96.67 & 96.67 & 0.5  && 93.33 & 93.33 & 0.5 \\
\textsc{Seeds}         & \textbf{97.62} & \textbf{97.62} & 0.1  && 92.86 & 92.77 & 0.8  && 88.10 & 87.31 & 0.7 \\
\textsc{Hearts}        & \textbf{83.61} & \textbf{83.61} & 0.1  && 80.33 & 79.97 & 0.2  && \textbf{83.61} & 83.44 & 0.7 \\
\textsc{ILPD}          & \textbf{75.86} & \textbf{74.39} & 0.3  && 58.62 & 60.33 & 0.5  && 58.62 & 59.43 & 0.6 \\
\textsc{Transfusion}   & \textbf{79.33} & 77.95          & 1.4  && 78.00 & \textbf{78.63} & 1.4  && 76.67 & 78.04 & 1.6 \\
\textsc{Contraceptive} & \textbf{56.27} & \textbf{55.94} & 4.4  && 53.22 & 53.57 & 1.2  && 53.90 & 54.73 & 0.4 \\
\textsc{Compas}        & \textbf{68.47} & \textbf{68.36} & 4.8  && 67.33 & 67.32 & 1.2  && 68.04 & 67.94 & 0.8 \\
\textsc{Default}       & \textbf{81.74} & \textbf{79.34} & 56.8 && 22.12 & 8.01  & 2.5  && 74.97 & 76.14 & 13.2 \\
\textsc{Skinnonskin}   & 99.25 & 99.25          & 108.5 && 99.86 & 99.86 & 118.1 && \textbf{99.88} & \textbf{99.88} & 147.7 \\
\textsc{Mammography}   & \textbf{98.66} & \textbf{98.57} & 10.1 && 94.14 & 95.65 & 7.6  && 95.71 & 96.61 & 11.6 \\
\textsc{Loan}          & \textbf{99.22} & \textbf{99.21} & 102.6 && 98.78 & 98.79 & 93.2 && 99.12 & 99.12 & 291.1 \\
\bottomrule
\end{tabular}
\end{table}

\clearpage
\section{Case Study: Online Shopping Purchase Intent} \label{app:OSI}
Converting website visitors into paying customers remains a central challenge in e-commerce optimization. Predicting purchase intent enables e-commerce platforms to deploy targeted interventions such as personalized recommendations and dynamic pricing. 

\subsection{Dataset} \label{app:OSI_features}
The Online Shoppers Intention dataset captures user interactions on an e-commerce website, with 12,330 sessions over a time span of a year, and the binary target variable \textit{Revenue} indicating whether a purchase occurred. It consists of 18 attributes reflecting a user's characteristics such as number of page visits or its browser type. Overall, 1,908 sessions resulted in purchases (15.5\%), highlighting a class imbalance that informs subsequent preprocessing and modeling decisions. The dataset's features as described by \citet{Sakar2018RealtimePO} can be found in Table \ref{tab:user_behavior_features}.

\begin{table}[h!]
\centering
\caption{Features of the Online Shopping Intent dataset} \label{tab:user_behavior_features}
\renewcommand{\arraystretch}{1.2}
\resizebox{\textwidth}{!}{
\begin{tabular}{|p{4.1cm}|p{10cm}|c|c|c|}
\hline
\textbf{Feature name} & \textbf{Feature description} & \textbf{Min. value} & \textbf{Max. value} & \textbf{SD / \# Categories} \\ 
\hline
\multicolumn{5}{|c|}{\textbf{Numerical features}} \\ 
\hline
\textit{Administrative} & Number of pages visited by the visitor about account management & 0 & 27 & 3.32 \\ 
\textit{Administrative duration }& Total amount of time (in seconds) spent by the visitor on account management related pages & 0 & 3398 & 176.70 \\ 
\textit{Informational} & Number of pages visited by the visitor about Web site, communication and address information of the shopping site & 0 & 24 & 1.26 \\ 
\textit{Informational duration} & Total amount of time (in seconds) spent by the visitor on informational pages & 0 & 2549 & 140.64 \\ 
\textit{Product related} & Number of pages visited by visitor about product related pages & 0 & 705 & 44.45 \\ 
\textit{Product related duration} & Total amount of time (in seconds) spent by the visitor on product related pages & 0 & 63,973 & 1912.25 \\ 
\textit{Bounce rate} & Average bounce rate value of the pages visited by the visitor & 0 & 0.2 & 0.04 \\ 
\textit{Exit rate }& Average exit rate value of the pages visited by the visitor & 0 & 0.2 & 0.05 \\ 
\textit{Page value} & Average page value of the pages visited by the visitor & 0 & 361 & 18.55 \\ 
\textit{Special day} & Closeness of the site visiting time to a special day & 0 & 1.0 & 0.19 \\ 
\hline
\multicolumn{5}{|c|}{\textbf{Categorical features}} \\ 
\hline
\textit{OperatingSystems} & Operating system of the visitor & \multicolumn{2}{c|}{--} & 8 \\ 
\textit{Browser} & Browser of the visitor & \multicolumn{2}{c|}{--} & 13 \\ 
\textit{Region} & Geographic region from which the session has been started by the visitor & \multicolumn{2}{c|}{--} & 9 \\ 
\textit{TrafficType} & Traffic source by which the visitor has arrived at the Web site (e.g., banner, SMS, direct) & \multicolumn{2}{c|}{--} & 20 \\ 
\textit{VisitorType} & Visitor type as ``New Visitor,'' ``Returning Visitor,'' and ``Other'' & \multicolumn{2}{c|}{--} & 3 \\ 
\textit{Weekend} & Boolean value indicating whether the date of the visit is weekend & \multicolumn{2}{c|}{--} & 2 \\ 
\textit{Month} & Month value of the visit date & \multicolumn{2}{c|}{--} & 12 \\ 
\textit{Revenue} & Class label indicating whether the visit has been finalized with a transaction (target) & \multicolumn{2}{c|}{--} & 2 \\ 
\hline
\end{tabular}
}
\end{table}

\subsection{Preprocessing and Feature Engineering} 
To improve model interpretability and predictive performance, we apply a feature engineering consisting of redundancy removal, categorical aggregation, and domain-informed feature construction. This process reduces the original 18 features to 14 more informative variables while preserving essential behavioral signals.

\textbf{Redundancy Removal.}\hspace{0.9em}Correlation analysis revealed strong dependencies between page count and duration features: \textit{ProductRelated} and \textit{ProductRelated\_Duration} ($0.86$), \textit{Informational} and \textit{Informational\_Duration} ($0.62$), and \textit{Administrative} and \textit{Administrative\_Duration} ($0.60$). Mutual information analysis confirmed that duration features provide no additional predictive signal beyond their count counterparts (MI$_{\text{count}}$ $\geq$ MI$_{\text{duration}}$). 

\textit{BounceRates} and \textit{ExitRates} exhibited high correlation while measuring similar engagement patterns. However, \textit{ExitRates} demonstrated significantly higher mutual information with the target (MI = 0.042 vs. 0.025) and was selected earlier by the mRMR algorithm, indicating superior relevance with lower redundancy. Consequently, \textit{BounceRates} was removed in favor of \textit{ExitRates}.

\textbf{Categorical Aggregation.}\hspace{0.9em}Several categorical features exhibited highly skewed distributions with many low-frequency categories. \textit{TrafficType} (20 categories) was aggregated into the top three traffic sources plus an ``Other'' category. \textit{Browser} (13 categories) was similarly reduced to the top two browsers plus ``Other''. \textit{OperatingSystems} (8 categories) was binarized into mobile versus desktop. \textit{VisitorType} was converted to a binary indicator distinguishing returning visitors from new and other visitor types. \textit{Month} was encoded numerically to preserve temporal ordering.

\textbf{Feature Engineering.}\hspace{0.9em}To capture nonlinear interactions and domain-specific patterns, we constructed three engineered features:

\begin{itemize}
    \item \textit{ProductFocusRatio.} The proportion of product-related pages among all pages visited, defined as $$\frac{\textit{ProductRelated}}{\textit{ProductRelated} + \textit{Administrative} + \textit{Informational}}$$ Values near 1 indicate focused product browsing rather than general site exploration.
    \item \textit{PageValue\_per\_ExitRate (PVER).} The ratio of \textit{PageValues} to \textit{ExitRates}, capturing the intuition that high-value pages with low exit rates signal strong purchase intent. To prevent division-by-zero and stabilize extreme values, the feature is clipped to [0, 1000].
    \item \textit{ShopIntensity.} A composite engagement metric combining \textit{ProductRelated} (weight 0.3), \textit{PageValues} (weight 0.5), and \textit{ProductFocusRatio} (weight 0.2), normalized to a [0, 10] scale. This feature aggregates multiple behavioral signals into a single intensity measure.
\end{itemize}

The final preprocessed dataset comprises 14 features: 6 original numeric features (\textit{ProductRelated}, \textit{Administrative}, \textit{Informational}, \textit{ExitRates}, \textit{PageValues}, \textit{SpecialDay}), 1 temporal feature (\textit{Month}), 4 aggregated categorical features (\textit{TrafficType\_Grouped}, \textit{Browser\_Grouped}, \textit{IsMobile}, \textit{IsReturning}), and 3 engineered features (\textit{ProductFocusRatio}, \textit{PageValue\_per\_ExitRate} and \textit{ShoppingIntensity}).

\subsection{Experimental Setup}
\textbf{Training Details.}\hspace{0.9em}The dataset was split into 70\% training and 30\% test sets, consistent with prior work on this dataset (\citep{Sakar2018RealtimePO}, \citep{Baati2020Realtime}). Hyperparameter optimization was performed using Optuna's TPE sampler with 5-fold stratified cross-validation on the training set to account for class imbalance. All features were scaled to $[1, 10]$ using MinMax scaling to maintain numerical stability during gradient-based optimization of the signomial expressions. ECSEL was trained using the Adam optimizer with early stopping based on validation loss. The search space and optimal hyperparameters are presented in Table~\ref{tab:hyperparameter_grid}.

\textbf{Hyperparameter Search Space.}\hspace{0.9em}The search spaces, detailed in Table~\ref{tab:hyperparameter_grid}, balance exploration breadth with computational constraints but prioritize interpretability by fixing $K=1$ such that ECSEL could learn a single-term signomial classifier.

\begin{table}[h!]
\centering
\caption{Hyperparameter search space and optimal values from Optuna optimization}
\label{tab:hyperparameter_grid}
\begin{tabular}{llc}
\toprule
\textbf{Hyperparameter} & \textbf{Search Space} & \textbf{Optimal Value} \\
\midrule
$K$ & $\{1\}$ & 1 \\
$\ell_1$ & $[10^{-4}, 10^{-2}]$ (log) & 0.0086 \\
Class Weight Multiplier & $[0.01, 1.0]$, step $0.1$ & 0.91 \\
Learning Rate & $[10^{-5}, 10^{-2}]$ (log) & $3.39 \times 10^{-4}$ \\
Batch Size & $\{128, 256\}$ & 256 \\
Num Epochs & $[100, 200]$, step $25$ & 150 \\
Patience & $[25, 70]$ & 47 \\
Num Restarts & $\{1\}$ & 1 \\
\midrule
\multicolumn{3}{l}{\textbf{Best CV F1 Score: 0.6891}} \\
\bottomrule
\end{tabular}
\end{table}

\subsection{Results}
As reported in Section~\ref{sec:OSI}, the signomial that reports the best performance is given by
\begin{equation*}
z = 0.10 \cdot \frac{\textit{PageValues}^{0.47}\cdot \textit{Month}^{0.07}\cdot \textit{PageValue\_per\_ExitRate}^{1.09}\cdot \textit{ShopIntensity}^{0.66}}{\textit{ExitRates}^{0.41}\cdot \textit{Administrative}^{0.14} \cdot \textit{IsReturn}^{0.04}}
\end{equation*}

\textbf{Comparison to Baselines.}\hspace{0.9em}Table~\ref{tab:model_comparison_OS} presents the full test performance of ECSEL against all baseline classifiers. XGBoost achieves the highest accuracy (89.86\%) and F1-score (67.02\%), while SVM leads on minority recall (76.57\%), closely followed by ECSEL (75.70\%) and MLP (75.35\%). Notably, the decision threshold plays an important role here: SVM, MLP, and ECSEL all operate at non-default thresholds (0.247, 0.304, and 0.559 respectively), reflecting the dataset's class imbalance. Among the methods competitive on minority recall, ECSEL is the only one that produces a closed-form, directly inspectable expression. In terms of computational efficiency, ECSEL's training time (5.5s) is practical, sitting between the fast tree-based methods (0.2--3.0s) and Logistic Regression (8.0s).

\begin{table}[h!]
\centering
\caption{Test performance comparison of classification models on the Online Shopping Intention dataset.}
\label{tab:model_comparison_OS}
\begin{tabular}{lcccc}
\toprule
Model & Accuracy (\%) & F1 (\%) & Recall (\%) & Training time (s) \\
\midrule
LR              & 87.02 & 63.75 & 73.78 & 8.0 \\
RF              & 88.83 & 65.38 & 68.18 & 0.3 \\
XGBoost         & \textbf{89.86} & \textbf{67.02} & 66.61 & 0.2 \\
SVM             & 87.08 & 64.70 & 76.57 & 3.0 \\
MLP             & 87.08 & 64.33 & 75.35 & 1.0 \\
ECSEL           & 87.10 & 64.48 & 75.70 & 5.5 \\
\bottomrule
\end{tabular}
\end{table}

\textbf{Regularization Trade-off Study.}\hspace{0.9em}To investigate the trade-off between model sparsity and predictive performance, we trained a second model with reduced $\ell_1$ regularization strength ($\lambda' = 0.00135$ versus $\lambda = 0.00864$). This weaker regularization yielded a longer, more complex formula retaining 14 features:
\begin{equation}\label{eq:dense_model}
\resizebox{0.90\columnwidth}{!}{$\displaystyle
    z = 0.07 \cdot \frac{\textit{PageValues}^{0.63}\cdot \textit{Month}^{0.14}\cdot \textit{IsReturning}^{0.07}\cdot \textit{PVER}^{1.05}\cdot \textit{ShopIntensity}^{0.79}}{\textit{ProductRelated}^{0.10}\cdot \textit{ExitRates}^{0.62}\cdot \textit{Administrative}^{0.16}\cdot \textit{Info}^{0.15}\cdot \textit{SpecialDay}^{0.13}\cdot \textit{ProductFocusRatio}^{0.04}}$
    }
\end{equation}

Table~\ref{tab:model_comparison} compares both formulations. The dense model achieves marginally lower test performance (F1 = 0.644 vs. 0.646, ROC-AUC = 0.877 vs. 0.880) despite using twice as many features, demonstrating that additional complexity provides negligible predictive benefit. 

Both models identify \textit{PageValue\_per\_ExitRate} as the dominant predictor ($\beta = 1.09$ sparse, $1.05$ dense), with \textit{ShopIntensity} and \textit{PageValues} showing strong positive effects and \textit{ExitRates} inversely related. These consistent patterns across regularization strengths confirm the robustness of these purchase drivers.

The dense model, however, retains several weak predictors that likely reflect noise rather than signal. \textit{ProductRelated} ($\beta = -0.10$), \textit{Informational} ($-0.15$), and \textit{SpecialDay} ($-0.13$) show small negative effects that may capture browsing without purchase intent. \textit{ProductFocusRatio} ($-0.04$) adds little beyond \textit{ShopIntensity}, illustrating redundancy among engineered features. Minor sign reversals for \textit{IsReturning} between models ($|\beta| \leq 0.07$) underscore the instability of weak predictors. The sparse model's stronger regularization correctly identifies and eliminates such uninformative features, yielding a more parsimonious expression with equivalent or superior performance.

\begin{table}[htbp]
\centering
\caption{Performance comparison of sparse versus dense signomial formulations}
\label{tab:model_comparison}
\small
\begin{tabular}{lcc}
\toprule
\textbf{Metric} & \textbf{Equation~\ref{eq:sparse_model}} & \textbf{Equation~\ref{eq:dense_model}} \\
 & ($\lambda_1 = 0.00864$) & ($\lambda_1 = 0.00135$) \\
\midrule
Active Features & 7 & 14 \\
Test Accuracy & \textbf{0.872} & 0.871 \\
Test F1 Score & \textbf{0.646} & 0.644 \\
Test Recall & 0.757 & \textbf{0.759} \\
Test Precision & \textbf{0.563} & 0.560 \\
Test ROC-AUC & \textbf{0.880} & 0.877 \\
CV F1 Score & \textbf{0.689} & 0.688 \\
Training Time (s) & \textbf{4.4} & 5.1 \\
\bottomrule
\end{tabular}
\end{table}

\subsection{Full Interpretability Analysis} \label{app:OSI_interpretability}
\textbf{Inspection of the Signomial.}\hspace{0.9em}The learned formula reveals clear purchase drivers. \textit{PageValue\_per\_ExitRate} ($\beta = 1.09$) is the dominant predictor, confirming that high-value pages with low exit rates signal strong intent. This engineered feature outperforms its constituents, validating domain-informed feature design. \textit{ShopIntensity} ($\beta = 0.66$) and \textit{PageValues} ($\beta = 0.47$) reinforce that engagement quality exceeds browsing quantity as a conversion predictor.

Negative effects include \textit{ExitRates} ($\beta = -0.41$) and \textit{Administrative} pages ($\beta = -0.14$), indicating that frequent exits and help-seeking reduce purchase likelihood. The weak negative \textit{IsReturning} effect ($\beta = -0.04$) may reflect deliberation behavior among returning non-purchasers. \textit{Month} ($\beta = 0.07$) captures seasonal patterns, with later months showing higher conversion, something that also shows in the distribution of this feature.

$\ell1$ regularization eliminated low-impact categorical features (traffic source, browser, device), yielding a sparse seven-feature model. The multiplicative signomial structure naturally encodes that purchase requires simultaneous satisfaction of multiple engagement conditions, providing both predictive accuracy and interpretable business insights. 

\textbf{Explanations through Desirable Properties.}\hspace{0.9em}Figure~\ref{fig:osi_interpretability} demonstrates ECSEL's desirable properties (Section~\ref{sec:properties}) through four complementary visualizations applied to representative test instances, each reflecting the learned formula's power-law behavior.

\textit{Counterfactual scaling (Figure~\ref{fig:osi_cf})} demonstrates exact counterfactual predictions (Property \hyperlink{prop:g2}{G2}) by showing how proportional changes to individual features affect purchase probability. We select Instance 6457, a true negative correctly predicted with $p=0.555$ (just below the decision threshold of $p=0.559$), as the baseline for this analysis. Increasing \textit{ShopIntensity} raises the probability monotonically, while increasing \textit{ExitRates} or \textit{Administrative} reduces it, consistent with their learned elasticities. All curves intersect at the baseline ($q=1$), confirming ECSEL's multiplicative structure enables exact counterfactual reasoning.

\textit{Scenario comparison (Figure~\ref{fig:osi_scenarios})} evaluates four hypothetical user profiles to illustrate how different behavioral patterns translate into predictions. The \textit{Average user} profile, constructed from feature means, yields a probability below the threshold ($p<0.559$), reflecting the dataset's default no-purchase outcome (15.5\% base conversion rate). Two negative scenarios, \textit{High ExitRates} and \textit{High Administrative} (users browsing administrative or help pages extensively), produce strongly negative logits, confirming these as indicators of no-purchase intent. Conversely, the \textit{High PageValues} scenario generates a positive logit that crosses the decision threshold, demonstrating how high-value page engagement drives conversion predictions.

\textit{Instance-level explanations (Figures~\ref{fig:osi_shap_3664} and \ref{fig:osi_shap_10894})} provide exact local decompositions (Property \hyperlink{prop:l1}{L1}) for two contrasting sessions. Instance 10894 (predicted no-purchase, $p=0.450$) lies just below the decision threshold, where positive contributions from \textit{PageValue\_per\_ExitRate} (+0.42) and \textit{ShopIntensity} (+0.18) are insufficient to overcome the negative baseline bias (-0.82) reflecting the default no-purchase outcome. In contrast, Instance 3664 (predicted purchase, $p=0.684$) exceeds the threshold due to a substantially stronger \textit{PageValue\_per\_ExitRate} contribution (+0.89), demonstrating how this dominant feature drives conversion predictions when sufficiently elevated. The dashed vertical line in both waterfall plots marks the decision threshold ($p=0.559$), providing immediate visual feedback on prediction confidence.

Together, these visualizations demonstrate ECSEL's ability to provide exact, actionable explanations not always available in black-box ensemble methods.\\

\begin{figure}[h!]
\centering
\begin{subfigure}[t]{0.42\textwidth}
    \centering
    \includegraphics[width=\linewidth]{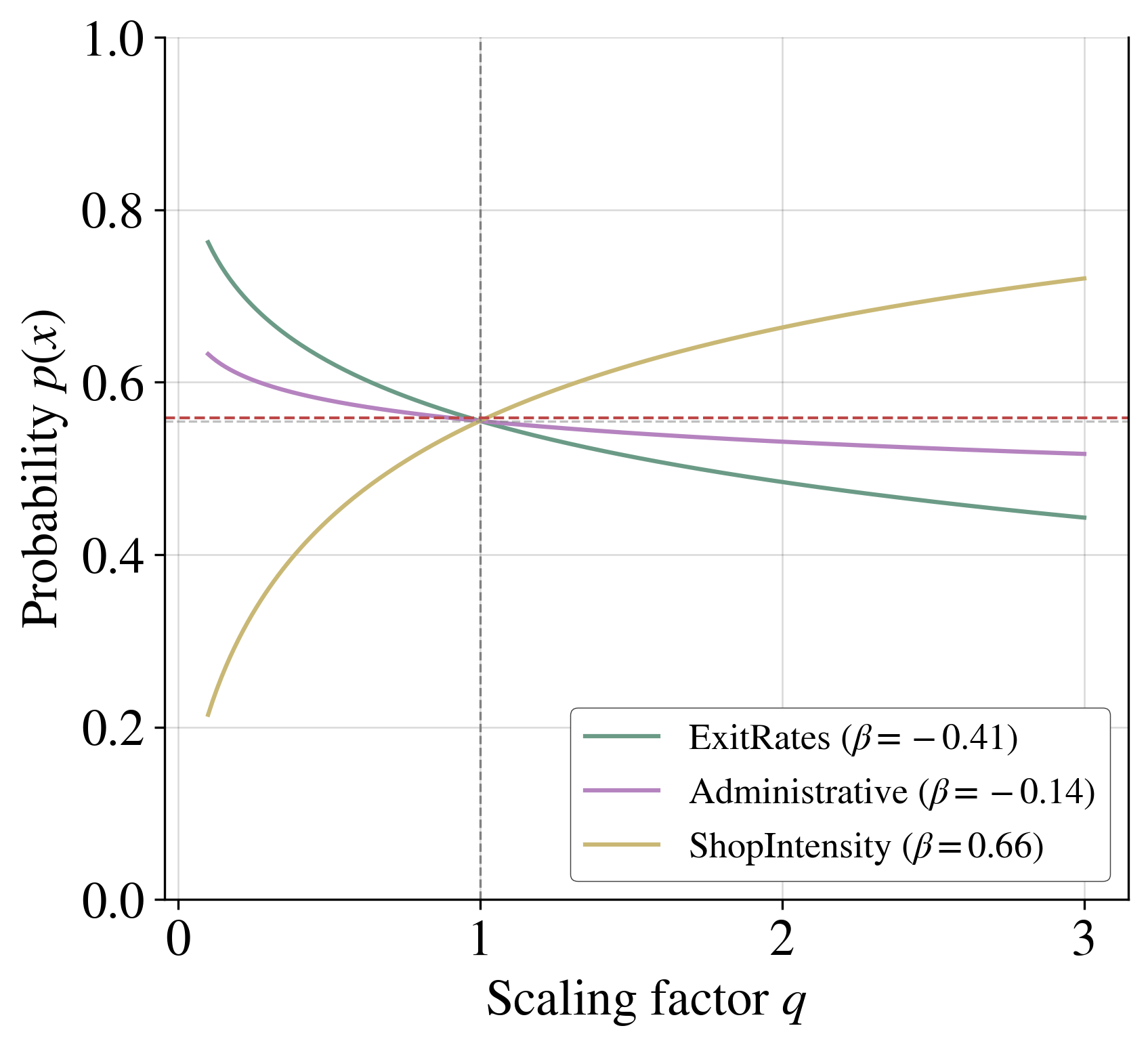}
    \caption{Counterfactual scaling (Instance 6457)}
    \label{fig:osi_cf}
\end{subfigure}
\hspace{0.5cm}
\begin{subfigure}[t]{0.42\textwidth}
    \centering
    \includegraphics[width=\linewidth]{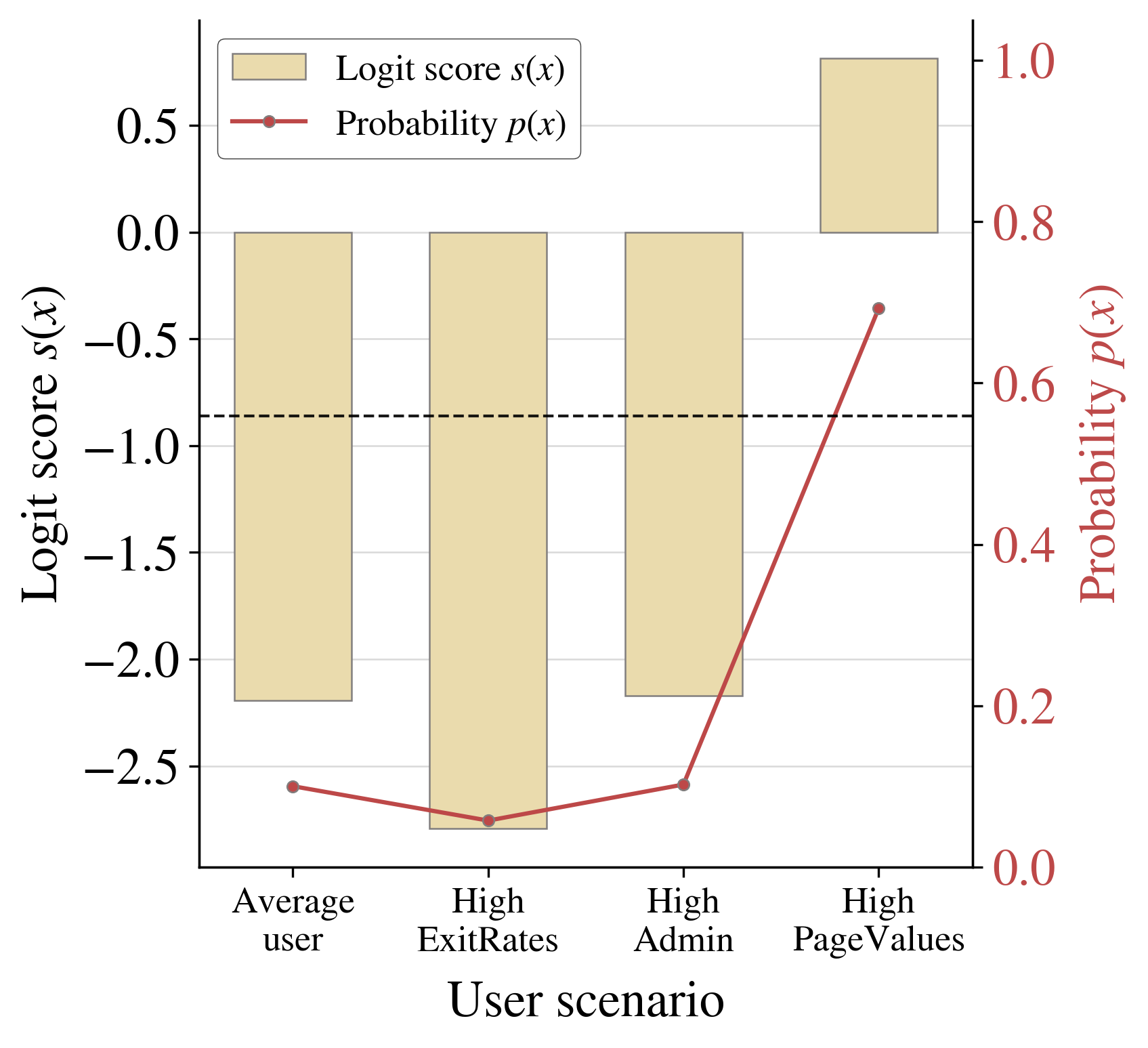}
    \caption{Scenario comparison}
    \label{fig:osi_scenarios}
\end{subfigure}

\vspace{1em}

\begin{subfigure}[t]{0.42\textwidth}
    \centering
    \includegraphics[width=\linewidth]{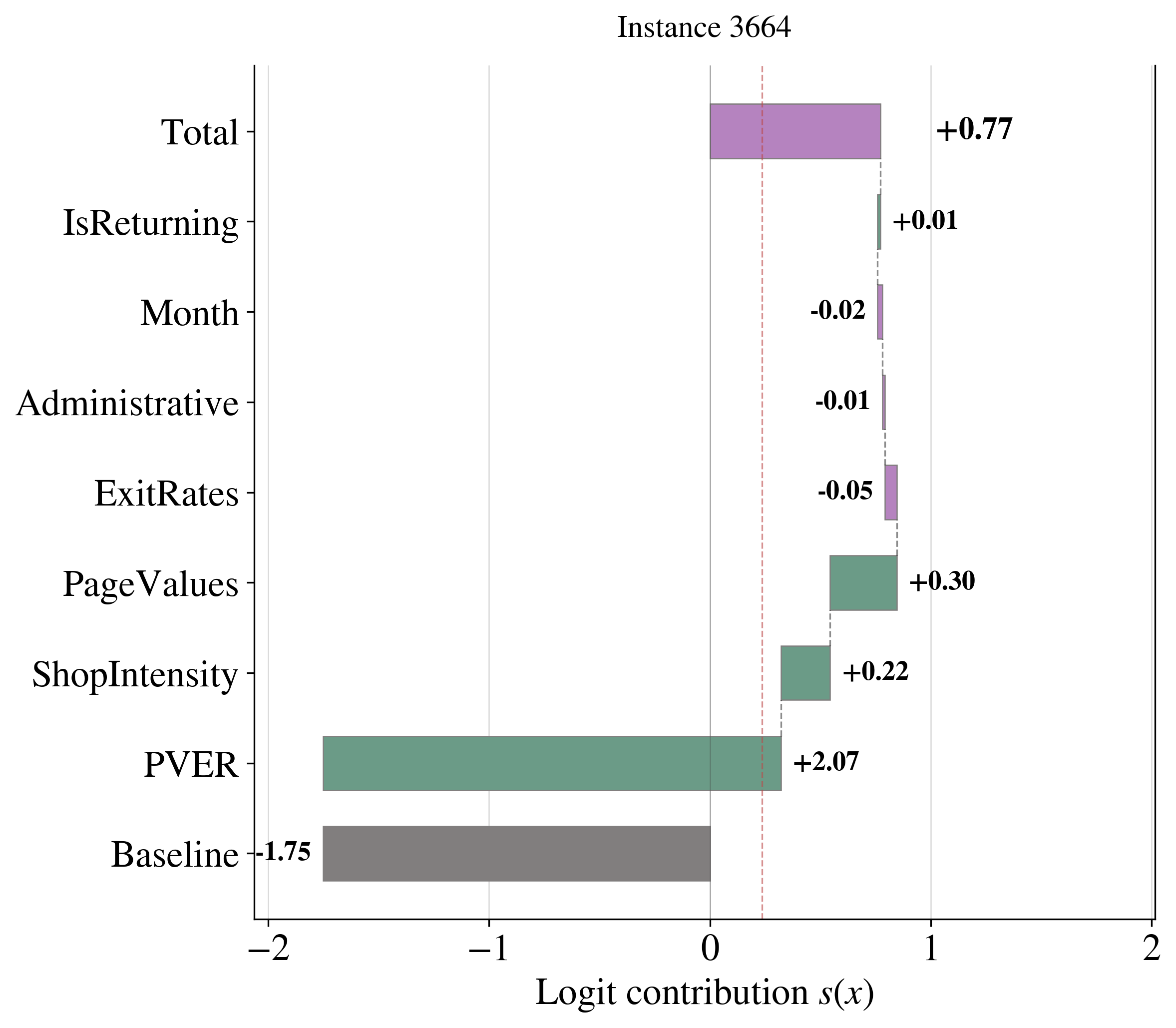}
    \caption{Exact logit decomposition (Instance 3664)}
    \label{fig:osi_shap_3664}
\end{subfigure}
\hspace{0.5cm}
\begin{subfigure}[t]{0.42\textwidth}
    \centering
    \includegraphics[width=\linewidth]{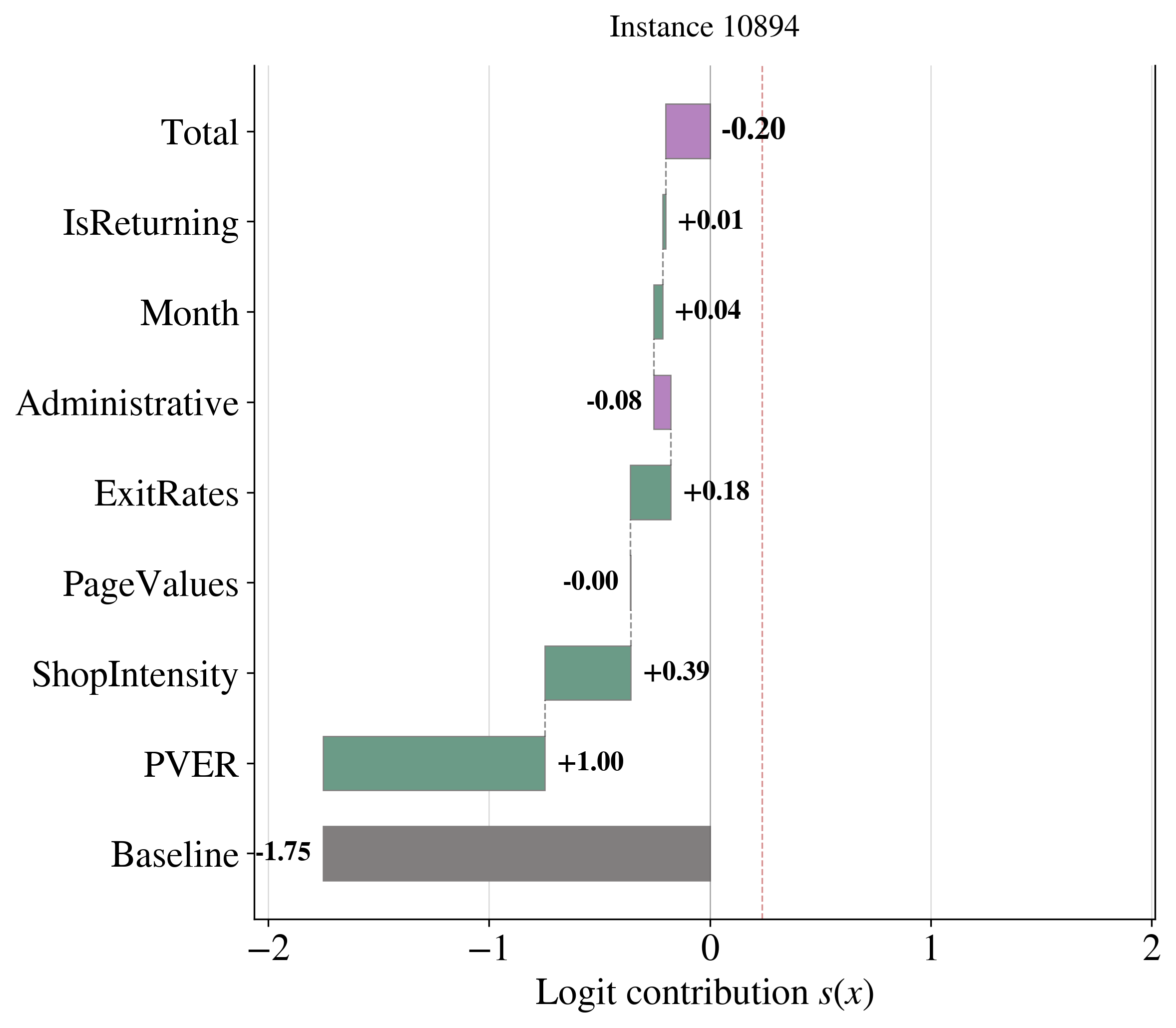}
    \caption{Exact logit decomposition (Instance 10894)}
    \label{fig:osi_shap_10894}
\end{subfigure}

\caption{Visual explanations demonstrating ECSEL's desirable properties on the Online Shopping Intention dataset.}
\label{fig:osi_interpretability}
\end{figure}


\textbf{Global feature attribution.}\hspace{0.9em}A summary of explanation methods and top-3 features is provided in Table~\ref{tab:interpretability_summary} in the main text; Table~\ref{tab:spearman_full} reports all Spearman rank correlations between ECSEL's \hyperlink{prop:g1}{G1} feature importance rankings and those produced by SHAP and LIME for each baseline. ECSEL's rankings are strongest for tree-based methods ($\rho = 0.87$ with RF-SHAP, $\rho = 0.80$ with XGBoost-SHAP, $p < 0.001$), and weaker for the MLP ($\rho = 0.23$), which does not reach statistical significance. The weaker MLP agreement is consistent with the structural mismatch between piecewise-linear activations and ECSEL's multiplicative power-law representation, though all nonlinear methods agree on \textit{PVER} as the top predictor.

\begin{table}[h!]
\centering
\fontsize{9pt}{10pt}\selectfont
\caption{Spearman rank correlation ($\rho$) between ECSEL's \hyperlink{prop:g1}{G1} feature importance and global feature importance from SHAP and LIME over the full OSI test set. $p$-values indicate statistical significance.}
\label{tab:spearman_full}
\renewcommand{\arraystretch}{1.2}
\begin{tabular}{lcccc}
\toprule
\textbf{Model} & \multicolumn{2}{c}{\textbf{SHAP}} & \multicolumn{2}{c}{\textbf{LIME}} \\
\cmidrule(lr){2-3} \cmidrule(lr){4-5}
& $\rho$ & $p$-val & $\rho$ & $p$-val \\
\midrule
Logistic Regression & 0.641 & 0.0136 & 0.641 & 0.0136 \\
Random Forest       & \textbf{0.871} & \textbf{0.0001} & 0.711 & 0.0044 \\
XGBoost             & \textbf{0.796} & \textbf{0.0007} & 0.735 & 0.0028 \\
MLP                 & 0.228 & 0.4338 & 0.181 & 0.5364 \\
\bottomrule
\end{tabular}
\end{table}

\textbf{G3 vs.\ LIME.}\hspace{0.9em}Since property \hyperlink{prop:g3}{G3} and LIME both provide local feature attributions, we compare them directly to validate that ECSEL's closed-form sensitivity analysis produces explanations consistent with the established sampling-based approach. \hyperlink{prop:g3}{G3} gives exact, closed-form local sensitivity through the log-gradient, whereas LIME fits a local surrogate via sampling. \hyperlink{prop:g3}{G3} achieves a mean Spearman $\rho = 0.997$ (median $1.0$, std $0.019$) with LIME across 200 test instances, with 100\% sign agreement on all major features and ${\sim}11{,}000\times$ lower computation cost ($0.3\,\mu$s vs $3.2$\,ms per instance). The strong agreement holds because LIME's surrogate is here applied directly to ECSEL, implicitly mirroring its signomial structure in the local region. When LIME is applied to black-box models whose structure differs from a signomial, agreement with \hyperlink{prop:g3}{G3} naturally weakens, as seen in Table~\ref{tab:spearman_full} for MLPs. Table~\ref{tab:g3_lime_full} shows attributions for two representative instances: no-purchase instance 3523 ($p = 0.524$, just below the decision threshold) and purchase instance 811 ($p = 0.746$). Both methods agree on the sign and rank of all features, with \textit{PVER}, \textit{ShopIntensity}, and \textit{PageValues} driving purchase predictions and \textit{ExitRates} and \textit{Administrative} working against it.

\begin{table}[h!]
\centering
\fontsize{9pt}{10pt}\selectfont
\caption{G3 vs.\ LIME local attributions for two representative OSI test instances. Checkmarks indicate sign agreement between G3 and LIME.}
\label{tab:g3_lime_full}
\renewcommand{\arraystretch}{1.2}
\begin{tabular}{lrrr rrr}
\toprule
& \multicolumn{3}{c}{\textbf{No purchase} ($p = 0.524$)}
& \multicolumn{3}{c}{\textbf{Purchase} ($p = 0.746$)} \\
\cmidrule(lr){2-4} \cmidrule(lr){5-7}
\textbf{Feature} & G3 & LIME & & G3 & LIME & \\
\midrule
\textit{PVER}           & $+0.026$ & $+0.076$ & \checkmark & $+0.223$ & $+0.074$ & \checkmark \\
\textit{ShopIntensity}  & $+0.016$ & $+0.016$ & \checkmark & $+0.135$ & $+0.021$ & \checkmark \\
\textit{PageValues}     & $+0.011$ & $+0.011$ & \checkmark & $+0.096$ & $+0.016$ & \checkmark \\
\textit{ExitRates}      & $-0.010$ & $-0.068$ & \checkmark & $-0.084$ & $-0.069$ & \checkmark \\
\textit{Administrative} & $-0.003$ & $-0.029$ & \checkmark & $-0.029$ & $-0.039$ & \checkmark \\
\textit{Month}          & $+0.002$ & $+0.004$ & \checkmark & $+0.014$ & $+0.006$ & \checkmark \\
\textit{IsReturning}    & $-0.001$ & $-0.001$ & \checkmark & $-0.008$ & $-0.002$ & \checkmark \\
\bottomrule
\end{tabular}
\end{table}

Beyond speed, LIME's sampling procedure introduces seed-dependent variance that \hyperlink{prop:g3}{G3} avoids entirely. \citet{tan2023glime} show theoretically that standard LIME requires exponentially more samples than necessary for convergence, resulting in explanations that vary substantially across seeds. Our experiments corroborate this: on a near-boundary instance ($p \approx 0.50$), LIME rankings for MLP are inconsistent across five random seeds (mean pairwise $\rho = 0.785$). Individual seed correlations against ECSEL's global feature ranking (property \hyperlink{prop:g1}{G1}) range from $0.46$ to $0.66$, two of which do not reach statistical significance ($p > 0.05$), suggesting that LIME's explanation for those seeds is difficult to distinguish from a random ranking. ECSEL avoids this instability by construction, as \hyperlink{prop:g3}{G3} attributions are derived analytically from the model parameters and require no sampling.
\clearpage

\section{Case Study: PaySim Fraud Detection}\label{app:paysim}
The PaySim dataset is a synthetic financial log generated by \citet{lopez2016paysim} to model mobile money transactions, derived from a real African mobile money service. The dataset is available on Kaggle\footnote{\url{https://www.kaggle.com/datasets/ealaxi/paysim1}}. In this case study, we evaluate ECSEL's performance on fraud detection and provide a detailed comparison with Deep Symbolic Classification (DSC) \citep{visbeek2023DSC}, which previously applied symbolic regression to this dataset.

\subsection{Data}
The dataset consists of approximately 6.3 million transactions generated over 744 time steps, which represents a 30-day simulation period. The target variable is \textit{isFraud} and these fraudulent transactions only make up 0.13\% of the dataset (8,213 fraud cases). The dataset's features include:

\begin{itemize}
    \item \textit{step}: timestep (1 hour)
    \item \textit{type}: transaction type (cash\_in, cash\_out, debit, payment, transfer)
    \item \textit{amount}: transaction amount
    \item \textit{nameOrig}, \textit{nameDest}: originator and recipient identifiers
    \item \textit{oldbalanceOrg}, \textit{newbalanceOrig}: originator balances
    \item \textit{oldbalanceDest}, \textit{newbalanceDest}: recipient balances
    \item \textit{isFlaggedFraud}: simulator flag
    \item \textit{isFraud}: fraud label (target)
\end{itemize}

\subsection{Preprocessing and Feature Engineering} \label{app:paysim_preprocessing}
The PaySim dataset requires careful preprocessing to address simulator artifacts, extreme cardinality in categorical features, and redundant or uninformative variables. 

\textbf{Transaction Type Encoding.}\hspace{0.9em}The categorical \textit{type} feature, which takes five values (cash\_in, cash\_out, transfer, debit, and payment), is one-hot encoded into four binary indicator variables: \textit{type\_CASH\_OUT}, \textit{type\_TRANSFER}, \textit{type\_DEBIT}, and \textit{type\_PAYMENT}. Transactions of type cash\_in are omitted as the reference category to avoid multicollinearity. This encoding is particularly important because fraud appears to exclusively occur in cash\_out and transfer transactions. The binary indicators allow the model to learn type-specific fraud patterns while maintaining interpretability through explicit feature coefficients.

\textbf{Excluded Features.}\hspace{0.9em}We deliberately exclude several features that either introduce overfitting risk, contain simulation artifacts, or provide redundant information:

\begin{itemize}
\item \textit{nameOrig} and \textit{nameDest} (account identifiers). These string identifiers exhibit extreme cardinality, with 99.85\% unique values for originating accounts and 42.79\% for destination accounts across the 6.3M transactions. Direct inclusion would lead to severe overfitting, as the model would memorize account-specific patterns rather than learning generalizable fraud indicators. Instead, we extract structural information through the account type prefix (C for customer accounts, M for merchant accounts) and discard the raw identifiers. This preserves the distinction between customer and merchant while preventing memorization of individual account behavior.

\item \textit{step} (raw timestep). The raw timestep variable, representing simulation time in hourly increments, introduces artificial periodicity artifacts inherent to the simulation design rather than reflecting real-world temporal patterns. We replace this with \textit{hour\_of\_day} (derived via modulo 24 operation), which captures diurnal patterns in transaction activity without encoding the simulation's artificial monthly cycles. This transformation maintains temporal information relevant to fraud detection (e.g., late-night transactions) while removing simulation-specific noise.

\item \textit{isFlaggedFraud (simulator's fraud flag).} This binary indicator, intended to represent a simple rule-based fraud detection system within the simulator \citep{lopez2016paysim}, flags only 16 transactions out of 6.3 million. Its detection rule is deterministic: it flags only TRANSFER transactions exceeding \$200,000. Because this feature is a strict function of \textit{type} and \textit{amount}, both already present in the model, it provides no independent predictive signal and would introduce perfect multicollinearity. Moreover, its near-zero variance (0.0003\% positive rate) makes it numerically unstable during optimization.

\item \textit{oldbalanceDest} and \textit{newbalanceDest} (destination balances). We exclude destination account balance features based on both data quality concerns and domain considerations. The PaySim documentation explicitly notes that merchant accounts (M-prefixed destinations) consistently report zero balances regardless of transaction flow, reflecting a simulator design choice rather than realistic account behavior. Including these features would allow the model to trivially distinguish customer-to-customer from customer-to-merchant transactions through balance patterns alone, rather than learning meaningful fraud indicators. We preserve account type information through the derived \textit{externalDest} binary feature, which explicitly indicates whether the destination is a merchant account, thereby capturing this distinction without relying on potentially misleading balance values.
\end{itemize}

\textbf{Feature Engineering.}\hspace{0.9em}We engineer three additional features to capture transaction patterns indicative of fraudulent behavior:

\begin{itemize}
    \item \textit{hour\_of\_day.} The hour of day extracted from the simulation timestep, defined as
    $$\textit{hour\_of\_day} = \textit{step} \bmod 24$$
    This feature captures time-of-day patterns in transaction activity (e.g., late-night transactions) while removing the simulation's artificial monthly cycles present in the raw \textit{step} variable. Values range from 0 to 23, representing the hour within a 24-hour period.
    \item \textit{pct\_balance\_taken.} The proportion of the originator's account balance consumed by the transaction, defined as 
    $$\textit{pct\_balance\_taken} = \begin{cases}
    \min\left(\frac{\textit{amount}}{\textit{oldbalanceOrg}}, 1\right) & \text{if } \textit{oldbalanceOrg} > 0 \\
    0 & \text{otherwise}
    \end{cases}$$
    Values near 1 indicate that the transaction drains the account, a pattern commonly associated with account takeover fraud. The feature is clipped to $[0, 1]$ to handle cases where the transaction amount exceeds the recorded balance.
    
    \item \textit{externalOrig.} A binary indicator identifying potential external or pass-through accounts, defined as
    $$\textit{externalOrig} = \mathbb{1}[\textit{oldbalanceOrg} = 0 \text{ and } \textit{newbalanceOrig} = 0]$$
    This feature flags transactions where both pre- and post-transaction balances are zero, capturing accounts at counterparty banks whose balance information is unavailable in the PaySim simulation. Following \citet{visbeek2023DSC}, we construct this indicator alongside \textit{externalDest} (defined analogously for destination accounts) to identify transactions involving external financial institutions.
\end{itemize}
The final feature set consists of 13 features: 6 original continuous features, 5 one-hot encoded transaction types, and 2 engineered features. Table~\ref{tab:paysim_features} provides descriptions of all features used in the model. 

\textbf{Feature Scaling and Transformation.}\hspace{0.9em}To ensure numerical stability in the learned signomial expressions while preserving the informative spread of monetary values, we apply a multi-stage scaling pipeline. First, we apply log transformation to monetary features with wide value ranges using $X_{\log} = \ln(X + \epsilon)$ where $\epsilon = 10^{-6}$ handles zero values. Specifically, we transform \textit{amount}, \textit{oldbalanceOrg}, and \textit{newbalanceOrig}. 

After train-test splitting (85/15 stratified by fraud rate), we apply StandardScaler to all continuous features (\textit{amount}, \textit{oldbalanceOrg}, \textit{newbalanceOrig}, \textit{hour\_of\_day}). Consistent with \citet{visbeek2023DSC}, we use StandardScaler rather than MinMax scaling because the extreme range of transaction amounts (ranging across several orders of magnitude even after log transformation) would be excessively compressed into a narrow interval like $[0,1]$, losing meaningful information between small and large transactions. StandardScaler preserves the relative spread of values while centering the distribution, allowing the model to learn appropriate scaling factors through the signomial coefficients. The scaler is fit only on the training set to prevent data leakage. Binary and categorical features are preserved in their original $\{0, 1\}$ range without transformation.

The complete preprocessing pipeline follows this order: (1) feature engineering on raw data, (2) feature selection retaining 11 features (7 original, 4 engineered), (3) log transformation of monetary variables, (4) stratified train-test split, (5) standardization of continuous features, (6) preservation of binary features.\\

\begin{table}[h!]
\centering
\caption{Features used in the PaySim fraud detection experiments. Continuous features are scaled using StandardScaler.}
\label{tab:paysim_features}
\renewcommand{\arraystretch}{1.2}
\resizebox{\textwidth}{!}{
\begin{tabular}{|p{4.2cm}|p{8.2cm}|c|c|c|}
\hline
\textbf{Feature name} & \textbf{Feature description} & \textbf{Type} & \textbf{Scaled} & \textbf{Notes} \\
\hline

\multicolumn{5}{|c|}{\textbf{Original numerical features}} \\
\hline
\textit{step} & Discrete time step of the transaction (1 hour per step) & Continuous & Yes & Transaction timestamp \\
\textit{amount} & Transaction amount & Continuous & Yes & Log-transformed \\
\textit{oldbalanceOrg} & Origin account balance before transaction & Continuous & Yes & Log-transformed \\
\textit{newbalanceOrig} & Origin account balance after transaction & Continuous & Yes & Log-transformed \\
\textit{oldbalanceDest} & Destination account balance before transaction & Continuous & Yes & Log-transformed \\
\textit{newbalanceDest} & Destination account balance after transaction & Continuous & Yes & Log-transformed \\

\hline
\multicolumn{5}{|c|}{\textbf{Transaction type indicators (one-hot encoded)}} \\
\hline
\textit{CashIn (CI)} & Cash-in transaction indicator & Binary & No & One-hot encoded \\
\textit{CashOut (CO)} & Cash-out transaction indicator & Binary & No & One-hot encoded \\
\textit{Debit (D)} & Debit transaction indicator & Binary & No & One-hot encoded \\
\textit{Payment (P)} & Payment transaction indicator & Binary & No & One-hot encoded \\
\textit{Transfer (T)} & Transfer transaction indicator & Binary & No & One-hot encoded \\

\hline
\multicolumn{5}{|c|}{\textbf{Engineered features}} \\
\hline
\textit{pct\_balance\_taken (PctBT)} & Fraction of the origin balance transferred in the transaction & Continuous & Yes & Clipped to $[0,1]$ \\
\textit{externalOrig} & Indicator that the origin account is external (both balances equal zero) & Binary & No & Transaction-level rule \\

\hline
\end{tabular}
}
\end{table}

\subsection{Experimental Setup}
\textbf{Training Details.}\hspace{0.9em} We train ECSEL using the Adam optimizer with early stopping based on validation loss. The dataset is split 85/15 into training and test sets using stratified sampling to preserve the fraud rate (0.13\% positive class). Hyperparameter optimization is performed using Optuna's TPE sampler with 5-fold stratified cross-validation on the training set. To account for extreme class imbalance, we tune a class weight multiplier that upweights the minority class during training. All experiments are conducted with a fixed random seed (42) for reproducibility.

\textbf{Hyperparameters Search Space.}\hspace{0.9em}For hyperparameter selection, the 6.3 million-sample dataset was subsampled to 1 million instances via stratified random sampling, maintaining the original fraud prevalence of 0.013\%. Given the dataset's scale, we conduct hyperparameter optimization with 5-fold cross-validation and 10 Optuna trials (compared to 30 trials on all other experiments) to balance computational feasibility with adequate search space exploration.

\begin{table}[htbp]
\centering
\caption{Hyperparameter search space and optimal values from Optuna optimization}
\label{tab:hyperparameter_grid_paysim}
\small
\begin{tabular}{llc}
\toprule
\textbf{Hyperparameter} & \textbf{Search Space} & \textbf{Optimal Value} \\
\midrule
$K$ & $\{1,2\}$ & 1 \\
$\ell_1$ & $[10^{-4}, 10^{-2}]$ (log) & $2.05\times 10^{-4}$ \\
Class Weight Multiplier & $[0.3, 1.0]$, step $0.1$ & 0.40 \\
Learning Rate & $[10^{-4}, 10^{-2}]$ (log) & $2.16 \times 10^{-4}$ \\
Batch Size & $\{512, 1024\}$ & 1024 \\
Num Epochs & $[150, 250]$, step $25$ & 250 \\
Patience & $[25]$ & 25 \\
Num Restarts & $\{1\}$ & 1 \\
\midrule
\multicolumn{3}{l}{\textbf{Best CV F1 Score: 74.48}} \\
\bottomrule
\end{tabular}
\end{table}

\subsection{Results}\label{app:paysim_dsc_comparison}
As given in Section~\ref{sec:paysim}, the signomial learned by ECSEL is given by:
{\small
\begin{align} \label{eq:paysim_appendix}
z &= -0.07 \cdot
      \frac{\textit{A}^{0.02}\, \cdot \textit{P}^{0.03}}
           {\textit{exO}^{0.03}\,
            \cdot\textit{exD}^{0.16}\,
            \cdot\textit{CO}^{0.14}\,
            \cdot\text{T}^{0.06}\,
            \textit{D}^{0.03}}
      + 0.09 \cdot
      \frac{\textit{OBO}^{1.42}}
           {\textit{NBO}^{0.04}\,
            \cdot\textit{exD}^{0.07}\,
            \cdot\textit{CO}^{0.06}\,
            \cdot\textit{D}^{0.06}\,
            \textit{P}^{0.06}}.
\end{align}
}
where \textit{A} is the transaction \textit{amount}, \textit{exO} and exD indicate whether the origin or destination accounts are external (\textit{externalOrig}, \textit{externalDest}), and \textit{OBO} and \textit{NBO} are the origin account balances before and after transaction (\textit{oldbalanceOrig} and \textit{newbalanceOrg}). The transaction types \textit{Transfer}, \textit{Debit}, \textit{Payment} and \textit{CashOut} are denoted by \textit{T, D, P} and \textit{CO} respectively. All features can be found in Table~\ref{tab:paysim_features} (Appendix~\ref{app:paysim}).

On the full test set, ECSEL achieves an F1 score of 79.08\%, with minority class recall of 68.10\% and precision of 94.27\%. These metrics are computed at the optimal classification threshold of $p = 0.904$, substantially higher than the default 0.5 threshold. This elevated threshold reflects the extreme class imbalance (0.13\% fraud rate): the model must be highly confident before classifying a transaction as fraudulent to maintain an acceptable precision-recall balance. ECSEL took approximately 16 minutes to train on the full 6.3 million transaction dataset. All optimal parameters can be found in Table~\ref{tab:hyperparameter_grid_paysim}.

\begin{table}[h!]
\centering
\caption{Test performance comparison of classification baselines on the PaySim fraud detection dataset.}
\label{tab:model_comparison_paysim}
\begin{tabular}{lcccccc}
\toprule
Method & F1 (\%) & Recall (\%) & Precision (\%) & ROC-AUC & Threshold & Time (s) \\
\midrule
Logistic Regression & 55.44 & 51.06 & 60.66 & 0.9593 & 1.000 & 1642.8 \\
Random Forest & 88.50 & 84.01 & 93.50 & 0.9992 & 0.417 & 204.0 \\
XGBoost & \textbf{89.90} & \textbf{87.82} & 92.09 & \textbf{0.9998} & 0.989 & \textbf{8.5} \\
ECSEL & 79.08 & 68.10 & \textbf{94.27} & 0.9914 & 0.904 & 960.0 \\
\bottomrule
\end{tabular}
\end{table}

Table~\ref{tab:model_comparison_paysim} compares ECSEL against standard baselines. XGBoost achieves the highest performance (89.90\% F1, 87.82\% recall), followed closely by Random Forest (88.50\% F1, 84.01\% recall). However, ECSEL achieves the highest precision (94.27\%), indicating that when it flags a transaction as fraudulent, it is correct 94\% of the time—a critical property for minimizing customer friction from false fraud alerts.

The optimal thresholds reveal different model behaviors: Random Forest operates at a relatively low threshold (0.417), while XGBoost and ECSEL require substantially higher confidence thresholds (0.989 and 0.904 respectively) to maintain precision-recall balance on this severely imbalanced dataset. Training times vary considerably: XGBoost is fastest at 8.5 seconds due to its highly optimized implementation, while ECSEL requires 16 minutes (960s), slower than tree-based methods but substantially faster than Logistic Regression (27 minutes). Moreover, ECSEL's inference cost is dominated by a single closed-form expression evaluation, making prediction orders of magnitude faster than ensemble methods that require traversing hundreds of decision trees.

Though ensemble methods achieve higher F1 scores, ECSEL provides full transparency through its explicit mathematical formula, enabling compliance verification and bias auditing while maintaining competitive performance.

\textbf{Comparison to Deep Symbolic Classification (DSC).}\hspace{0.9em}\citet{visbeek2023DSC} apply Deep Symbolic Classification to the same PaySim dataset, reporting F1 = 78.0\%, recall = 67.0\%, and precision = 95.0\%. Their best-performing symbolic expression is given by
\begin{equation}\label{eq:visbeek_model}
    f = \sqrt{\textit{externalDest} + \text{CO}} \cdot (\textit{Amount} - \textit{MaxDest}7 + \textit{T})
\end{equation}
with decision rule $\textit{isFraud} = 1$ if $\sigma(f) > 0.7$, where $\sigma$ is the sigmoid function. Here, \textit{externalDest} is a binary indicator for external recipient accounts, \textit{CO} indicates cash\_out transactions, \textit{Amount} is the transaction amount, \textit{MaxDest7} denotes the maximum transaction amount among the last seven transactions for the recipient account, and \textit{T} indicates TRANSFER transactions.

Our ECSEL implementation achieves slightly better performance with F1 = 79.08\%, recall = 68.10\%, and precision = 94.27\%. However, important methodological differences complicate direct comparison.

DSC employs temporal aggregation features computed over the entire 30-day simulation period before train-test splitting. For example, \textit{MaxDest7} represents the maximum transaction amount among the last seven transactions for a given recipient account, calculated using statistics from all transactions in the dataset. While the data was subsequently split 75/10/15 into training, validation, and test sets, these aggregation features introduce two forms of data leakage: (1) temporal leakage, where features for a transaction at time $t$ incorporate information from future transactions at $t + \Delta t$, and (2) cross-set contamination, where training set features use statistics derived from transactions that ultimately appear in the test set. Although the authors acknowledge this limitation and mitigate it by adding Gaussian noise to aggregation features, the fundamental dependence on future transaction data remains.

In contrast, our feature set deliberately excludes such temporal aggregations to reflect production-realistic conditions where only historical transaction information is available at decision time. This methodological choice prioritizes deployment validity: in real-world fraud detection systems, decisions must be made in real-time using only past data, without access to statistics computed over future transactions.

Additionally, while exact training times are not reported, the authors mention DSC's computational requirement as a significant limitation that necessitated a single fixed train-validation-test split rather than cross-validation. Our gradient-based approach enables more efficient training (16 minutes on the full dataset) and supports 5-fold cross-validation for robust hyperparameter selection. Our slightly improved F1 score (79.08\% vs. 78.0\%) under production-realistic constraints suggests that ECSEL achieves competitive performance while maintaining stricter methodological validity and computational efficiency.

\end{document}